\title{Multi-armed Bandits with Missing Outcome}
\renewcommand{\@fnsymbol}[1]{%
   \ifcase#1\or $\dagger$\or *\fi}
\author[1]{Ilia Mahrooghi\thanks{This work was completed during the authors' tenure as research interns at EPFL.}}
\author[1]{Mahshad Moradi$^\dagger$}
\author[2]{Sina Akbari}
\author[2]{Negar Kiyavash}
\affil[2]{EPFL, Switzerland}
\affil[1]{\small\texttt{\{ilia.mahrooghi2003,mahshadmoradix\}@gmail.com}}
\affil[2]{\texttt{\{sina.akbari,negar.kiyavash\}@epfl.ch}}
\date{}
\begin{document}
\maketitle

\begin{abstract}
    While significant progress has been made in designing algorithms that minimize regret in online decision-making, real-world scenarios often introduce additional complexities, perhaps the most challenging of which is missing outcomes.
    Overlooking this aspect or simply assuming random missingness invariably leads to biased estimates of the rewards and may result in linear regret.
    Despite the practical relevance of this challenge, no rigorous methodology currently exists for systematically handling missingness, especially when the missingness mechanism is not random.
    In this paper, we address this gap in the context of multi-armed bandits (MAB) with missing outcomes by analyzing the impact of different missingness mechanisms on achievable regret bounds.
    We introduce algorithms that account for missingness under both missing at random (MAR) and missing not at random (MNAR) models.
    Through both analytical and simulation studies, we demonstrate the drastic improvements in decision-making by accounting for missingness in these settings.
\end{abstract}
\section{Introduction}
Multi-armed bandit (MAB) algorithms have emerged as indispensable tools for decision-making under uncertainty, balancing the trade-off between exploring different options and exploiting the best known action.
These algorithms have achieved success in various domains ranging from personalized online advertisement and recommender systems \cite{li2010contextual, xu2020contextual, ban2024neural} to clinical trials \cite{villar2015multi, aziz2021multi, varatharajah2022contextual} and adaptive routing in communication systems \cite{maghsudi2016multi,li2020multi}.
For instance, in online advertising, advertisers need to continuously select which ad to show to a user to maximize click-through rates.
Similarly, in clinical trials, researchers must decide which treatment to administer to patients to optimize recovery rates.
MAB algorithms guide decision-makers in such scenarios to learn actions that minimize regret.
% over time.

Significant progress has been made in developing MAB algorithms that minimize regret in various settings \cite{lai1985asymptotically,Auer2002,bubeck2012regret,lattimore2020bandit,slivkins2019}.
However, the real world often introduces challenges that deviate from the assumptions of the classical MAB framework or its current extensions.
One of the most critical challenges is that of missing outcomes -- situations where the results of certain actions are not always observed.
This challenge arises more often than not in practice and can fundamentally undermine the decision-making process if left unaddressed. To illustrate this, consider an example of a large-scale clinical trial for a new cancer treatment.
Patients are randomly assigned to different treatment arms, and their health outcomes are monitored over time. In practice, not all patients will complete the trial. Some may drop out early due to side effects, while others may stop reporting outcomes for personal reasons, and some could pass away during the trial due to reasons not related to the treatment (competing events).
Crucially, the missingness of the outcome may not be random.
Patients experiencing severe side effects or poor health are more likely to drop out, meaning that the missingness mechanism is correlated with the unobserved outcome itself.
This introduces systematic bias into the estimation of the rewards, and if not accounted for, would lead to poor decision-making.

The issue of missingness is not confined to healthcare.
In a recommendation system that suggests articles to users on an online platform,
if users who find the content irrelevant are less likely to provide feedback (e.g., they leave the site without interacting), the system could overestimate the value of the recommended articles, assuming that the missing feedback is independent of user satisfaction.
Here too, missingness is correlated with the unobserved outcome, leading to biased reward estimates and sub-optimal recommendations.

The problem of missing data is a fundamental challenge in causal inference.
This issue has been extensively studied over the past decades, with seminal works such as  \cite{rubin1976inference, little2019statistical, bang2005doubly} laying the foundation for dealing with biased estimations in the presence of missing data. 
These methods, along with more recent developments in graphical models for handling missing data \cite{mohan2021graphical, nabi2020full}, have become standard approaches in causal inference.
Missing data has also been extensively explored in specific contexts such as instrumental variables, \cite{tchetgen2017general, sun2018semiparametric, kennedy2019handling}
and mediation analysis \cite{zhang2013methods, zhang2015mediation, kidd2023mediation}, among others.
By contrast, the challenge of missing outcomes has received relatively little attention in multi-armed bandit problems, although some progress has been made in related areas.
% The classic MAB problem 
% \cite{lai1985asymptotically,Auer2002} 
% and its variants 
% such as those addressing delayed feedback \cite{joulani2013online, vernade2017stochastic, lancewicki2021stochastic}, 
% often assume that the outcomes of chosen actions are fully observable.
% , or that delays in feedback are independent of the reward.
% Yet,  as discussed, in many real-world applications, outcomes are not only delayed but missing entirely, and the missingness itself may provide crucial information about the unobserved reward.
% Naive approaches that discard missing outcomes or assume missingness to be completely random will inevitably fail to account for this dependence, leading to linear regret, where the system  persistently makes poor decisions due to biased estimates.
% \subsection{Related work}\label{subsec:related}
For instance, the problem of delayed feedback in bandits bears some similarity to our setting, as both involve incomplete information at decision time.
Several works have addressed stochastic bandits with unrestricted delays \cite{joulani2013online, vernade2017stochastic}, and delays dependent on stochastic rewards \cite{pike2018bandits, lancewicki2021stochastic}.
In contextual bandits, \cite{bouneffouf2017context} studied linear contextual bandits with missing (restricted) contexts. 
While this work addresses missing data in bandits, it focuses on missing contexts rather than outcomes and assumes a linear reward model.
Others have explored bandit problems with variable costs or restricted observations.
For example, \cite{ding2013multi, seldin2014prediction} studied MAB problems with variable costs, where the outcome is observable only after paying the associated cost.
% \cite{seldin2014prediction} explored MABs with paid observations, relating to missing rewards as limited information affects decision-making. 

% \cite{lykouris2018stochastic} introduce a robust stochastic bandit model where rewards are subject to adversarial corruptions, providing an algorithm that maintains near-optimal regret in stochastic settings and degrades gracefully as corruption increases. This approach relates to handling missing or unreliable feedback in bandit problems, as both limit the information available to the learner.

The closest to our paper are the works
\cite{chen2022some} and \cite{bouneffouf2020contextual} which consider the problem of MAB with missing outcomes. 
\cite{chen2022some} settles for some empirical considerations and suggestions, without formally studying the problem or providing tailored algorithms.
\cite{bouneffouf2020contextual} utilizes ideas from unsupervised learning to impute the missing rewards in a contextual bandit setting.
Both of these work under the assumption that the missingness mechanism is random, possibly after conditioning on the context.
In this paper, we do a thorough study from a formal perspective, characterizing the best achievable regret bounds under multiple scenarios with missing outcomes at random (MAR) as well as not at random (MNAR) models.
We present novel regret lower bounds and provide algorithms that are guaranteed to achieve optimal regret.
Another closely related line of work in the MAB research involves leveraging auxiliary information to enhance decision-making. 
Recent studies have explored the use of correlated auxiliary feedback to improve reward estimates \cite{verma2024exploiting}, introduced Thompson Sampling algorithms for contextual bandits with auxiliary safety constraints \cite{daulton2019thompson}, and investigated adaptive sequential experiments with dynamic auxiliary information  to improve the robustness of decision-making \cite{gur2022adaptive}. 
In our work, we will take advantage of auxiliary information to overcome the challenge of missing outcomes. 
% This approach extends the MAB framework to better handle complex scenarios where auxiliary signals are integral to optimizing rewards.

Addressing the problem of missing outcomes is both practically relevant and theoretically challenging.
In applications such as healthcare, education, and e-commerce, accounting for missing data could lead to better treatment policies, more personalized learning experiences, and more effective product recommendations, potentially affecting millions of individuals.
In this paper, we undertake the first formal study of multi-armed bandits with missing outcomes and provide tailored algorithms that explicitly handle different types of missingness.
Our main contributions are two-fold.
First, we provide an analysis of the impact of missing outcomes on achievable regret (the loss of optimality).
Second, we introduce provably good upper confidence bound (UCB) algorithms that are tailored to handle both missing at random and missing not at random mechanisms.
Our algorithms are designed to adjust reward estimates based on the observed data and the missingness mechanism, ensuring unbiased estimation.
% Contributions and the consequences of our results go here..
Finally, we extend our analysis to settings where not only outcomes but also mediators (e.g., users providing feedback) are prone to missingness, to further broaden the applicability of our approach.

The remainder of this paper is structured as follows.
% We begin by reviewing the most relevant literature in \Cref{subsec:related}.
In \Cref{sec:prb}, we review the relevant background and formalize the problem of multi-armed bandits with missing outcomes.
In \Cref{sec:missingoutcome} we present our algorithms in the settings of MCAR, MAR, and MNAR, respectively.
Additionally, we provide the corresponding achievable regret lower bounds.
The technical proofs are postponed to \Cref{apx:proofs}.
% due to space limitations.
In \Cref{sec:missingmediator}, we extend our approach and present algorithms for the case when the mediator is also prone to missingness.
A discussion of the limitations of our work and our concluding remarks appear in \Cref{sec:conc}.

\section{Formalization and Problem Setup}\label{sec:prb}
We begin by reviewing the classic multi-armed bandit (MAB) setup and then extend it to incorporate missing outcomes.
The MAB problem involves an agent (decision-maker) who interacts with an environment over a sequence of $T$ time steps.
At each time step $t\in\{1,\dots,T\}$, the agent pulls an arm $a_t$ from a set of $n$ available actions indexed by $\mathcal{A}=\{1,\dots,n\}$.
Upon pulling this arm, the agent receives a stochastic reward $Y_t\in\mathcal{Y}$ drawn from a fixed (but unknown) probability distribution associated with arm $a_t$.
The goal of the agent is to minimize the \emph{cumulative regret} over the time horizon $T$, which is defined as the cumulative difference between the rewards of the optimal arm and the chosen arms.
Specifically, let $\mu_a \coloneqq \ex{Y \mid A=a}$ for every $a\in\mathcal{A}$.
The optimal arm, denoted by $a^*$, is the arm that maximizes the expected reward, i.e., 
\(a^*\coloneqq \argmax_{a\in\mathcal{A}}\mu_a.\)
The regret at time $t$ is  defined as
\(R_t \coloneqq \mu_{a^*}-\ex{Y\mid A=a_t}\),
and the cumulative regret over $T$ rounds, denoted by $R_T$, is the sum of the latter instantaneous regrets over the horizon T:
\begin{equation}
\begin{split}
    R_T &\coloneqq \sum_{t=1}^T(\mu_{a^*}-\ex{Y\mid A=a_t}) = T\mu_{a^*} - \sum_{t=1}^T \ex{Y\mid A=a_t}.
\end{split}
\end{equation}
In the classical setting, it is assumed that after pulling an arm $a_t$, the agent always observes the true reward $Y_t$ without any missingness.
%However, in real-world applications, rewards may be missing, which complicates the learning process.

We extend the classic MAB model to accommodate missingness.
We assume that pulling each arm $a_t\in\mathcal{A}$, draws a stochastic tuple 
$\big(Y_t, \oo_t, M_t, \mm_t \big)$ 
 from a fixed but unknown probability distribution associated with arm $a_t$. In this tuple,
$Y_t\in\mathcal{Y}$ represents the true reward (as before), whereas $\oo_t\in\{0,1\}$ is an indicator denoting whether this reward is observed. $M_t\in\mathcal{M}$ is a possible mediator or an auxiliary variable\footnote{The introduction of this auxiliary variable is without loss of generality, as it can be simply set to $M\equiv 0$, i.e., a degenerate variable that carries no information.}, with $\mm_t\in\{0,1\}$ indicating whether this auxiliary variable is observed.
For example in online recommendations, auxiliary information could include metrics such as the time a user spends on a webpage before navigating away, or other data points gathered from browser cookies, such as past browsing behavior, device type, or location.
The agent has access to the `observed' tuple $\big(Y^o_t, \oo_t, M^o_t, \mm_t \big)$,
where the observed values $Y^o_t$ and $M^o_t$ are defined as follows:
\begin{equation}\label{eq:consistency}
\begin{split}
    Y^o_t=\begin{cases}
        Y_t; &\text{ if } \oo_t =1,\\
        ?; &\text{ o.w.}
    \end{cases},
    M^o_t=\begin{cases}
        M_t; &\text{ if } \mm_t =1,\\
        ?; &\text{ o.w.}
    \end{cases},
\end{split}
\end{equation}
where $?$ denotes a missing value.
We define $\mu_{a}$ as the expected value of $Y_t$ given $A_t=a$ as before, with the crucial difference that samples of $Y_t$ are missing when $\oo_t=0$.

Clearly, without imposing further structure, it is not possible to construct unbiased estimators for the expected rewards of each arm. 
In fact, these expectations are not `identifiable,' meaning that they are not uniquely determinable functionals of the probability measure over observable variables.
% \begin{remark}Clearly, without imposing further structure, it is not possible to construct unbiased estimators of the expected rewards.
% In particular, even with infinitely many samples, i.e., with known densities $P(Y^o_t\mid A_t=a)$ for every $a\in\mathcal{A}$, the following bounds are tight for the expected rewards $\ex{Y_t\mid A_t=a}$ in general:
%     \[
%     \begin{split}
%         \ex{Y^o_t\mid A_t=a, \oo_t=1}& P(\oo_t=1) \\&\leq \ex{Y_t\mid A_t=a} \leq\\ \ex{Y^o_t\mid A_t=a, \oo_t=1}&P(\oo_t=1) + \sup_{y\in\mathcal{Y}}yP(\oo_t=0),
%     \end{split}
%     \]
% \end{remark}
In what follows, we begin with the case where the mediator is fully observed ($\mm_t=1$ with probability $1$).
We first consider the case where the missingness mechanism of the outcome is independent of everything else.
% to convey the key features and ideas of our extension of MAB framework. 
Subsequently, we analyze the more realistic cases where this missingness is correlated with the missing outcome $Y_t$.
Later in \Cref{sec:missingmediator} we extend our findings further to the case where even the mediator is prone to missingness.

\section{MAB with Missing Outcome}\label{sec:missingoutcome}
Throughout, we assume that the outcomes are not `always missing.'
\begin{assumption}[Positivity]\label{as:pos}
    % There is a positive constant $\gamma$ such that f
    For every action $a\in\mathcal{A}$,
    \(\mathbb{P}\big(\oo_t=1\mid M_t, A_t=a\big)> 0\).
    Moreover, $\mathbb{P}(M_t\mid A_t)$ is positive everywhere\footnote{With sufficient caution, the second part of this assumption could be omitted. 
    However, we include it here for the sake of simplicity in the presentation.}.
\end{assumption}
% \begin{remark}
%     An immediate corollary of Assumption \ref{as:pos} is that   \(\mathbb{P}(\oo_t=1\mid A_t=a)> 0\)  for every action $a$.
% \end{remark}
Assumption \ref{as:pos} is reasonable as otherwise there exists an arm for which the agent observes no reward samples.
For the rest of this section, we assume that the auxiliary variable $M_t$ is always observed.
% , hence $\mm_t$ is degenerate.

% \subsection{Main Ideas Behind Our Algorithms and Results}
% In this section, we outline the core ideas behind our upper bound algorithms and provide an overview of their proofs to offer clarity and intuition. The main idea of our algorithms is derived from the classic UCB (Upper Confidence Bound) algorithm, which aims to establish an upper bound on the estimation error of \( \mathbb{E}[Y \mid A = a] \) for each arm \( a \) at each time step \( t \). We present the following lemma to explicitly demonstrate the main idea behind our approach:

% % \begin{lemma}
    
% % \end{lemma}

\subsection{Missing Completely At Random ({{\small MCAR}})}
We begin with the case where the outcome missingness mechanism is independent of the other variables (including the outcome itself).
\begin{figure}[t]
    \begin{center}
        
    \begin{subfigure}[b]{0.23\textwidth}
        \centering
        \begin{tikzpicture}
            \tikzset{
                solid/.style={circle, fill, inner sep=1.5pt},
                every path/.style={thick, >={Latex[round]}}
                }
            \node[solid] (A) at (-1.2,0) {};
            \node[solid] (M) at (0,0) {};
            \node[solid] (Y) at (1.2,0) {};
            \node[solid] (RY) at (0.1,1) {};
            
            \draw[->, bend right=30] (A) to (Y);
            \draw[->] (A) to (M);
            \draw[->] (M) to (Y);

            \node[left=1pt of A] {A};
            \node[above=1pt of M] {M};
            \node[right=1pt of Y] {Y};
            \node[above=1pt of RY] {$\oo$};
        \end{tikzpicture}
        \caption{MCAR}
        \label{fig:mcar}
    \end{subfigure}
    \begin{subfigure}[b]{0.23\textwidth}
        \centering
        \begin{tikzpicture}
            \tikzset{
                solid/.style={circle, fill, inner sep=1.5pt},
                every path/.style={thick, >={Latex[round]}}
                }
            \node[solid] (A) at (-1.2,0) {};
            \node[solid] (M) at (0,0) {};
            \node[solid] (Y) at (1.2,0) {};
            \node[solid] (RY) at (0.1,1) {};
            
            \draw[->, bend right=30] (A) to (Y);
            \draw[->] (A) to (M);
            \draw[->] (M) to (Y);
            \draw[->] (M) to (RY);
            \draw[->] (A) to (RY);

            \node[left=1pt of A] {A};
            \node[above right=1pt of M] {M};
            \node[right=1pt of Y] {Y};
            \node[above=1pt of RY] {$\oo$};
        \end{tikzpicture}
        \caption{MAR (i)}
        \label{fig:mar1}
    \end{subfigure}
    \begin{subfigure}[b]{0.23\textwidth}
        \centering
        \begin{tikzpicture}
            \tikzset{
                solid/.style={circle, fill, inner sep=1.5pt},
                every path/.style={thick, >={Latex[round]}}
                }
            \node[solid] (A) at (-1.2,0) {};
            \node[solid] (M) at (1.2,0) {};
            \node[solid] (Y) at (0,0) {};
            \node[solid] (RY) at (0.1,1) {};
            
            \draw[->, bend right=30] (A) to (M);
            \draw[->] (A) to (Y);
            \draw[->] (Y) to (M);
            \draw[->] (M) to (RY);
            \draw[->] (A) to (RY);

            \node[left=1pt of A] {A};
            \node[ right=1pt of M] {M};
            \node[above=1pt of Y] {Y};
            \node[above=1pt of RY] {$\oo$};
        \end{tikzpicture}
        \caption{MAR (ii)}
        \label{fig:mar2}
    \end{subfigure}
    \begin{subfigure}[b]{0.23\textwidth}
        \centering
        \begin{tikzpicture}
            \tikzset{
                solid/.style={circle, fill, inner sep=1.5pt},
                every path/.style={thick, >={Latex[round]}}
                }
            \node[solid] (A) at (-1.2,0) {};
            \node[solid] (M) at (0,0) {};
            \node[solid] (Y) at (1.2,0) {};
            \node[solid] (RY) at (0.1,1) {};
            
            \draw[->, bend right=30] (A) to (Y);
            \draw[->] (A) to (M);
            \draw[->] (M) to (Y);
            % \draw[->] (M) to (RY);
            \draw[->] (A) to (RY);
            \draw[->] (Y) to (RY);

            \node[left=1pt of A] {A};
            \node[above right=1pt of M] {M};
            \node[right=1pt of Y] {Y};
            \node[above=1pt of RY] {$\oo$};
        \end{tikzpicture}
        \caption{MNAR}
        \label{fig:mnar}
    \end{subfigure}
    
    \caption{Graphical representations of the missing data mechanisms considered in this paper. }
    \label{fig:missing mechansim}
        \end{center}

\end{figure}
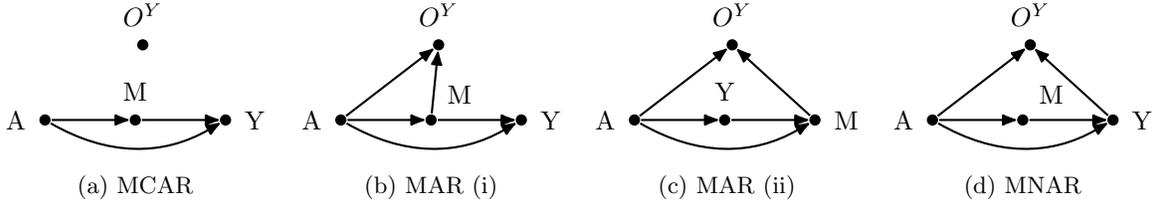

\begin{assumption}[MCAR]
    The outcome is missing completely at random.
    That is, \(\oo_t\indep(A_t, Y_t, M_t )\) for $t\in\{1,\dots,T\}$.
\end{assumption}
This assumption holds, for instance, when data gets erased by say an independent mechanism such as a power outage.
The graph of Figure~\ref{fig:mcar} represents this missingness mechanism, whereby the missingness indicator $\oo$ is an isolated node.
As there is no information conveyed by the missingness indicator, the missing chunk of the data can be discarded without any need for extra care.
As such, the classic upper confidence bound (UCB) algorithms are expected to achieve (near-) optimal regret.
We formalize these claims through the next two propositions.
For the sake of completeness, we have included the adapted UCB algorithm (Alg.~\ref{alg:mcar_algorithm}) for this scenario in Appendix \ref{apx:alg}.

% \input{algorithms/mcar_algorithm}

% The following theorem presents a regret bound for this mechanism. We assume that $\mathbb{E}[Y] \in [0, 1]$ and $Y$ is sub-gaussian and $P(R_Y = 1) \geq \gamma$.

\begin{restatable}{theorem}{theomcarupper} \label{theo:mcar_upper}\text{(MCAR regret guarantee)} Under Assumption \ref{as:pos}, for every \( \alpha > 1 \), the cumulative regret of the adapted UCB (Alg.~\ref{alg:mcar_algorithm}) is bounded as follows:
\[
\mathbb{E}[R_T] = O\left(\sqrt{\frac{ \alpha n T \log(T)}{\gamma}}\right).
\]
\end{restatable}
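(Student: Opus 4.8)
The plan is to adapt the standard UCB regret analysis, tracking the effect of missingness through the single constant $\gamma \coloneqq \mathbb{P}(\oo_t=1)$, which is well-defined and strictly positive under MCAR together with Assumption~\ref{as:pos}. The only structural departure from the classical setting is that a pull yields an observed reward with probability $\gamma$, so the effective sample size used to estimate $\mu_a$ is roughly a $\gamma$-fraction of the number of pulls. This is precisely what inflates the confidence widths by a factor of $1/\sqrt{\gamma}$ and produces the $1/\gamma$ under the square root in the final bound.

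First I would verify that the empirical mean computed over observed rewards is unbiased. Since MCAR gives $\oo_t \indep Y_t$, conditioning on $\oo_t=1$ leaves the law of $Y_t$ unchanged, so the observed rewards of arm $a$ are i.i.d.\ draws from its reward distribution and the empirical mean $\hat{\mu}_a$ is an unbiased estimator of $\mu_a$. Writing $N_a(t)$ for the number of pulls of arm $a$ and $O_a(t) \coloneqq \sum_{s\le t:\,A_s=a}\oo_s$ for the number of its observed rewards, I would then establish two concentration facts. (i) Conditioned on $O_a(t)=k$, Hoeffding's inequality gives
\[
\mathbb{P}\!\left(|\hat{\mu}_a-\mu_a|\ge \sqrt{\tfrac{\alpha\log t}{2k}}\ \Big|\ O_a(t)=k\right)\le 2\,t^{-\alpha}.
\]
(ii) Since $O_a(t)$ is a sum of $N_a(t)$ independent $\mathrm{Bernoulli}(\gamma)$ variables (the independence being guaranteed by MCAR), a Chernoff bound yields $O_a(t)\ge \gamma N_a(t)/2$ outside an event of probability at most $t^{-\alpha}$, provided $N_a(t)$ exceeds a constant multiple of $(\log t)/\gamma$. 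Fact (ii) is what lets me convert a confidence width expressed in observed samples into one expressed in pulls.

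Next I would define the good event $\mathcal{E}$ on which both bounds hold simultaneously for all $n$ arms and all $t\le T$; a union bound together with $\alpha>1$ gives $\mathbb{P}(\mathcal{E}^c)=O\!\left(nT^{1-\alpha}\right)$, so its contribution to $\mathbb{E}[R_T]$ is of lower order. On $\mathcal{E}$ the usual UCB pigeonhole argument applies: a suboptimal arm $a$, with gap $\Delta_a\coloneqq\mu_{a^*}-\mu_a$, is selected at time $t$ only when its index exceeds $\mu_{a^*}$, which forces $\sqrt{\alpha\log T/(2O_a(t))}$ to be at least $\Delta_a/2$, hence $O_a(t)\le 2\alpha\log T/\Delta_a^2$. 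Combining this with $O_a(t)\ge \gamma N_a(t)/2$ bounds
\[
N_a(T)=O\!\left(\frac{\alpha\log T}{\gamma\,\Delta_a^2}\right),
\]
and summing $\Delta_a N_a(T)$ yields the gap-dependent regret $O\!\left(\sum_{a:\Delta_a>0}\alpha\log T/(\gamma\Delta_a)\right)$.

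Finally I would pass to the gap-independent bound by the standard thresholding trick: arms with $\Delta_a\le\Delta$ contribute at most $T\Delta$ in aggregate, while arms with $\Delta_a>\Delta$ contribute at most $n\alpha\log T/(\gamma\Delta)$; optimizing over $\Delta$ (taking $\Delta=\sqrt{n\alpha\log T/(\gamma T)}$) produces the claimed $O\!\left(\sqrt{\alpha n T\log T/\gamma}\right)$. The main obstacle is the randomness of $O_a(t)$: the estimator averages over a random number of observed rewards whose indices are themselves random, so Hoeffding cannot be invoked directly. I resolve this by conditioning on $O_a(t)$—legitimate exactly because MCAR keeps the observed rewards i.i.d.—and by controlling $O_a(t)$ separately via fact (ii), with the small-$N_a(t)$ regime (where $O_a(t)$ may vanish) absorbed into the initialization/forced-exploration phase of Alg.~\ref{alg:mcar_algorithm}.
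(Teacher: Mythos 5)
Your proposal follows the same skeleton as the paper's proof --- Hoeffding widths expressed in the observed sample count, the pigeonhole step forcing the width of a pulled suboptimal arm to exceed $\Delta_a/2$, and the final thresholding over gaps --- but you carry out the one delicate step, converting a bound on observed samples into a bound on pulls, by a genuinely different route. The paper does this conversion in expectation, using the exact identity $\mathbb{E}[T_{a,o}]=\gamma\,\mathbb{E}[T_a]$, which holds because under MCAR the indicator $O^Y_t$ is independent of the action and the history; this needs no burn-in and no extra concentration argument. You instead do it pathwise, via a Chernoff bound $O_a(t)\ge\gamma N_a(t)/2$ on a good event once $N_a(t)\gtrsim(\log t)/\gamma$. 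Both are legitimate: your route is lossier by constants and a lower-order additive term, but it is the more portable argument --- it is in fact exactly the device the paper itself uses in its MAR and MNAR proofs, where the nonlinear confidence widths (sums of $p_{m,a}^2/T_{m,a,o}$ over $m$) rule out a clean expectation identity.

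Two points need repair. First, your accounting of the failure event does not support the claim ``for every $\alpha>1$.'' With widths $\sqrt{\alpha\log(T)/(2k)}$, the union bound indeed gives $\mathbb{P}(\mathcal{E}^c)=O\left(nT^{1-\alpha}\right)$, but charging the complement its worst-case regret yields an $O\left(nT^{2-\alpha}\right)$ term, which dominates $\sqrt{nT\log(T)/\gamma}$ when $\alpha$ is close to $1$ (roughly for $\alpha<3/2$). The fix is the paper's per-round accounting: a failure at round $t$ costs at most one extra pull, so $\mathbb{E}[T_{a,o}]\le 4\alpha\log(T)\Delta_a^{-2}+\sum_{t\le T}\mathbb{P}(E_t^c)\le 4\alpha\log(T)\Delta_a^{-2}+\tfrac{2\alpha}{\alpha-1}$, a constant additive term for every $\alpha>1$. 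Second, Algorithm~\ref{alg:mcar_algorithm} has no forced-exploration phase (unlike Algorithms~\ref{alg:mar_algorithm} and~\ref{alg:mar_algorithm2}); arms with $T_{a,o}=0$ simply receive an infinite index. So the small-$N_a(t)$ regime cannot be ``absorbed into the initialization''; instead, charge those rounds directly: each arm is pulled at most $O((\log T)/\gamma)$ times below your burn-in threshold, contributing $O(n\log(T)/\gamma)$ total regret, which is lower order (and when it is not, $R_T\le T$ makes the claimed bound trivial). Finally, your justification of Hoeffding by conditioning on $O_a(t)=k$ has the usual subtlety that this event depends on the realized rewards through the algorithm's choices; the paper's proof glosses over the same point, so this is a shared, standardly fixable imprecision (union over sample counts) rather than a gap specific to your argument.
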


The proof of \Cref{theo:mcar_upper}, which provides a regret bound similar to that of the classic UCB algorithm, but adapted to our setting, is included in Appendix \ref{apx:proofs} provided in the supplementary.
The following result indicates that this regret bound is (near-)optimal.
\begin{restatable}{theorem}{theomcarlower}\label{theo:mcar_lower}
\text{(Minimax lower bound for MCAR)}  
For any policy \( \pi \), there exists an MCAR instance \( \nu \) s.t.
\[
\mathbb{E}[R_T(\pi, \nu)] = \Omega\left( \sqrt{\frac{nT}{\gamma}} \right),
\]
where \( \mathbb{E}[R_T(\pi, \nu)] \) represents the expected regret of policy \( \pi \) in instance \( \nu \).
\end{restatable}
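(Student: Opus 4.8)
The plan is to adapt the classical information-theoretic minimax lower bound for stochastic bandits --- the $\Omega(\sqrt{nT})$ argument built on the divergence decomposition and the Bretagnolle--Huber inequality --- to the MCAR observation model. The only genuinely new ingredient is that each pull reveals its reward only with probability $\gamma$, and the crux of the argument is that MCAR \emph{deflates the per-pull information by exactly a factor of $\gamma$}, which is precisely what inflates the attainable regret by $\sqrt{1/\gamma}$. First I would set $M\equiv 0$ (permitted by the footnote) and let $\oo_t\sim\mathrm{Bernoulli}(\gamma)$ independently of everything, so that the observation fed to the policy at a pull of arm $a$ is the tuple $(Y^o,\oo)$ whose law is the mixture $\tilde P_a=(1-\gamma)\,\delta_{(?,0)}+\gamma\,(P_a\otimes\delta_1)$, where $P_a$ is the reward law of arm $a$. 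For two instances $\nu,\nu'$ that share the same $\gamma$ and differ only in the reward law of a single arm, the missing branch contributes zero divergence while the observed branch is scaled by its probability, giving the key identity
\[
\mathrm{KL}\big(\tilde P_a,\tilde P'_a\big)=\gamma\,\mathrm{KL}\big(P_a,P'_a\big).
\]

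Next I would invoke the divergence-decomposition lemma of the canonical bandit model, now applied to the observed-tuple process rather than to the raw rewards, to obtain
\[
\mathrm{KL}\big(\mathbb{P}_{\nu},\mathbb{P}_{\nu'}\big)=\sum_{a\in\mathcal{A}}\mathbb{E}_{\nu}[T_a]\,\mathrm{KL}\big(\tilde P_a,\tilde P'_a\big)=\gamma\sum_{a\in\mathcal{A}}\mathbb{E}_{\nu}[T_a]\,\mathrm{KL}\big(P_a,P'_a\big),
\]
where $T_a$ is the number of pulls of arm $a$ over the horizon. Since the policy interacts with the environment only through the observed tuples, which generate the relevant filtration, the decomposition goes through verbatim with $\tilde P_a$ in place of $P_a$.

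Then I would instantiate the standard two-point family on unit-variance Gaussian arms. Take $\nu$ with arm $1$ at mean $\Delta$ and all others at mean $0$; let $i\in\{2,\dots,n\}$ minimize $\mathbb{E}_\nu[T_j]$, so that $\mathbb{E}_\nu[T_i]\le T/(n-1)$, and let $\nu'$ raise only arm $i$ to mean $2\Delta$ (making $i$ optimal). The two instances differ only in arm $i$, so only its term survives and, using $\mathrm{KL}(\mathcal N(0,1),\mathcal N(2\Delta,1))=2\Delta^2$, the decomposition yields $\mathrm{KL}(\mathbb{P}_\nu,\mathbb{P}_{\nu'})=2\gamma\Delta^2\,\mathbb{E}_\nu[T_i]\le 2\gamma\Delta^2 T/(n-1)$. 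Lower-bounding each regret by $\Delta T/2$ times the probability of the event that arm $1$ is pulled at most $T/2$ times (respectively its complement) and applying Bretagnolle--Huber gives
\[
\mathbb{E}[R_T(\pi,\nu)]+\mathbb{E}[R_T(\pi,\nu')]\ \ge\ \frac{\Delta T}{4}\exp\!\Big(-\frac{2\gamma\Delta^2 T}{n-1}\Big).
\]
Choosing $\Delta=\tfrac12\sqrt{(n-1)/(\gamma T)}$ balances the exponent and forces at least one of the two instances to have regret $\Omega(\sqrt{(n-1)T/\gamma})=\Omega(\sqrt{nT/\gamma})$, as claimed.

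The main obstacle is the first step: cleanly establishing the identity $\mathrm{KL}(\tilde P_a,\tilde P'_a)=\gamma\,\mathrm{KL}(P_a,P'_a)$ and justifying that the decomposition lemma still applies once the per-round observation is the augmented tuple $(Y^o,\oo)$ rather than a reward. I would prove the identity by partitioning the observation space into the disjoint regions $\{\oo=0\}$ and $\{\oo=1\}$ and integrating the log-likelihood ratio on each: the region $\{\oo=0\}$ carries identical mass $1-\gamma$ on the single point $(?,0)$ under both laws and hence contributes nothing, while $\{\oo=1\}$ reproduces $\gamma\,\mathrm{KL}(P_a,P'_a)$. Everything downstream is then the textbook minimax argument with $\Delta^2$ uniformly replaced by $\gamma\Delta^2$. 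I would also record the mild admissibility condition $n-1\le\gamma T$ needed to keep $\Delta\le\tfrac12$ (and the arm means bounded), which is exactly the regime in which the $\Omega(\sqrt{nT/\gamma})$ rate is meaningful.
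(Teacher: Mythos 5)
Your proposal is correct, and it rests on exactly the same key insight as the paper: under MCAR the per-pull observation tuple \((Y^o, O^Y)\) satisfies \(\mathrm{KL}(\tilde P_a, \tilde P'_a) = \gamma\, \mathrm{KL}(P_a, P'_a)\), so the divergence decomposition applied to the observed process carries an extra factor \(\gamma\) that inflates the minimax rate by \(\sqrt{1/\gamma}\). (The paper asserts this scaling directly inside its divergence-decomposition step; your partition-of-the-observation-space justification is a cleaner derivation of it.) Where you genuinely differ is in the concluding machinery. The paper builds \(n+1\) instances (a flat instance \(0\) and \(n\) single-arm perturbations of size \(\Delta\)), controls \(\mathbb{E}_i[T_i]\) via Pinsker's inequality together with a total-variation bound on bounded random variables, sums the regrets of all \(n\) perturbed instances, applies Cauchy--Schwarz to \(\sum_i \sqrt{\mathbb{E}_0[T_i]}\), and averages, with \(\Delta = n/(2\sqrt{\gamma T n})\). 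You instead run the two-point method: fix \(\nu\) with one \(\Delta\)-optimal arm, select the least-pulled alternative \(i\) (so \(\mathbb{E}_\nu[T_i] \le T/(n-1)\)), raise it to \(2\Delta\) in \(\nu'\), and apply Bretagnolle--Huber to the sum of the two regrets. Both are standard routes and deliver the same \(\Omega\bigl(\sqrt{nT/\gamma}\bigr)\) rate; your version avoids summing over \(n\) alternatives and produces the bound in a single exponential-form inequality, while the paper's Pinsker-plus-averaging version needs no adaptive choice of the perturbed arm. One minor note: the admissibility condition \(n-1 \le \gamma T\) you record is not actually required for Gaussian arms (the means need not lie in a bounded interval, and neither your choice of \(\Delta\) nor the paper's respects such a constraint in general); it only marks the regime where the rate is nontrivial.
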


See \Cref{apx:proofs} for the proof of \Cref{theo:mcar_lower} as well as the rest of the results of this paper.
\subsection{Missing At Random (MAR)}\label{subsec:mar}
We now focus on more realistic settings where the missingness mechanism provides information about the missing outcomes. 
This is the case for instance when the unsatisfied customers are more likely to leave comments on an online platform, or in health applications, the patients with severe side effects are more likely to drop out of the study.
We first consider the case when missingness is at random, i.e., independent of $Y$ given $M$ and $A$. 
The graphs of Figure~\ref{fig:mar1} and Figure~\ref{fig:mar2} illustrate two possible representations of the MAR mechanism, under which Assumption~\ref{ass:mar_assumption} holds.

\begin{assumption} [MAR]\label{ass:mar_assumption}
\(\oo_t\indep Y_t \mid (A_t, M_t )\) for $t\in\{1,\dots,T\}$.
\end{assumption}
% \Cref{ass:mar_assumption} states that the missingness in the outcome is random after conditioning on the action and the mediator (auxiliary) variable.
Under \cref{ass:mar_assumption}, 
% if $|\mathcal{M}| = k$, 
the expected reward is identifiable as follows:
% \begin{equation}\label{eq:idmar}
%     \begin{split}
%         \mu_a&=\mathbb{E}[Y_t\mid A_t=a] = \sum_{m \in \mathcal{M}} \mathbb{E}[Y_t \mid m, a] p_{m,a}\\
%         &\overset{(a)}{=} \sum_{m \in \mathcal{M}}  \mathbb{E}[Y_t \mid m, a, \oo_t = 1]p_{m,a}\\
%         &\overset{(b)}{=} \sum_{m \in \mathcal{M}}  \mathbb{E}[Y^o_t \mid m, a, \oo_t = 1]p_{m,a}\\
%         &=\mathbb{E}\big[\mathbb{E}[Y^o_t \mid m, a, \oo_t = 1]\mid A_t=a\big],
%     \end{split}
% \end{equation}
\begin{equation}\label{eq:idmar}
    \begin{split}
        \mu_a=\mathbb{E}[Y_t&\mid A_t=a] = \mathbb{E}\big[ \mathbb{E}[Y_t \mid M, a] \mid A_t=a\big]\\
        &\overset{(a)}{=} \mathbb{E}\big[  \mathbb{E}[Y_t \mid M, a, \oo_t = 1]\mid A_t=a\big]\\
        &\overset{(b)}{=} \mathbb{E}\big[  \mathbb{E}[Y^o_t \mid M, a, \oo_t = 1]\mid A_t=a\big],
    \end{split}
\end{equation}
where $(a)$ and $(b)$ follow from \Cref{ass:mar_assumption} and \Cref{eq:consistency}, respectively.
Accordingly, we will use the following estimator for $\mu_a$:
\begin{equation}\label{eq:marest}\hat{\mu}_a = \frac{1}{\vert T_a\vert}\sum_{t\in T_a}\big(\sum_{m\in\mathcal{M}}\frac{\mathbbm{1}\{M_{t}=m\}}{\vert T_{m,a,o}\vert}\sum_{t'\in T_{m,a,o}} Y^o_{t'}\big),\end{equation}
% The estimator based on \Cref{eq:idmar} can be constructed by first estimating the observed outcome expectations $\ex{Y_t^o\mid m,a, \oo_t=1}$ and then taking their empirical mean.
where $T_{a}, T_{m,a,o}\subseteq\{1,\dots,T\}$ are the set of iterations where $A_t=a$, and iterations where $A_t=a$, $M_t=m$ and $\oo_t=1$, respectively.
In what follows, for brevity, we define $p_{m,a}\coloneqq\mathbb{P}(M_t=m\mid A_t=a)$.
We first present an algorithm for minimizing regret when the conditional probabilities $p_{m,a}$ are known.
We then adapt our algorithm to the case where these probabilities are unknown.
Recall that \( n=\vert\mathcal{A}\vert \) is the number of arms. 
We assume that \( \mathbb{E}[Y_t \mid m, a] \in [0, 1] \) for all arms and that the reward \( Y_t \) is sub-Gaussian.
% , and \( \vert\mathcal{M}\vert = K \).
% , and that the agent knows the probabilities \( p_{m,a} = \mathbb{P}(M = m \mid a) \) for all \( m \in [K] \) and \( a \in [n] \).
\Cref{alg:mar_algorithm} presents the pseudo-code for the first case.
The algorithm is based on UCB, with the difference that at the beginning, the agent pulls every arm
% Initially, the agent pulls each arm 
\( \log(T)^2 \) times uniformly.
% at the beginning of the algorithm.
% This is to ensure that every arm is explored sufficiently 
At the subsequent rounds, both the expected rewards and the associated confidence bands are estimated based on \Cref{eq:marest}.
% Then, at each subsequent round, the agent applies a modified version of the Upper Confidence Bound (UCB) algorithm, based on Hoeffding's inequality.
In order to present the regret bounds, we need the following definitions.
% The following theorem presents the regret bound for this algorithm. 
Let \( P_a = \sum_{m \in \mathcal{M}} \frac{p_{m,a}}{\gamma_{m,a}} \) where \( \gamma_{m, a} = \mathbb{P}(O^Y = 1 \mid m, a)\).
Further, define \( S\) and $H$ as the arithmetic mean, and the harmonic mean of the \( P_a \) values, respectively: 
\[ S \coloneqq \frac{1}{\vert \mathcal{A}\vert} \sum\limits_{a\in\mathcal{A}} P_a,\quad H\coloneqq \frac{\vert \mathcal{A}\vert}{\sum_{a\in\mathcal{A}}\frac{1}{P_a}}. \]
\begin{restatable}{theorem}{theomarupperfirst} \label{theo:mar_upper_1}
\text{(Regret guarantee for Algorithm~\ref{alg:mar_algorithm})}  
For every \( \alpha > 1 \), the following regret bound holds for sufficiently large \( T \):
\[
\mathbb{E}[R_T] = O\left( \sqrt{\alpha T \log(T) n S} \right).
\]
\end{restatable}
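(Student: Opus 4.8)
The plan is to follow the standard UCB regret template, with two non-standard ingredients: a concentration inequality for the estimator $\hat\mu_a$ in \Cref{eq:marest} whose confidence width scales with $\sqrt{P_a}$, and a worst-case summation that converts the resulting gap-dependent bounds into the claimed $\sqrt{\alpha T\log(T)\,nS}$ rate. Throughout, write $N_a\coloneqq\vert T_a\vert$, $\nu_{m,a}\coloneqq\mathbb{E}[Y_t\mid m,a]$, $\hat p_{m,a}\coloneqq\vert T_{m,a}\vert/\vert T_a\vert$, and $\hat\nu_{m,a}\coloneqq\vert T_{m,a,o}\vert^{-1}\sum_{t'\in T_{m,a,o}}Y^o_{t'}$, so that $\hat\mu_a=\sum_m\hat p_{m,a}\hat\nu_{m,a}$ while, by the identification in \Cref{eq:idmar}, $\mu_a=\sum_m p_{m,a}\nu_{m,a}$.

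First I would set up a high-probability ``good event'' controlling the random cell counts. The initial phase that pulls each arm $\log(T)^2$ times, together with \Cref{as:pos} (positivity of both $\gamma_{m,a}$ and $p_{m,a}$), guarantees through a Chernoff bound for the binomial counts that $\vert T_{m,a,o}\vert\ge\frac{1}{2}\gamma_{m,a}p_{m,a}N_a$ and that $\hat p_{m,a}$ is within a constant factor of $p_{m,a}$, simultaneously for all arms, all $m\in\mathcal{M}$, and all rounds, after a union bound over the $O(nT)$ relevant events. On this event the denominators in \Cref{eq:marest} are bounded below, so the estimator is well-defined and its conditional variance is controlled.

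The crux is the concentration bound $\vert\hat\mu_a-\mu_a\vert\le c\sqrt{\alpha P_a\log(T)/N_a}$. I would condition on the ``design'' $\{(A_t,M_t,\oo_t)\}_t$, which fixes the weights $\hat p_{m,a}$ and the index sets $T_{m,a,o}$ and leaves only the observed rewards random; by \Cref{ass:mar_assumption} and \Cref{eq:consistency} these are, within each cell, conditionally i.i.d.\ sub-Gaussian with mean $\nu_{m,a}$. I then decompose
\[
\hat\mu_a-\mu_a=\underbrace{\sum_m\hat p_{m,a}\,(\hat\nu_{m,a}-\nu_{m,a})}_{\text{(I)}}+\underbrace{\sum_m(\hat p_{m,a}-p_{m,a})\,\nu_{m,a}}_{\text{(II)}}.
\]
Term (II) equals $N_a^{-1}\sum_{t\in T_a}\nu_{M_t,a}-\mu_a$, an average of bounded i.i.d.\ terms, hence concentrates at rate $\sqrt{\log(T)/N_a}\le\sqrt{P_a\log(T)/N_a}$, using $P_a=\sum_m p_{m,a}/\gamma_{m,a}\ge\sum_m p_{m,a}=1$. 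For term (I) the key point, and the reason $P_a$ rather than $\vert\mathcal{M}\vert\,P_a$ appears, is that the cell errors $\hat\nu_{m,a}-\nu_{m,a}$ are independent across $m$ given the design, so (I) is a sum of independent sub-Gaussian variables whose proxy variances add: the proxy variance is $\sum_m\hat p_{m,a}^2\,\sigma^2/\vert T_{m,a,o}\vert$, which on the good event is $O\big(\sigma^2 N_a^{-1}\sum_m p_{m,a}/\gamma_{m,a}\big)=O(\sigma^2 P_a/N_a)$. A variance-aware sub-Gaussian tail bound then delivers the stated width. Applying the triangle inequality per cell instead of this variance-aware step would cost an extra $\sqrt{\vert\mathcal{M}\vert}$, so this is the step I expect to be the main technical obstacle; the dependence between the weights and the cell means is what the conditioning-on-design argument is designed to sidestep.

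With the confidence width $w_a(N_a)=c\sqrt{\alpha P_a\log(T)/N_a}$ in hand, the remainder is routine. The standard optimism argument shows that on the good event any suboptimal arm with gap $\Delta_a\coloneqq\mu_{a^*}-\mu_a$ is pulled at most $N_a=O(\alpha P_a\log(T)/\Delta_a^2)$ times, contributing $\Delta_a N_a=O(\alpha P_a\log(T)/\Delta_a)$ to the regret, while the complementary low-probability event contributes $O(1)$. To obtain the gap-free bound I would use the threshold trick: for a cutoff $\Delta$, arms with $\Delta_a\le\Delta$ contribute at most $\Delta\sum_a N_a=\Delta T$, and arms with $\Delta_a>\Delta$ contribute at most $(\alpha\log(T)/\Delta)\sum_a P_a=\alpha nS\log(T)/\Delta$, since $\sum_a P_a=nS$. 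Balancing the two terms at $\Delta=\sqrt{\alpha nS\log(T)/T}$ yields $\mathbb{E}[R_T]=O\big(\sqrt{\alpha nST\log(T)}\big)$, as claimed.
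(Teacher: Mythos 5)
Your proof is correct, and its skeleton—good events on the cell counts from the $\log(T)^2$ forced-exploration phase, a Hoeffding/sub-Gaussian bound whose width scales as $\sqrt{\alpha P_a \log(T)/N_a}$, the optimism argument giving $N_a = O(\alpha P_a \log(T)/\Delta_a^2)$, and the threshold trick with $\sum_a P_a = nS$—is exactly the paper's. The one genuine difference is in the concentration step. The theorem concerns Algorithm~\ref{alg:mar_algorithm}, whose pseudo-code plugs the \emph{known} weights $p_{m,a}$ into $\hat{\mu}_a = \sum_m p_{m,a}\hat{\mu}_{m,a}$, so the paper applies Hoeffding directly to this weighted sum (the sub-Gaussian proxies $p_{m,a}^2/|T_{m,a,o}|$ add across cells) and never needs your term (II); your decomposition $\hat{\mu}_a-\mu_a = \sum_m \hat{p}_{m,a}(\hat{\nu}_{m,a}-\nu_{m,a}) + \sum_m(\hat{p}_{m,a}-p_{m,a})\nu_{m,a}$ instead analyzes the empirical-weight estimator of \Cref{eq:marest}, which is what the surrounding text describes but is actually the estimator of Algorithm~\ref{alg:mar_algorithm2}. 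In other words, you proved the strictly harder statement (\Cref{theo:mar_upper_2}); specializing to known weights kills term (II) and recovers the paper's proof verbatim. Your handling of term (II) is also cleaner than the paper's: you observe it equals $N_a^{-1}\sum_{t\in T_a}\nu_{M_t,a}-\mu_a$, an average of bounded i.i.d.\ terms, giving a dimension-free $\sqrt{\log(T)/N_a}$ rate, whereas the paper's proof of \Cref{theo:mar_upper_2} bounds $\sum_m|\hat{p}_{m,a}-p_{m,a}|$ via an $\ell_1$ concentration lemma from \cite{kamath2015learning}, incurring $\sqrt{K}$ terms it must then argue are dominated for large $T$. One shared imprecision (not a gap relative to the paper, which does the same): both treatments apply Hoeffding with the random counts $|T_{m,a,o}|$ as if they were fixed; a fully rigorous version needs a union bound over count values or a martingale/self-normalized argument, at the cost of constants absorbed into $\alpha$.
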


Next, we show that \Cref{alg:mar_algorithm} can be adapted to the case where the conditional probabilities $p_{m,a}$ are not known and must be estimated -- see Algorithm~\ref{alg:mar_algorithm2}.
% for details.
% Appendix \ref{apx:alg}), which operates under the same assumptions as before, except that the agent does not know the probabilities \( p_{m,a} = \mathbb{P}(m \mid a) \).
% The algorithm is similar to the previous one, with a slightly modified UCB.
The following theorem shows that this algorithm achieves the same regret bound as \Cref{alg:mar_algorithm}, i.e., the estimation of $p_{m,a}$ does not affect the cumulative regret.

\begin{restatable}{theorem}{theomaruppersecond} \label{theo:mar_upper_2}
\text{(Regret guarantee for Algorithm~\ref{alg:mar_algorithm2})}  
For every \( \alpha > 1 \), the following regret bound holds for sufficiently large \( T \):
\[
\mathbb{E}[R_T] = O\left( \sqrt{\alpha T \log(T) n S} \right).
\]
\end{restatable}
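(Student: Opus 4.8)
The plan is to show that \Cref{alg:mar_algorithm2} incurs only a lower-order penalty relative to \Cref{alg:mar_algorithm} from having to estimate the weights $p_{m,a}$ (and hence $P_a$) from data, so that the leading term of the regret is unchanged. Write $\nu_{m,a}\coloneqq\mathbb{E}[Y_t\mid M_t=m,A_t=a]$, let $\hat\nu_{m,a}\coloneqq\frac{1}{\vert T_{m,a,o}\vert}\sum_{t'\in T_{m,a,o}}Y^o_{t'}$ be its empirical counterpart, and let $\hat p_{m,a}\coloneqq\frac{1}{\vert T_a\vert}\sum_{t\in T_a}\mathbbm{1}\{M_t=m\}$ be the empirical mediator frequency, so that the estimator of \Cref{eq:marest} reads $\hat\mu_a=\sum_{m}\hat p_{m,a}\hat\nu_{m,a}$. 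Using $\mu_a=\sum_m p_{m,a}\nu_{m,a}$ from the identification in \Cref{eq:idmar} and adding and subtracting $p_{m,a}\hat\nu_{m,a}$, the estimation error decomposes as
\begin{equation}\label{eq:thm2decomp}
\hat\mu_a-\mu_a=\underbrace{\sum_{m\in\mathcal{M}}p_{m,a}\bigl(\hat\nu_{m,a}-\nu_{m,a}\bigr)}_{(\mathrm{I})}+\underbrace{\sum_{m\in\mathcal{M}}\bigl(\hat p_{m,a}-p_{m,a}\bigr)\hat\nu_{m,a}}_{(\mathrm{II})}.
\end{equation}
Term $(\mathrm{I})$ is precisely the error that is controlled in the proof of \Cref{theo:mar_upper_1} under known weights, and concentrates at the rate $O\bigl(\sqrt{P_a\log(T)/\vert T_a\vert}\bigr)$, the factor $P_a=\sum_m p_{m,a}/\gamma_{m,a}$ arising from aggregating the inverse-observation-probability contributions of the strata. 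Term $(\mathrm{II})$ is the genuinely new contribution due to not knowing $p_{m,a}$.

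Next I would bound $(\mathrm{II})$. Since $\hat\nu_{m,a}\in[0,1]$ and $(\hat p_{m,a})_{m}$ is an empirical multinomial frequency vector over the $\vert T_a\vert$ conditionally i.i.d.\ mediator draws in the rounds where arm $a$ was pulled, a Hoeffding (or Bernstein) bound gives $\vert(\mathrm{II})\vert=O\bigl(\sqrt{\log(T)/\vert T_a\vert}\bigr)$ with high probability. Because $\sum_m p_{m,a}=1$ and $\gamma_{m,a}\le 1$ imply $P_a\ge 1$, this error is dominated, up to a universal constant, by the width already needed for $(\mathrm{I})$; hence the total estimation error of $\hat\mu_a$ is still $O\bigl(\sqrt{P_a\log(T)/\vert T_a\vert}\bigr)$. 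It remains to argue that the \emph{data-driven} confidence radius used by \Cref{alg:mar_algorithm2} --- which plugs the estimates $\hat p_{m,a}$ and $\hat\gamma_{m,a}=\vert T_{m,a,o}\vert/\vert T_{m,a}\vert$ (where $T_{m,a}$ collects the rounds with $A_t=a$ and $M_t=m$) into $\hat P_a=\sum_m \hat p_{m,a}/\hat\gamma_{m,a}$ --- is within a constant factor of the true $P_a$. By \Cref{as:pos} both $p_{m,a}$ and $\gamma_{m,a}$ are bounded away from $0$, so on the high-probability event that the initial forced exploration has made $\hat p_{m,a}$ and $\hat\gamma_{m,a}$ accurate, $\hat P_a$ concentrates around $P_a$ and the estimated width is valid.

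The role of the initial phase, in which each arm is pulled $\log(T)^2$ times, is exactly to guarantee that these plug-in estimates (and the positivity of the counts $\vert T_{m,a,o}\vert$) hold from the first adaptive round onward; this forced exploration adds only $O(n\log(T)^2)$ to the cumulative regret, which is of lower order than $\sqrt{T}$ and therefore absorbed into the stated bound. With valid confidence intervals of the same order as in the known-weights case established, the remainder is identical to the proof of \Cref{theo:mar_upper_1}: the standard optimism argument bounds the expected number of pulls of each suboptimal arm in terms of its gap and $P_a$, and summing these bounds while invoking Cauchy--Schwarz produces the arithmetic-mean factor $S=\frac{1}{\vert\mathcal{A}\vert}\sum_a P_a$, yielding $\mathbb{E}[R_T]=O\bigl(\sqrt{\alpha T\log(T)\,nS}\bigr)$.

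The main obstacle I anticipate is the rigorous control of the product-type error in \Cref{eq:thm2decomp} together with the self-referential confidence radius. The empirical quantities $\hat p_{m,a}$ and $\hat\nu_{m,a}$ are built from overlapping rounds and are not independent, and the stratum sizes $\vert T_{m,a,o}\vert$ are themselves random, depending on the realized mediators and missingness indicators. A clean argument therefore requires conditioning on the realized mediator sequence and the counts, controlling $\hat\nu_{m,a}$ conditionally via sub-Gaussianity, and then integrating out the randomness in the counts using the positivity of \Cref{as:pos}; one must verify that after this conditioning the variance proxy still aggregates to $P_a$ and that replacing $P_a$ by $\hat P_a$ in the radius does not create a circular dependence that degrades the high-probability coverage.
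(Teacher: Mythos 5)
Your proposal follows essentially the same route as the paper's proof: the identical decomposition of $\hat\mu_a-\mu_a$ into the known-weights error (term (I), handled exactly as in Theorem~\ref{theo:mar_upper_1}) plus the weight-estimation error $\sum_m(\hat p_{m,a}-p_{m,a})\hat\mu_{m,a}$, the same absorption of the second term into the first confidence width for sufficiently large $T$, the same constant-factor sandwich $\tfrac12 p_{m,a}\le\hat p_{m,a}\le 2p_{m,a}$ validating the data-driven radius, the same $O(n\log^2 T)$ accounting of the forced-exploration phase, and the same optimism argument to finish. Two technical points where the paper is more careful than your sketch: (i) since $Y_t$ is only sub-Gaussian, your claim $\hat\nu_{m,a}\in[0,1]$ is false in general; the paper instead introduces an additional good event giving $\hat\mu_{m,a}\le 1+\sqrt{\alpha\log(t)/(2T_{m,a,o})}$ and bounds the resulting cross term $\sum_m|\hat p_{m,a}-p_{m,a}|\sqrt{\alpha\log(t)/(2T_{m,a,o})}$ separately; (ii) the paper does not attempt your $K$-free Hoeffding bound on term (II) — which is delicate precisely because $\hat p_{m,a}$ and $\hat\nu_{m,a}$ are built from overlapping, dependent data, the obstacle you flag yourself — but rather bounds $\sum_m|\hat p_{m,a}-p_{m,a}|$ via an $\ell_1$ multinomial concentration result (Lemma 7 of \cite{kamath2015learning}), whose $K$-dependent terms are then dominated by the main width once $T_a\ge\log(T)^2$ and $T$ is large; this is exactly the fallback your own final paragraph anticipates. (A cosmetic discrepancy: Algorithm~\ref{alg:mar_algorithm2} does not form $\hat P_a$ from plug-in estimates $\hat\gamma_{m,a}$; its radius uses the observed counts $T_{m,a,o}$ directly, which amounts to the same thing under the good events.)
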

% The proofs for Theorem~\ref{theo:mar_upper_1} and Theorem~\ref{theo:mar_upper_2} are provided in Appendix~\ref{apx:proofs}.
% \begin{remark}
    It is noteworthy that these regret bounds do not depend on the cardinality of the mediator ($\mathcal{M}$).
    % Moreover, with \( \gamma_\mathrm{min} = \min\limits_{m, a} \gamma_{m, a} \), \Cref{theo:mar_upper_2} implies the following bound:
    % \[
    % P_a \leq \frac{1}{\gamma_\mathrm{min}}, \quad \mathbb{E}[R_T] = O\left( \sqrt{\frac{\alpha T \log(T) n}{\gamma_\mathrm{min}}} \right).
    % \]
% \end{remark}
%%%%%%%%%%%%%%%%%%%%%%
The following theorem provides the minimax lower bound, demonstrating near-optimality of Algorithms \ref{alg:mar_algorithm} and \ref{alg:mar_algorithm2}.

\begin{restatable}{theorem}{theomarlower} \label{theo:mar_lower_1}
\text{(Minimax lower bound for MAR)}  
For any policy $\pi$, there exists a MAR instance $\nu$ such that:
\[
\mathbb{E}[R_T(\pi, \nu)] = \Omega\left( \sqrt{T n H}\right).
\]
% where $H$ is the harmonic mean of $P_a$s: $H=\frac{n}{\sum\limits_{a} \frac{1}{P_a}}$.
\end{restatable}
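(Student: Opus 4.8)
The plan is to establish the bound through the standard information-theoretic (change-of-measure) recipe for minimax lower bounds, with the single twist that the per-pull information about each arm is deflated by the missingness, so that the harmonic mean $H$ emerges from a Cauchy--Schwarz step.

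\textbf{Construction of the hard family.} First I would fix the prescribed missingness structure $\{p_{m,a},\gamma_{m,a}\}$ realizing the given values $P_a=\sum_{m}p_{m,a}/\gamma_{m,a}$, and take Gaussian rewards (which are sub-Gaussian) with conditional means in $[0,1]$. In the base instance $\nu_0$ every arm has $\mu_a=1/2$, so no arm is preferred. For each arm $a$ I would build an alternative $\nu_a$ that raises only arm $a$'s mean by a common gap $\Delta$, perturbing the per-mediator conditional means $\mathbb{E}[Y\mid m,a]$ by the \emph{least informative} shift $s_m$ subject to $\sum_m p_{m,a}s_m=\Delta$. A one-line Lagrangian shows the optimum is the inverse-probability shift $s_m=\Delta/(P_a\gamma_{m,a})$, for which the per-pull KL divergence between the observable laws of $\nu_0$ and $\nu_a$ equals $\Delta^2/(2P_a)$ (rewards contribute only on the event $\oo=1$, of probability $\gamma_{m,a}$ given $M=m$). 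This is the crux: the inverse-probability weighting of the perturbation is exactly what turns the per-pull information into $1/P_a$. Since only arm $a$'s reward law changes while the missingness mechanism is untouched, each $\nu_a$ satisfies $\oo\indep Y\mid(A,M)$ and Assumption~\ref{as:pos}, hence is a valid MAR instance.

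\textbf{Information argument.} Writing $N_a$ for the number of pulls of arm $a$, the divergence decomposition gives $\mathrm{KL}(\mathbb{P}_{\nu_0},\mathbb{P}_{\nu_a})=\mathbb{E}_{\nu_0}[N_a]\cdot\Delta^2/(2P_a)$, as all terms except the $a$-th vanish. Under $\nu_a$ arm $a$ is uniquely optimal with gap $\Delta$, so $\mathbb{E}[R_T(\pi,\nu_a)]\ge\Delta\,\mathbb{E}_{\nu_a}[T-N_a]$. I would then transport the pull counts back to $\nu_0$ via $\mathbb{E}_{\nu_a}[N_a]-\mathbb{E}_{\nu_0}[N_a]\le T\,\mathrm{TV}(\mathbb{P}_{\nu_0},\mathbb{P}_{\nu_a})\le T\sqrt{\tfrac12\mathrm{KL}(\mathbb{P}_{\nu_0},\mathbb{P}_{\nu_a})}$ (Pinsker), obtaining a lower bound on $\mathbb{E}[R_T(\pi,\nu_a)]$ in terms of $\mathbb{E}_{\nu_0}[N_a]$ alone.

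\textbf{Averaging and optimizing.} Averaging over the $n$ alternatives and using $\sum_a\mathbb{E}_{\nu_0}[N_a]=T$, the cross term is $\sum_a\sqrt{\mathbb{E}_{\nu_0}[N_a]/P_a}$, which Cauchy--Schwarz bounds by $\sqrt{\sum_a 1/P_a}\,\sqrt{\sum_a\mathbb{E}_{\nu_0}[N_a]}=\sqrt{(n/H)\,T}$ --- this is precisely where $H=n/\sum_a(1/P_a)$ enters. Choosing $\Delta\asymp\sqrt{nH/T}$ to balance the leading and cross terms then yields $\max_a\mathbb{E}[R_T(\pi,\nu_a)]\ge\tfrac1n\sum_a\mathbb{E}[R_T(\pi,\nu_a)]=\Omega(\sqrt{TnH})$, giving the claim.

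\textbf{Main obstacle.} The delicate part is the least-informative construction: one must perturb the per-mediator means by $s_m\propto 1/\gamma_{m,a}$ so that a fixed change $\Delta$ in $\mu_a$ costs only $\Delta^2/(2P_a)$ per pull, rather than the naive $(\Delta^2/2)\sum_m p_{m,a}\gamma_{m,a}$; getting this optimal deflation is what makes the bound tight at $P_a$. Everything downstream is the standard minimax machinery, and the harmonic mean falls out of the Cauchy--Schwarz step, complementing the appearance of the arithmetic mean $S$ in the matching upper bound of \Cref{theo:mar_upper_1}.
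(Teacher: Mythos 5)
Your proposal is correct and follows essentially the same route as the paper's proof: the paper's hard instances use exactly your inverse-propensity shift, setting $\mu_{m,a}=\Delta/(P_a\gamma_{m,a})$ for the distinguished arm, computes the per-pull KL as $\Theta(\Delta^2/P_a)$, and then applies the divergence decomposition, Pinsker, Cauchy--Schwarz over arms, and the choice $\Delta\asymp\sqrt{nH/T}$, just as you outline. The only difference is presentational: you derive the shift as the least-informative perturbation via a Lagrangian, whereas the paper posits it directly.
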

Note that when $\gamma_{m,a}$ values are identical (and equal to $\gamma$) then $S$ and $H$ coincide.
Further, 
% Furthermore, if \( \gamma_{m, a} = \gamma \) for all \( m, a \), applying Theorem~\ref{theo:mar_lower_1} gives the following lower bound, showing that the upper and lower bounds match:
% \[
% \mathbb{E}[R_T(\pi, \nu)] = \Omega\left( \sqrt{\frac{Tn}{\gamma}} \right).
% \]
the upper and lower bounds in this case match those of MCAR.
% -- see Theorems \ref{theo:mcar_upper} and \ref{theo:mcar_lower}.

A special case of the MAR environment (depicted in Figure~\ref{fig:semi_mcar}) pertains to when there is no mediator. In this case,  \Cref{ass:mar_assumption} reduces to the following:
\begin{assumption}\label{as:mar2}
    $\oo_t\indep Y_t\mid A_t$ for all $t\in\{1,\dots,T\}$.
\end{assumption}
% This can also be thought of as a generalization of the MCAR environment that does not involve a mediator.
Theorems~\ref{theo:mar_upper_1} and \ref{theo:mar_lower_1}
with a degenerate mediator ($\vert\mathcal{M}\vert=1$) imply the following corollary.
\begin{corollary}
    Under \Cref{as:mar2} \Cref{alg:mar_algorithm2} induces cumulative regret
    \(\
        \mathbb{E}[R_T] = O\left( \sqrt{\alpha T \log(T) n S} \right)
    \)
    and the cumulative regret of any policy is lower bounded by
    \(
        \mathbb{E}[R_T] = \Omega\left( \sqrt{\alpha T \log(T) n H} \right),
    \)
    where \( S = \frac{\sum\limits_{a} \frac{1}{\gamma_a}}{n} \) and \( H = \frac{n}{\sum\limits_{a} \gamma_a} \).
\end{corollary}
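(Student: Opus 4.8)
The plan is to obtain this corollary as a direct specialization of \Cref{theo:mar_upper_2} and \Cref{theo:mar_lower_1} to the degenerate-mediator regime $\vert\mathcal{M}\vert=1$. First I would argue that \Cref{as:mar2} is precisely \Cref{ass:mar_assumption} instantiated with a constant mediator: if $M_t\equiv m_0$ almost surely, then conditioning on $(A_t,M_t)$ is the same as conditioning on $A_t$ alone, so the independence $\oo_t\indep Y_t\mid(A_t,M_t)$ collapses to $\oo_t\indep Y_t\mid A_t$. Consequently every ingredient used to establish the MAR results---the identification in \Cref{eq:idmar}, the estimator in \Cref{eq:marest}, and \Cref{alg:mar_algorithm2} itself---remains valid, with the outer sum over $m\in\mathcal{M}$ and the indicator $\mathbbm{1}\{M_t=m\}$ reducing to a single term. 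I would also check that \Cref{as:pos} continues to hold, which here simply requires $\gamma_a\coloneqq\mathbb{P}(\oo_t=1\mid A_t=a)>0$ for every arm, since $\mathbb{P}(M_t=m_0\mid A_t)=1>0$ trivially.

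Second, I would evaluate the instance-dependent quantities $P_a$, $S$, and $H$ in this regime. With the degenerate mediator we have $p_{m_0,a}=1$ and $\gamma_{m_0,a}=\gamma_a$, so $P_a=\sum_{m}\frac{p_{m,a}}{\gamma_{m,a}}=\frac{1}{\gamma_a}$. Substituting into the definitions of the arithmetic and harmonic means yields $S=\frac{1}{n}\sum_a P_a=\frac{1}{n}\sum_a\frac{1}{\gamma_a}$ and $H=\frac{n}{\sum_a 1/P_a}=\frac{n}{\sum_a\gamma_a}$, which match the expressions in the statement. These are routine algebraic substitutions.

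Third, with the reduction and the computed constants in hand, the upper bound follows immediately by invoking \Cref{theo:mar_upper_2}, and the lower bound by invoking \Cref{theo:mar_lower_1}, both read off with $\vert\mathcal{M}\vert=1$ and the above values of $S$ and $H$. No new estimator analysis or concentration argument is needed beyond what those theorems already supply.

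I expect the only nontrivial point---and hence the step deserving the most care---to be verifying that the lower-bound construction underlying \Cref{theo:mar_lower_1} can be realized within the no-mediator model of \Cref{as:mar2}, i.e., that the hard instance $\nu$ achieving the $\sqrt{TnH}$ rate does not secretly exploit a nondegenerate mediator. If the adversarial family in that proof already places all variation in the arm-conditional observation probabilities $\gamma_a$ rather than in $M$, the specialization is immediate; otherwise one must exhibit a degenerate-mediator instance attaining the same rate, which amounts to reparametrizing the per-arm missingness rates as $\gamma_a$ and repeating the underlying two-point KL argument verbatim.
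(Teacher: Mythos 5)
Your proposal is correct and is essentially the paper's own argument: the paper derives this corollary purely by specializing its MAR theorems to a degenerate mediator ($\vert\mathcal{M}\vert=1$), under which $P_a=1/\gamma_a$, $S=\frac{1}{n}\sum_a \frac{1}{\gamma_a}$, and $H=\frac{n}{\sum_a\gamma_a}$, exactly as you compute (and your use of \Cref{theo:mar_upper_2} for Algorithm~\ref{alg:mar_algorithm2} is the right reading). Your extra check on the lower bound is also well-placed and resolves as you expect: the hard instances in the proof of \Cref{theo:mar_lower_1}, once $K=1$, put all variation in the per-arm observation probabilities $\gamma_a$ and are legitimate instances satisfying \Cref{as:mar2}.
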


\begin{figure}[t]
        \centering
        \begin{tikzpicture}
            \tikzset{
                solid/.style={circle, fill, inner sep=1.5pt},
                every path/.style={thick, >={Latex[round]}}
                }
            \node[solid] (A) at (-1.2,0) {};
            \node[solid] (Y) at (1.2,0) {};
            \node[solid] (RY) at (0.1,0.7) {};
            
            \draw[->] (A) to (Y);
            \draw[->] (A) to (RY);            
    
            \node[left=1pt of A] {A};
            \node[right=1pt of Y] {Y};
            \node[above right=-8pt and 1pt of RY] {$\oo$};
        \end{tikzpicture}
        \caption{Special case of MAR.}
        \label{fig:semi_mcar}
\end{figure}
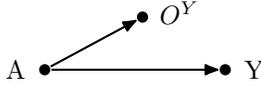

\paragraph{Discussion 1.}\label{discussion1} We used estimators that explicitly use the mediator values in this section.
As we pointed out earlier, the size of $\mathcal{M}$ (the alphabet of $M$) does not affect the regret bounds.
However, one might wonder whether the use of the mediator can be avoided, resulting in simpler algorithms and/or estimation schemes.
We show next that any such algorithm can induce linear regret in the worst case.
As a corollary, this result implies that naïvely employing the classical UCB algorithm also induces linear regret.
\begin{restatable}{theorem}{thmignoremed}\label{thm:ignoremed}
    For any mediator-agnostic  policy $\pi$ (a policy that does not have access to mediator values), there exists a MAR instance $\nu$ which satisfies \Cref{ass:mar_assumption} and its regret grows linearly
    \[
        \ex{R_T(\pi,\nu)}=\Omega(T).
    \]
\end{restatable}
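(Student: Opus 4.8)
The plan is to establish the lower bound by a mediator-targeted indistinguishability (change-of-measure) argument. Concretely, I would construct two MAR instances $\nu$ and $\nu'$, each satisfying \Cref{ass:mar_assumption} and \Cref{as:pos}, that \emph{any} mediator-agnostic policy cannot tell apart --- the law of its entire observation history $(A_t,\oo_t,Y^o_t)_{t\le T}$ is identical under both --- yet whose optimal arms differ. Such a policy is a fixed (possibly randomized) map from mediator-free histories to actions, so it induces the same distribution over the pulled-arm counts $(N_1,N_2)$ in both instances and therefore cannot be simultaneously good on both; a standard symmetrization then forces regret $\Omega(T)$ on at least one of them.

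The engine of the construction is the observation that a mediator-agnostic policy only ever sees the per-arm pair
\[
\Big(\mathbb{P}(\oo_t=1\mid A_t=a),\ \mathcal{L}(Y^o_t\mid A_t=a,\oo_t=1)\Big),
\]
whereas the quantity that governs regret is $\mu_a=\sum_m p_{m,a}\,\ex{Y_t\mid m,a}$. Under MAR the observed-reward mean equals $\sum_m p_{m,a}\gamma_{m,a}\ex{Y_t\mid m,a}\big/\sum_m p_{m,a}\gamma_{m,a}$, i.e.\ the mediator strata are reweighted by the observation probabilities $\gamma_{m,a}$ rather than by $p_{m,a}$. This mismatch is exactly what lets me decouple what the policy sees from the truth. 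I would take $n=2$, $M\in\{0,1\}$, $p_{m,a}=1/2$, and Bernoulli rewards, choosing the stratum means $\ex{Y_t\mid m,a}$ and the $\gamma_{m,a}$ so that both arms share the same observation probability and the same Bernoulli observed-reward law of mean $1/2$, while one arm has true mean $0.6$ and the other $0.4$. For example, $\gamma_{0,a}=0.6,\ \gamma_{1,a}=1$ with stratum means $(1,0.2)$ yields $\mu=0.6$ and observed mean $0.5$, whereas stratum means $(0,0.8)$ yield $\mu=0.4$ and observed mean $0.5$; instance $\nu'$ is obtained from $\nu$ by swapping the two arms' parameters.

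Using Bernoulli rewards is what upgrades equality of observed means to equality of the full observed law, which is precisely the indistinguishability needed for a policy that never conditions on $M_t$. With the gap $\Delta=\mu_{a^*}-\mu_{a'}=0.2$ identical in both instances and $N_1+N_2=T$, the identity of histories gives $\ex{R_T(\pi,\nu)}+\ex{R_T(\pi,\nu')}=\Delta\,T$, whence $\max\{\ex{R_T(\pi,\nu)},\ex{R_T(\pi,\nu')}\}\ge \Delta T/2=\Omega(T)$. Since the classical UCB estimator for each arm is exactly the mediator-agnostic observed-reward average, classical UCB falls under this theorem, which yields the stated corollary about naïve UCB incurring linear regret.

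I expect the main obstacle to be the verification rather than the construction. One must check that both instances genuinely satisfy the conditional independence $\oo_t\indep Y_t\mid(A_t,M_t)$ and positivity, and --- more delicately --- that matching the observed Bernoulli law truly forces equality of the complete observation-history laws under an \emph{adaptive} policy, so that the counts $(N_1,N_2)$ have the same distribution in $\nu$ and $\nu'$. Making this airtight, by propagating the per-round distributional identity through the policy's adaptivity via an induction on $t$, is the step that needs the most care; the symmetrization and the final $\Omega(T)$ bookkeeping are then routine.
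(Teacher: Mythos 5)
Your proposal is correct and follows essentially the same strategy as the paper's proof: construct a two-armed MAR instance (and its arm-swapped twin) in which both arms present identical observable laws to a mediator-agnostic agent --- same observation probability and same observed-reward distribution --- while their true means differ by a constant gap, then conclude $\Omega(T)$ regret via indistinguishability of the history laws and symmetrization. The only cosmetic difference is the parametrization (you fix $p_{m,a}$ and $\gamma_{m,a}$ across arms and vary the stratum means, whereas the paper fixes the stratum means and varies $p_{m,a}$ with $\gamma_{m,a}\propto 1/p_{m,a}$); your explicit two-instance symmetrization is, if anything, a slightly more careful rendering of the paper's KL-equals-zero argument.
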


\paragraph{Discussion 2.}\label{discussion2} The expected reward $\mu_a$ can also be estimated using a Horvitz-Thompson (HT) type estimator \cite{HorvitzThompson1952}.
Specifically, the conditional expectation terms in \Cref{eq:idmar} can be expressed as follows:
\[
\begin{split}
    \ex{Y^o_t\mid m,a, \oo_t&=1}=
    \ex{\frac{Y^o_t \mathbbm{1}\{M_t=m, \oo_t=1\}}{p_{m,a}\gamma_{m,a}}\mid A_t=a},
\end{split}
\]
and after plugging it into Eq.~\eqref{eq:idmar}, 
\begin{equation}\label{eq:ht}
    \mu_a = \ex{\sum_{m\in\mathcal{M}}
        \frac{Y^o_t\mathbbm{1}\{M_t=m, \oo_t=1\}}{\gamma_{m,a}}
        \mid A_t=a
    }.
\end{equation}
An estimator based on the latter does not require estimating the conditional outcome means (in contrast to Eq.~\ref{eq:idmar}), but it rather requires modeling the missingness probabilities $\gamma_{m,a}$.
Using such an estimator is particularly beneficial when the missingness probabilities are known in advance, or a parametric model can be justified.
However, if the missingness probabilities are small or estimated imprecisely, the HT estimator can exhibit high variance, leading to instability.
One can take a step further and construct augmented inverse propensity weighted (AIPW) estimators for $\mu_a$:
\begin{equation}\label{eq:dr}
\begin{split}
    \mu_a &= \mathbb{E}\big[
        \sum_{m\in\mathcal{M}}\frac{\mathbbm{1}\{M_t=m\}}{\gamma_{m,a}}\big(
        Y^o_t\mathbbm{1}\{\oo_t=1\}
        -
        (\mathbbm{1}\{\oo_t=1\}-\gamma_{m,a})\ex{Y^o_t\mid m, a, \oo_t=1}
        \big)
        \mid A_t=a
    \big],
\end{split}
\end{equation}
which is doubly robust (DR) in the sense that it is consistent as long as either the missingness probabilities $\gamma_{m,a}$ or the conditional outcome means $\ex{Y^o_t\mid m, a, \oo_t=1}$ (but not necessarily both) are correctly specified.
We prove this claim formally in Appendix \ref{apx:proofs} for the sake of completeness.
In this paper, we consider discrete-valued mediators, and we estimate all the quantities of interest through empirical means. 
Therefore, all three estimators (outcome-based, HT, and DR) coincide.
However, the HT and DR estimators can prove beneficial for extending our approach to incorporate continuous mediators, or in problems with high-dimensional actions and/or mediators where (semi)parametric models can help improve estimation efficiency.
% Despite several appealing asymptotic properties of these estimators, they may underperform compared to the estimator of Eq.~\ref{eq:idmar} when applied to finite samples \cite{}.
% Therefore, in this manuscript, we include the IPW and AIPW estimators solely for empirical evaluations.
\subsection{Missing Not At Random (MNAR)}\label{sec:mnar}
Finally, we consider the case where the missingness mechanism directly depends on the outcome value \( Y \). 
% In this setting, missingness is independent of \( M \) when conditioned on \( Y \) and \( A \). Figure~\ref{fig:mnar} illustrates the MNAR mechanism, which follows Assumption~\ref{ass:mnar_assumption}.
Here, we follow the identification strategy of \cite{zuo2024mediation} for MNAR.
However, we are interested only in identifying the expected rewards, rather than conducting mediation analysis.
We begin with the following assumption.
\begin{assumption}[MNAR]\label{ass:mnar_assumption}
\(
    \oo_t \indep M_t \mid (A_t, Y_t)
    \) for \( t \in \{1, \dots, T\} \).
\end{assumption}
In other words, the missingness is independent of the mediator when conditioned on the action and the actual outcome.
\Cref{fig:mnar} graphically represents this scenario.
% \Cref{ass:mnar_assumption} states that the missingness in the outcome is random after conditioning on the action and outcome. 
This situation commonly arises in environments where the reward is missing due to its value. 
For example, if the outcome of interest is the income of an individual, they may not be inclined to report it if the value is too high or too low.

% Following the approach of \cite{zuo2024mediation}, we further make the following assumption.
We further make the following assumption.
\begin{assumption}[Completeness]\label{as:complete}
    The distribution \( \mathbb{P}(M, Y, O^Y = 1 \mid a) \) is complete in \( M \), that is, for any $a\in\mathcal{A}$, and for any function $g:\mathcal{Y}\to\mathbb{R}$,
    % denoting \( \mathbb{P}(M, Y, O^Y = 1 \mid A) = f(Y, M) \), the equation
    \[
    \int_{y \in \mathcal{Y}} \mathbb{P}(M, Y=y, O^Y = 1 \mid a)g(y) \, dy = 0
    \]
    implies that \( g(Y) = 0 \) with probability one.
\end{assumption}

% A corollary of Assumption~\ref{as:complete} is that the equation
% \(
% \int_{y \in \mathcal{Y}} f(Y, M)g(Y) \, dy = C_M
% \)
% also has a unique solution.

% We prove that \( \mathbb{E}[Y(a)] \) is identifiable if \( \mathbb{P}(M, Y, O^Y = 1) \) is complete (in the sense of \cite{zuo2024mediation}) in \( M \).
Below we show how $\mu_a$ is identified under these assumptions.
The identification strategy outlined here is akin to \cite{zuo2024mediation}.
% First,
% \begin{align*}
%     &\mathbb{P}(m, y, O^Y = 0 \mid a) 
%     % = \mathbb{P}(O^Y = 0 \mid a, m, y) \mathbb{P}(m, y \mid a) 
%     \\
%     &\overset{(a)}{=} \mathbb{P}(m, y, O^Y = 1 \mid a) \times \frac{\mathbb{P}(O^Y = 0 \mid y,a,m)}{\mathbb{P}(O^Y = 1 \mid y,a,m)}\\
%     % &= \mathbb{P}(O^Y = 0 \mid a, y) \times \frac{\mathbb{P}(m, y, O^Y = 1 \mid a)}{\mathbb{P}(O^Y = 1 \mid a, m, y)} \\
%     &\overset{(b)}{=} \mathbb{P}(m, y, O^Y = 1 \mid a) \times \frac{\mathbb{P}(O^Y = 0 \mid y,a)}{\mathbb{P}(O^Y = 1 \mid y,a)},
% \end{align*}
% 
% Thus:
\begin{align*}
    &\mathbb{P}(m, O^Y = 0 \mid a) 
    = \int_{y \in \mathcal{Y}} \mathbb{P}(m, y, O^Y = 0 \mid a) \, dy \\
    &\overset{(a)}{=} \int_{y \in \mathcal{Y}} \mathbb{P}(m, y, O^Y = 1 \mid a) \frac{\mathbb{P}(O^Y = 0 \mid y, a,m)}{\mathbb{P}(O^Y = 1 \mid y, a,m)} \, dy\\
    &\overset{(b)}{=} \int_{y \in \mathcal{Y}} \mathbb{P}(m, y, O^Y = 1 \mid a) \frac{\mathbb{P}(O^Y = 0 \mid y, a)}{\mathbb{P}(O^Y = 1 \mid y, a)} \, dy,
\end{align*}
where $(a)$ and $(b)$ follow from Bayes' rule and \Cref{ass:mnar_assumption}, respectively.
Since \( \mathbb{P}(M, Y, O^Y = 1 \mid a) \) is complete in \( M \), solving this integral equation uniquely determines the inverse odds ratio \( \mathrm{OR}_{y,a}=\frac{\mathbb{P}(O^Y = 0 \mid y, a)}{\mathbb{P}(O^Y = 1 \mid y, a)} \), allowing us to identify \( \mathbb{P}(y\mid a) \) as follows:
\begin{equation}\label{eq:idmnar}
\begin{split}
\mathbb{P}(y\mid a) &=\! \sum_{m\in\mathcal{M}}
\mathbb{P}(y, m \mid a) = 
\sum_{m\in\mathcal{M}}\frac{\mathbb{P}(y, m \mid O^Y = 1, a)}{\mathbb{P}(O^Y = 1 \mid y, a)}
\\&=\! \sum_{m\in\mathcal{M}} (1+\mathrm{OR}_{y,a})\mathbb{P}(y, m \mid O^Y = 1, a).
\end{split}
\end{equation}
% Finally, \( \mathbb{P}(y \mid a) \) is identifiable since:
% \[
% \mathbb{P}(y \mid a) = \sum\limits_{m \in \mathcal{M}} \mathbb{P}(y, m \mid a),
% \]
Finally, \( \mu_a=\mathbb{E}[Y_t\mid A_t=a] \) is identified as
\(
\mu_a = \int_{y \in \mathcal{Y}} y \mathbb{P}(y \mid a) \, dy.
\)

In the remainder of this section, 
we assume \( Y \) is discrete with \( |\mathcal{Y}| = L \), 
and the outcomes are normalized so that \( \sum_{y \in \mathcal{Y}} |y| = 1 \).
Define \( K=\vert\mathcal{M}\vert\), and \( \Theta_a = [\mathbb{P}(m, y, O^Y = 1\mid a)]_{K \times L} \).
Additionally, we assume that these matrices are not ill-conditioned. 
% Additionally, the agent is assumed to know an upper bound on the infinity norm of the inverse of the matrix :

\begin{restatable}{assumption}{asbounded}[Bounded condition number]\label{ass:mnar_K_assumption}
For each arm \( a \in \mathcal{A} \), the condition number of the matrix \( \Theta_a \) is bounded by:
\[
    \kappa(\Theta_a) \leq C_a,
\]
where \( \kappa(\Theta_a) \) denotes the condition number of \( \Theta_a \) with respect to $\infty$-norm, defined as 
\(
    \kappa(\Theta_a) = \lVert \Theta_a \rVert_\infty \lVert \Theta_a^{\dagger} \rVert_\infty,
\) with $\Theta_a^{\dagger}$ being the pseudo-inverse of $\Theta_a$.
\end{restatable}

We present Algorithm~\ref{alg:mnar_algorithm} for minimizing cumulative regret under the MNAR assumptions. 
The key intuition behind this algorithm is to construct an estimator based on Eq.~\eqref{eq:idmnar} and build upper confidence bounds under \Cref{ass:mnar_K_assumption}.
% In this setting, 
In order to present the regret bound of this algorithm, 
% Using \Cref{ass:mnar_K_assumption}, we derive the following regret bound for Algorithm~\ref{alg:mnar_algorithm}. 
define \( p_{y, a} = \mathbb{P}(Y = y \mid A = a)\), and \(\gamma_{y, a} = \mathbb{P}(O^Y = 1 \mid Y = y, A = a)\).

\begin{restatable}{theorem}{theomnarupper} \label{theo:mnar_upper}
\text{(Regret guarantee for Algorithm~\ref{alg:mnar_algorithm})}  
For every \( \alpha > 1 \), the following regret bound holds for sufficiently large \( T \):
\[
\mathbb{E}[R_T] = O\left( \sqrt{\alpha T \log(T) \sum\limits_{a} S_a^2} \right),
\]
with \( S_a \!=\! \max \{ \frac{L C_a}{\gamma_a \lVert \Theta_a \rVert_\infty },\frac{K}{\gamma_a \sqrt{\sum\limits_{y \in \mathbb{Y}} p_{y, a} \gamma_{y, a}}}
\} \),\( \gamma_a \!=\! \min\limits_{y} \gamma_{y, a} \).
\end{restatable}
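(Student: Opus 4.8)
The plan is to follow the standard two-stage UCB template: first establish a high-probability confidence band $|\hat\mu_a-\mu_a|\le w_a(t)$ for the MNAR estimator, then convert it into a pull-count bound, and finally into a gap-independent (minimax) regret bound. The estimator itself is built directly from the identification in Eq.~\eqref{eq:idmnar}. Writing $b_a\coloneqq[\mathbb{P}(m,O^Y=0\mid a)]_{m}$, the identification reduces to the linear system $b_a=\Theta_a\,\mathrm{OR}_a$ in the vector of inverse odds ratios $\mathrm{OR}_a\coloneqq[\mathrm{OR}_{y,a}]_y$, whose pseudo-solution is $\mathrm{OR}_a=\Theta_a^{\dagger}b_a$; the estimator then assembles $\hat\mu_a=\sum_y y\,(1+\widehat{\mathrm{OR}}_{y,a})\,\hat{\mathbb{P}}(y\mid O^Y=1,a)$ with $\widehat{\mathrm{OR}}_a=\hat\Theta_a^{\dagger}\hat b_a$, where all plug-in quantities are empirical frequencies over the $N_a(t)$ pulls of arm $a$.

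First I would establish concentration of the three empirical building blocks $\hat\Theta_a$, $\hat b_a$, and $\hat{\mathbb{P}}(\cdot\mid O^Y=1,a)$. The entries of $\hat\Theta_a$ and $\hat b_a$ are averages of bounded indicators over all $N_a(t)$ pulls, so Hoeffding yields entrywise errors of order $\sqrt{\alpha\log T/N_a(t)}$. By contrast, $\hat{\mathbb{P}}(\cdot\mid O^Y=1,a)$ is formed only from the \emph{observed} subsample, whose effective size is $N_a(t)\sum_y p_{y,a}\gamma_{y,a}=N_a(t)\,\mathbb{P}(O^Y=1\mid a)$; this is precisely the origin of the factor $\sqrt{\sum_y p_{y,a}\gamma_{y,a}}$ in the denominator of the second term of $S_a$.

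Next I would propagate these errors through the linear solve and the final assembly. Since the $\infty$-norm of a $K\times L$ matrix is its maximum absolute row sum, the entrywise bound lifts to $\|\hat\Theta_a-\Theta_a\|_\infty=O(L\sqrt{\alpha\log T/N_a(t)})$, which explains the factor $L$. The standard perturbation bound for the pseudo-inverse then gives $\|\widehat{\mathrm{OR}}_a-\mathrm{OR}_a\|_\infty\lesssim \|\Theta_a^{\dagger}\|_\infty\big(\|\hat b_a-b_a\|_\infty+\|\hat\Theta_a-\Theta_a\|_\infty\|\mathrm{OR}_a\|_\infty\big)$; using $\|\Theta_a^{\dagger}\|_\infty=\kappa(\Theta_a)/\|\Theta_a\|_\infty\le C_a/\|\Theta_a\|_\infty$ together with $1+\mathrm{OR}_{y,a}=1/\gamma_{y,a}$, hence $\|\mathrm{OR}_a\|_\infty\le 1/\gamma_a$, produces the first term of $S_a$, namely $\tfrac{LC_a}{\gamma_a\|\Theta_a\|_\infty}\sqrt{\alpha\log T/N_a(t)}$. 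Splitting the error of $\hat\mu_a=\sum_y y(1+\widehat{\mathrm{OR}}_{y,a})\hat{\mathbb{P}}(y\mid O^Y=1,a)$ via the triangle inequality into an odds-ratio part (controlled above) and a conditional-distribution part (controlled by the observed-subsample bound, with a factor $K$ from summing over the $K$ mediator cells and a $1/\gamma_a$ from $1+\mathrm{OR}_{y,a}$), and normalizing by $\sum_y|y|=1$, yields the confidence width $w_a(t)=O\big(S_a\sqrt{\alpha\log T/N_a(t)}\big)$ with $S_a$ exactly the stated maximum of the two error sources.

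The remainder is routine. On the good event that all bands hold --- which, thanks to the forced initial exploration of $\log(T)^2$ pulls per arm, holds for sufficiently large $T$ with $\hat\Theta_a$ well-conditioned enough for the perturbation bound to be valid --- a suboptimal arm of gap $\Delta_a$ is pulled at most $O(S_a^2\alpha\log T/\Delta_a^2)$ times, so $\mathbb{E}[N_a(T)]=O(S_a^2\alpha\log T/\Delta_a^2)$ up to the $O(1)$ mass of the failure event. Writing $\mathbb{E}[R_T]=\sum_a\Delta_a\,\mathbb{E}[N_a(T)]$ and bounding each term by $\min\{\Delta_a T,\;O(S_a^2\alpha\log T/\Delta_a)\}$, the minimax bound $\mathbb{E}[R_T]=O(\sqrt{\alpha T\log T\sum_a S_a^2})$ follows by balancing the two cases at $\Delta_a\asymp\sqrt{S_a^2\alpha\log T/T}$ and summing via Cauchy--Schwarz. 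I expect the main obstacle to be the perturbation step: guaranteeing that the empirical matrix $\hat\Theta_a$ inherits a condition number close to $C_a$ after only the forced pulls, and tracking the error terms precisely enough that they separate cleanly into the two components of the $\max$ defining $S_a$ rather than collapsing into a looser product of condition number and inverse observation probability.
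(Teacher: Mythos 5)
Your proposal is correct and follows essentially the same route as the paper's proof: casting the identification as the linear system $\Theta_a x_a = b_a$ in the inverse odds ratios, applying Hoeffding entrywise (with the $\infty$-norm row-sum lift giving the factor $L$, and the observed-subsample size $T_a\sum_y p_{y,a}\gamma_{y,a}$ giving the second term of $S_a$), propagating through a standard linear-system perturbation bound with $\lVert \Theta_a^{\dagger}\rVert_\infty \le C_a/\lVert\Theta_a\rVert_\infty$ and $\lVert x_a\rVert_\infty\le 1/\gamma_a$, splitting the error of $\hat\mu_a$ into the odds-ratio and conditional-distribution parts exactly as the paper does, and finishing with the usual UCB-to-minimax conversion. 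The only cosmetic difference is that the paper cites Higham's perturbation theorem explicitly and closes the gap-independent bound by threshold splitting rather than Cauchy--Schwarz, which yields the same bound.
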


\begin{remark}
With \( \gamma_\mathrm{min} = \min\limits_{y, a} \gamma_{y, a} \) and \( \kappa_\mathrm{max} = \max\limits_{a} \frac{C_a}{\lVert \Theta_a \rVert_\infty } \), \Cref{theo:mnar_upper} implies the following bound:
\[
\mathbb{E}[R_T] = O\left( \sqrt{\alpha T \log(T) N 
\max \{ \frac{L \kappa_\mathrm{max}}{\gamma_\mathrm{min}},\frac{K}{
\gamma_{\mathrm{min}}^{3/2}}}\}^2 \right).
\]

\end{remark}

% The proofs for Theorem~\ref{theo:mnar_upper} is provided in Appendix~\ref{apx:proofs}. Also i
% In Appendix \textcolor{red}{todo}, we show that without \Cref{ass:mnar_K_assumption}, the agent cannot establish a regret bound... \textcolor{red}{add to appendix.}

\section{MAB with Missing Outcome and Mediator}\label{sec:missingmediator}
So far we considered cases where the mediator was fully observable.
We now discuss how our results extend to scenarios involving missing data in both \( Y \) and \( M \).
% We categorize these
% We categorize our analysis into two parts: one for Missing at Random (MAR) and Missing Not at Random (MNAR). 
In this section, we assume that the outcome is MAR, and discuss the cases where the mediator is MAR and MNAR separately.
For the case where both outcome and mediator are MNAR, refer to \Cref{apx:Missing M}.
% Despite the different mechanisms, both scenarios yield analogous regret bounds to our earlier results. 
We begin by outlining each scenario, providing identification schemes and estimators for \( \mu_a \). 
The corresponding algorithms, theoretical results, and proofs are postponed to Appendix~\ref{apx:Missing M}.
\begin{center}
    \begin{figure}[t]
        \begin{center}

    \begin{subfigure}[b]{0.23\textwidth}
        % \centering
                    \begin{tikzpicture}
            \tikzset{
                solid/.style={circle, fill, inner sep=1.5pt},
                every path/.style={thick, >={Latex[round]}}
                }
            \node[solid] (A) at (-1.2,0) {};
            \node[solid] (M) at (0,0) {};
            \node[solid] (Y) at (1.2,0) {};
            \node[solid] (RY) at (1.2,1.2) {};
            \node[solid] (RM) at (0.1,1.2) {};

            \draw[->, bend right=30] (A) to (Y);
            \draw[->] (A) to (M);
            \draw[->] (M) to (Y);
            \draw[->] (M) to (RY);
            \draw[->] (A) to (RY);
            \draw[->] (A) to (RM);
            \draw[->] (RM) to (RY);

            \node[left=1pt of A] {A};
            \node[above=0.5pt of M] {M};
            \node[right=1pt of Y] {Y};
            \node[above=1pt of RY] {$\oo$};
            \node[above=1pt of RM] {$O^M$};
            
        \end{tikzpicture}
        \caption{MAR}
        \label{fig:m,y mar}
    \end{subfigure}
    \begin{subfigure}[b]{0.23\textwidth}
        \centering
        \begin{tikzpicture}
            \tikzset{
                solid/.style={circle, fill, inner sep=1.5pt},
                every path/.style={thick, >={Latex[round]}}
                }
            \node[solid] (A) at (-1.2,0) {};
            \node[solid] (M) at (0,0) {};
            \node[solid] (Y) at (1.2,0) {};
            \node[solid] (RY) at (1.2,1.2) {};
            \node[solid] (RM) at (0.1,1.2) {};

            \draw[->, bend right=30] (A) to (Y);
            \draw[->] (A) to (M);
            \draw[->] (M) to (Y);
            \draw[->] (A) to (RY);
            \draw[->] (M) to (RY);
            \draw[->] (M) to (RM);
            \draw[->] (A) to (RM);
            
            \node[left=1pt of A] {A};
            \node[above left=-2.9pt of M] {M};
            \node[right=1pt of Y] {Y};
            \node[above=1pt of RY] {$\oo$};
            \node[above=1pt of RM] {$O^M$};
            
        \end{tikzpicture}
        \caption{MNAR}
        \label{fig:m,y mnar}
    \end{subfigure}
       
    \caption{Graphical representations of the missing data mechanisms with missing outcome and mediator. }
    
    \end{center}

    \label{fig:missing mechansim2}
\end{figure}
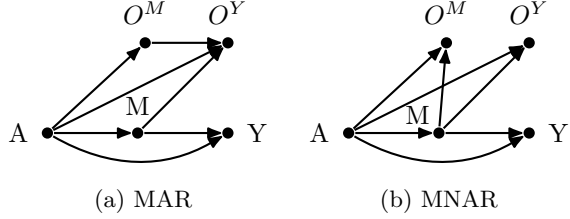

\end{center}
Throughout this section, we work under \Cref{ass:mar_assumption}.

\subsection{MAR}
% In this scenario, the missingness of \( Y \) is independent of its value given the action \( A_t \). Figure~\ref{fig:m,y mar} illustrates the MAR mechanism, which adheres to Assumption~\ref{ass:mar_my_assumption}.
% We work under \Cref{ass:mar_assumption}.
% However, s
Since the mediator values are missing, neither the conditional outcome means nor the probabilities $p_{m,a}$ are identifiable, and we require further structure to make progress.
% We propose two alternatives here.
One such structure arises
% The first scenario arises
when the missingness in the mediator ($\mm$) can be assumed to be at random, i.e., $\mm_t\indep(M_t,Y_t,\oo_t)\mid A_t$.
This assumption is valid for instance when the missingness mechanism for the mediator depends only on the action. 
A less stringent alternative can be formalized as:
% implies a specific case of MAR (\Cref{as:mar2}), where the mediator can be safely ignored and Alg.~\ref{alg:mcar_algorithm} is optimal.
% The other alternative is when the following holds.
\begin{assumption}\label{ass:mar_my_assumption} 
    \(
    \mm_t \indep M_t \mid A_t,
    \) and $\mm_t\indep Y_t\mid (A_t,M_t,\oo_t)$ for all $t\in\{1,\dots,T\} $.
\end{assumption}
See Fig.~\ref{fig:m,y mar} for a graph representation satisfying Assumptions \ref{ass:mar_assumption} and \ref{ass:mar_my_assumption}.
Under these two assumptions, analogous to Eq.~\eqref{eq:idmar}, $\mu_a$ can be identified as follows.
\[
    \begin{split}
        &\mu_a
        = \mathbb{E}\big[  \mathbb{E}[Y_t \mid M, a, \oo_t = 1]\mid A_t=a\big]\\
        &= \mathbb{E}\big[  \mathbb{E}[Y^o_t \mid M, a, \oo_t = 1,\mm_t=1]\mid A_t=a,\mm_t=1\big],
        \end{split}
\]
where the second equation is due to \Cref{ass:mar_my_assumption}.
% where \( n_a \) is the number of times arm \( a \) was pulled and the reward was observed, and \( Y_a(1), \dots, Y_a(n_a) \) are the observed values of \( Y \) corresponding to those pulls. This approach closely resembles the process illustrated in Figure~\ref{fig:semi_mcar}.

\subsection{MNAR}
When the mediator is missing not at random, stronger assumptions are necessary to identify the expected rewards. 
Analogous to Section \ref{sec:mnar}, we will use a completeness assumption.
Here too, the identification strategy follows the approach of \cite{zuo2024mediation}.
% , we make the following two assumption.
% We make Assumption \ref{}
% In this scenario, the missingness of \( Y \) is independent of its value given \( (M_t, A_t) \). Also, we will have \( Y, O^Y, O^M \) are mutually independent given \( (M_t, A_t) \). Figure~\ref{fig:m,y mnar} illustrates the MNAR (i) mechanism, which follows Assumption~\ref{ass:mnar_my_assumption}.
\begin{assumption}\label{ass:mnar_my_assumption} 
    $Y_t$, $\oo_t$ and $\mm_t$ are mutually independent conditioned on $(A_t,M_t)$ for all $t\in\{1,\dots,T\}$.
    % \begin{enumerate}
    %     \item \( Y_t \indep O^Y_t \mid (A_t, M_t) \),
    %     \item \( Y_t \indep O^M_t \mid (A_t, M_t) \),
    %     \item \( O^Y_t \indep O^M_t \mid (A_t, M_t) \).
    % \end{enumerate}
\end{assumption}
\begin{assumption}\label{as:complete2}
    For every $a\in\mathcal{A},m\in\mathcal{M}$, $\mathbb{P}(M = m, Y, O^M = 1, O^Y = 1 \mid a)$ is complete in $Y$.
    That is, for any function $g:\mathcal{Y}\to \mathbb{R}$,
    \[\int_{\mathcal{Y}}\mathbb{P}(M = m, Y=y, O^M = 1, O^Y = 1 \mid a) g(y)dy=0\]
    implies $g(Y)=0$ with probability one.
\end{assumption}
Under Assumption~\ref{ass:mnar_my_assumption}, \(\mu_a\) can be expressed as:
\begin{align*}
    \mu_a &
    = \sum_{m \in \mathbb{M}}  \mathbb{E}[Y \mid a, m]p_{m,a} \\
    &= \sum_{m \in \mathbb{M}} \mathbb{E}[Y \mid a, m, O^Y = 1, O^M = 1]p_{m,a}.
\end{align*}

To proceed, we need to identify \( p_{m,a}=\mathbb{P}(M = m \mid A = a) \). 
This is achieved through \Cref{as:complete2}:
% when \( \mathbb{P}(M = m, Y = y, O^M = 1, O^Y = 1 \mid a) \) is complete in \( \mathbb{Y} \) for every \( a \) and \( m \).
% We derive the following:
\begin{align*}
    &\mathbb{P}(Y = y, O^M = 0, O^Y = 1 \mid a) \\
    &= \sum_{m \in \mathbb{M}} \mathbb{P}(M = m, Y = y, O^M = 0, O^Y = 1 \mid a) \\
    &= \sum_{m \in \mathbb{M}} \mathbb{P}(M = m, Y = y, O^M = 1, O^Y = 1 \mid a) \\
    &\quad \times \frac{\mathbb{P}(O^M = 0 \mid M = m, A = a)}{\mathbb{P}(O^M = 1 \mid M = m, A = a)},
\end{align*}
where we used \Cref{ass:mnar_my_assumption} in the last equation.
By \cref{as:complete2}, the inverse odds ratios $\mathrm{OR}_{m,a}=\frac{\mathbb{P}(O^M = 0 \mid m, a)}{\mathbb{P}(O^M = 1 \mid m, a)}$ can be uniquely determined.
Finally,
% By leveraging the completeness of \( \mathbb{P}(M = m, Y = y, O^M = 1, O^Y = 1 \mid a) \), we can identify \( \mathbb{P}(O^M = 1 \mid M = m, A = a) \). Hence, we obtain:
\[
\begin{split}
    p_{m,a} &= \frac{\mathbb{P}(M = m, O^M = 1 \mid A = a)}{\mathbb{P}(O^M = 1 \mid A = a, M = m)},\\
    &=(1+\mathrm{OR}_{m,a})\mathbb{P}(M = m, O^M = 1 \mid A = a).
\end{split}
\] 
We use a two-step estimation process, whereby in the first step, $\hat{p}_{m,a}$ is estimated, and in the second step, the expected reward is estimated as
% The estimation process begins by identifying \( p_{m, a} = \mathbb{P}(M = m \mid A = a) \) using above formula. Once \( p_{m, a} \) is identified, \( \mu_a \) can be estimated as:
\[
\hat{\mu}_a = \sum_{m \in \mathcal{M}} \hat{p}_{m, a} \hat{\mu}_{m, a}
\]
where \( \hat{\mu}_{m, a} \) is the empirical mean of the samples \( Y_t \), obtained after pulling arm \( a \), conditioned on \( M_t=m \) with both \( O^M = 1 \) and \( O^Y = 1 \).
Here, we require \( Y_t \) to be finite-valued, analogous to Section \ref{sec:missingoutcome}.

\section{Empirical Evaluation}
\label{sec:empirical-evaluation}

Here, we provide an empirical evaluation of our MAB algorithms across different missing data scenarios -- MCAR, MAR(i,ii), and MNAR. 
All our simulations were run on Google Colab\footnote{\hyperlink{https://colab.google}{https://colab.google}} with Intel Xeon CPUs. The python codes for reproducing our experimental results are publicly available at the repository\footnote{\url{https://github.com/ilia-mahrooghi/Multi-armed-Bandits-with-Missing-Outcome}}.
We model the MAB environment in all the aforementioned settings with \( n = 10 \) arms.
More comprehensive simulation results are provided in Appendix \ref{appendix:additional_simulations}.
% The goal of these experiments is to demonstrate that the algorithms' performance in practice aligns with the theoretical results established earlier. By analyzing the cumulative regret under each missing data mechanism, we aim to show how the nature of missing data affects the algorithm's ability to make optimal decisions in environments with incomplete reward information.

\subsection{Experiment Setup of MCAR}
Each arm \( a \in \{1, \dots, n\} \) has an associated mean reward \( \mu_a \), sampled independently from a uniform distribution over the interval \([0, 1]\).
% , i.e., \( \mu_a \sim \text{Uniform}(0, 1) \). 
The observation probability \( \gamma \)
% , which is the probability that a reward is observed after pulling an arm, 
is randomly drawn from a uniform distribution over \([0.5, 1.0]\).
% , such that \( \gamma \sim \text{Uniform}(0.5, 1.0) \). This probability remains constant for all arms and defines the likelihood that a reward is observed after pulling any arm.
At each time \( t \), when arm \( a \) is pulled, the reward \( Y_t \) is generated from a normal distribution \( \mathcal{N}(\mu_a, 1) \).
% , where \( \mu_i \) is the true mean of the selected arm and the variance is fixed at 1. 
%The observed reward is available with probability \( \gamma \); otherwise, the reward is not observed (i.e., missing).
Algorithm~\ref{alg:mcar_algorithm}'s performance is reported across 20 independent runs in the MCAR environment over a time horizon of \( T = 10{,}000 \) iterations, with a fixed parameter \( \alpha = 2 \). Figure~\ref{fig:MCAR-1} depicts the cumulative regret for different \( \gamma \) values. As expected, when \( \gamma \) decreases, the regret grows more rapidly as a consequence of lower observation likelihood.
% \textcolor{red}{TODO: refer to figure, mention the effect of $\gamma$}

% \begin{figure}[h]
% \vspace{-.1in}
% \centerline{\includegraphics[width=0.45\textwidth]{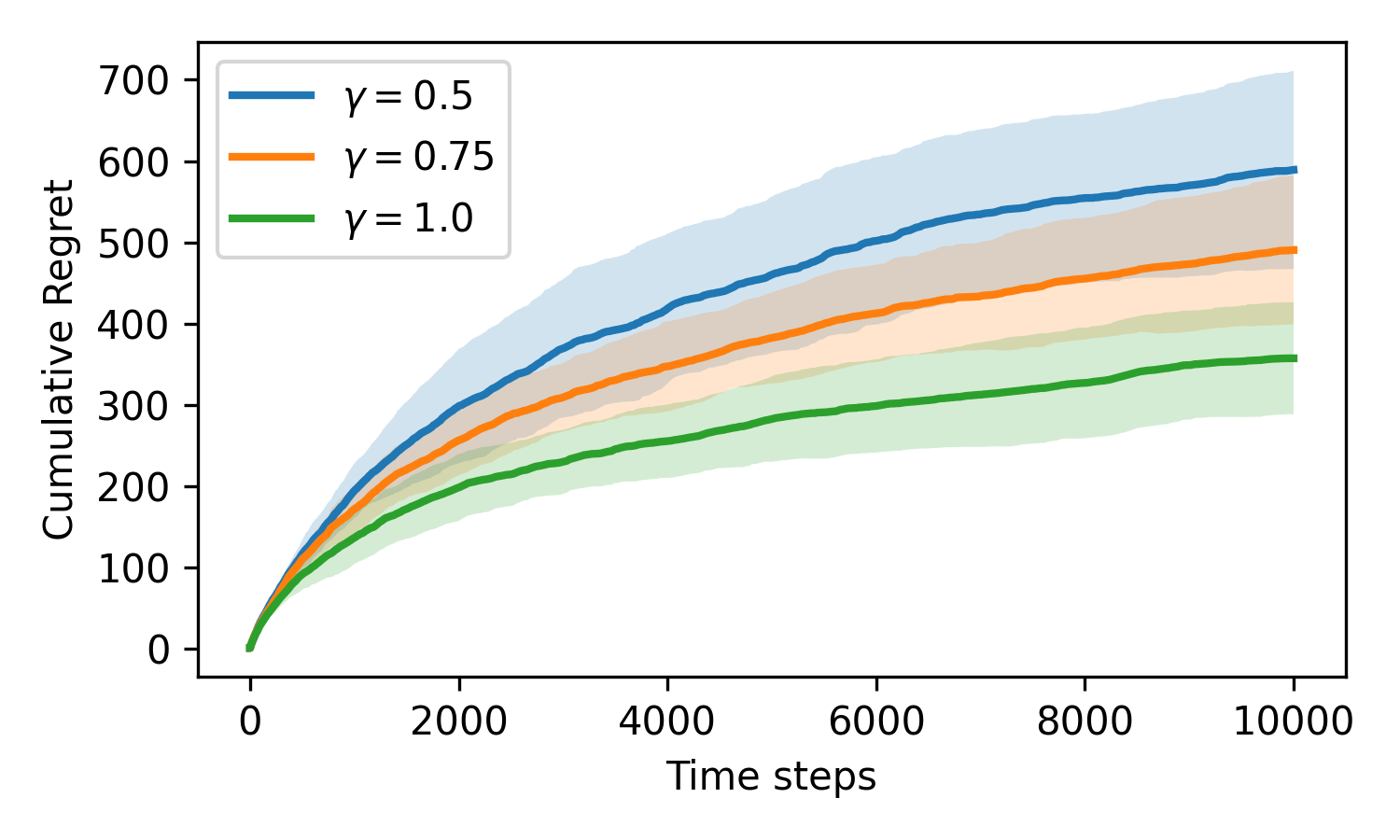}}
% \vspace{-.1in}
% \caption{Cumulative regret of the MCAR algorithm tested on the MCAR bandit environment with various \( \gamma \) values.}
% \end{figure}

\begin{figure*}[t]
    \centering
    \begin{subfigure}[b]{0.31\textwidth}
        \includegraphics[width=\textwidth]{figures/MCAR-1.png}
        \caption{MCAR algorithm on MCAR bandit with various \( \gamma \) values.}
        \label{fig:MCAR-1}
    \end{subfigure}
    \hfill
    \begin{subfigure}[b]{0.31\textwidth}
        \includegraphics[width=\textwidth]{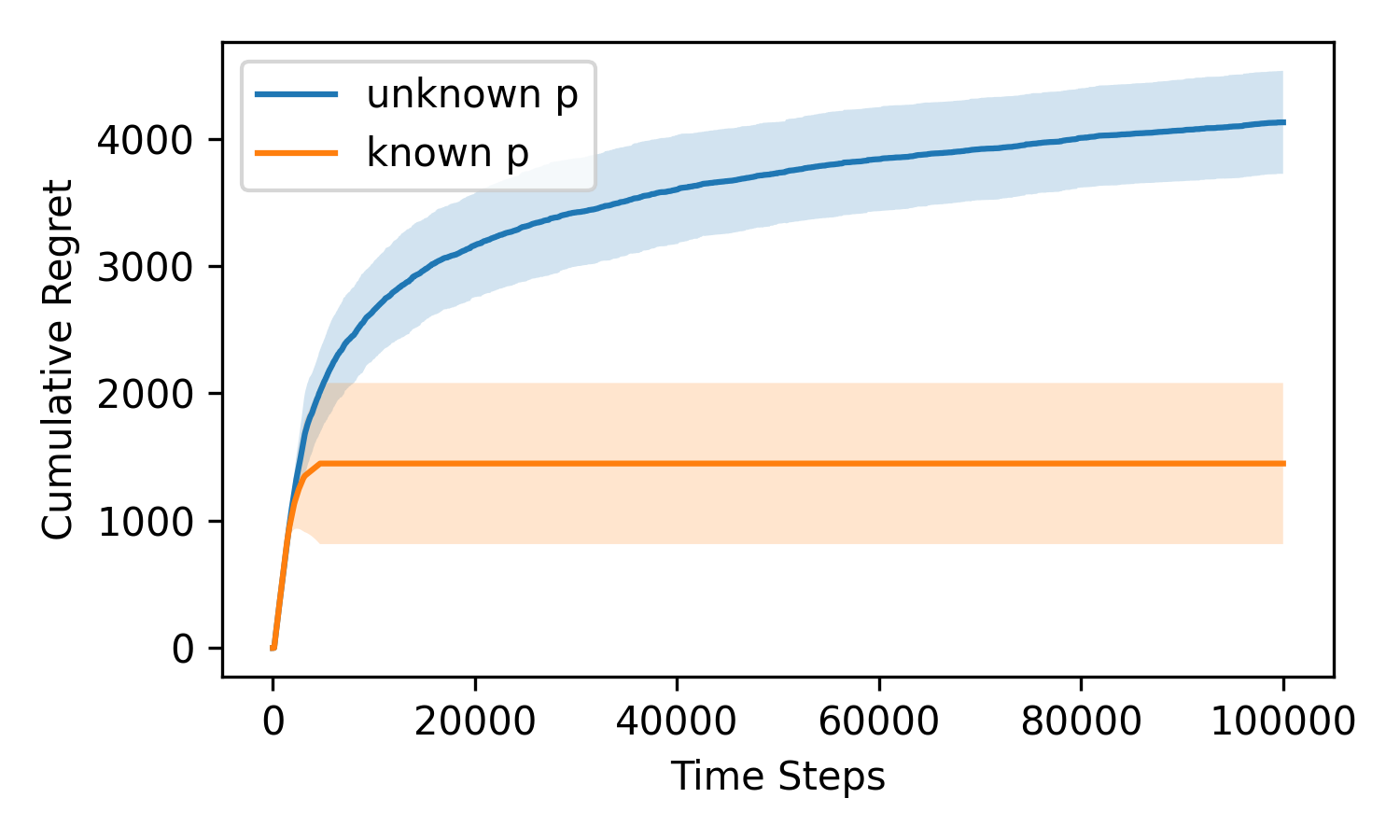}
        \caption{MAR algorithms with known and unknown p for comparison.}
        \label{fig:MAR-1}
    \end{subfigure}
    \hfill
    % \vspace{0.3cm} % Adds vertical space between rows
    \begin{subfigure}[b]{0.31\textwidth}
        \includegraphics[width=\textwidth]{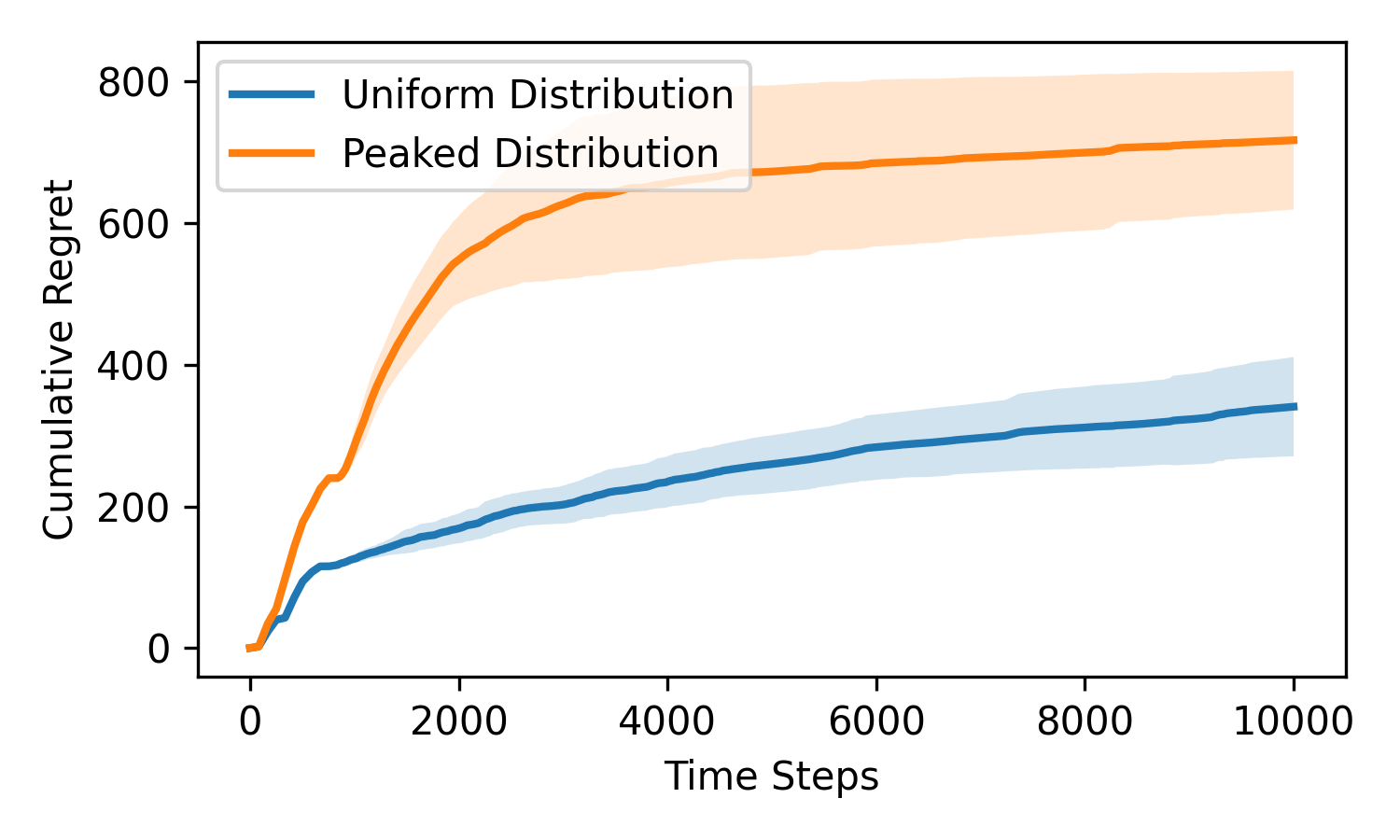}
        \caption{MAR algorithm with different p initializations on MAR environment.}
        \label{fig:MAR-2}
    \end{subfigure}
    % \hfill
    \begin{subfigure}[b]{0.31\textwidth}
        \includegraphics[width=\textwidth]{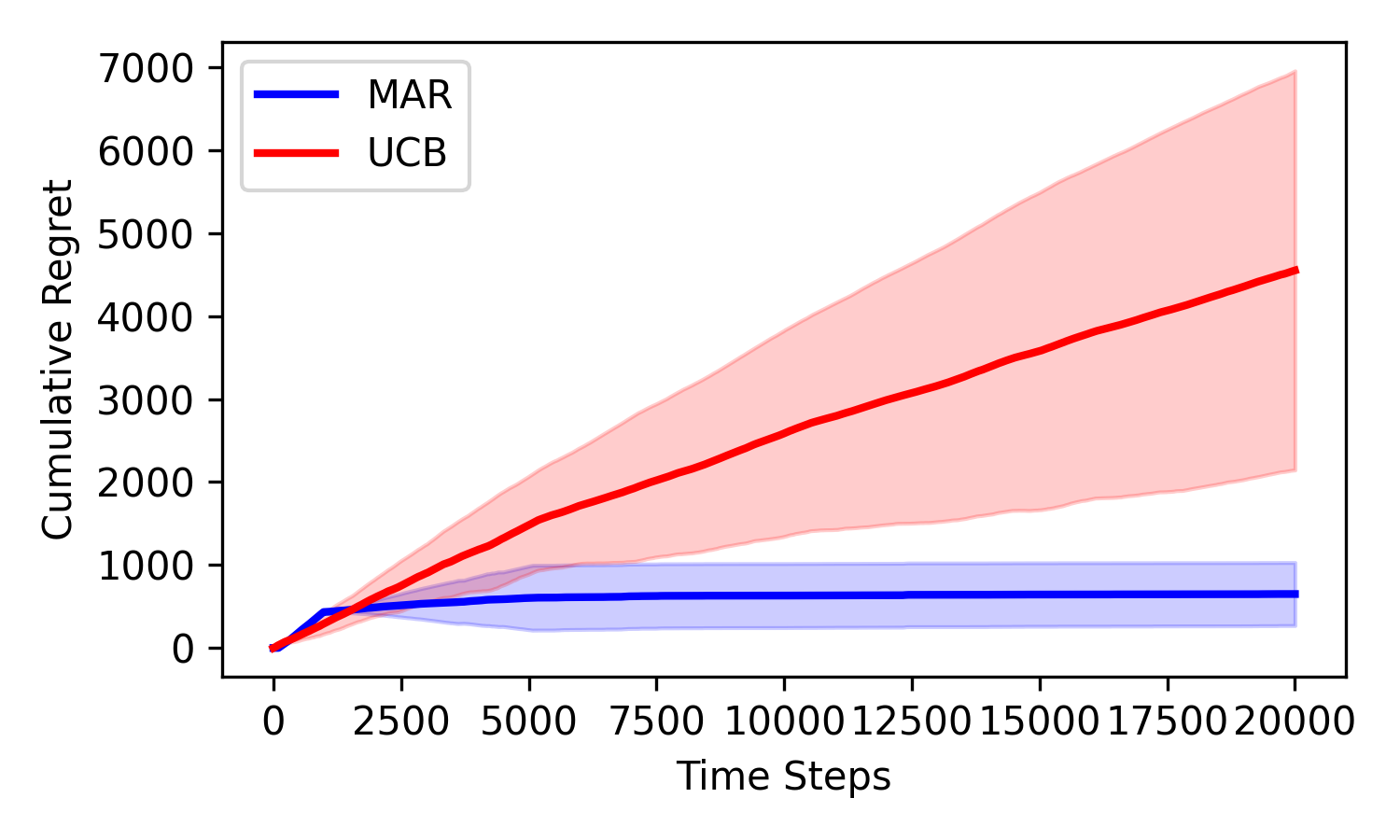}
        \caption{MAR and UCB algorithms in the MAR bandit environment.}
        \label{fig:MAR-3}
    \end{subfigure}
    \hspace{1cm}
    \begin{subfigure}[b]{0.31\textwidth}
        \includegraphics[width=\textwidth]{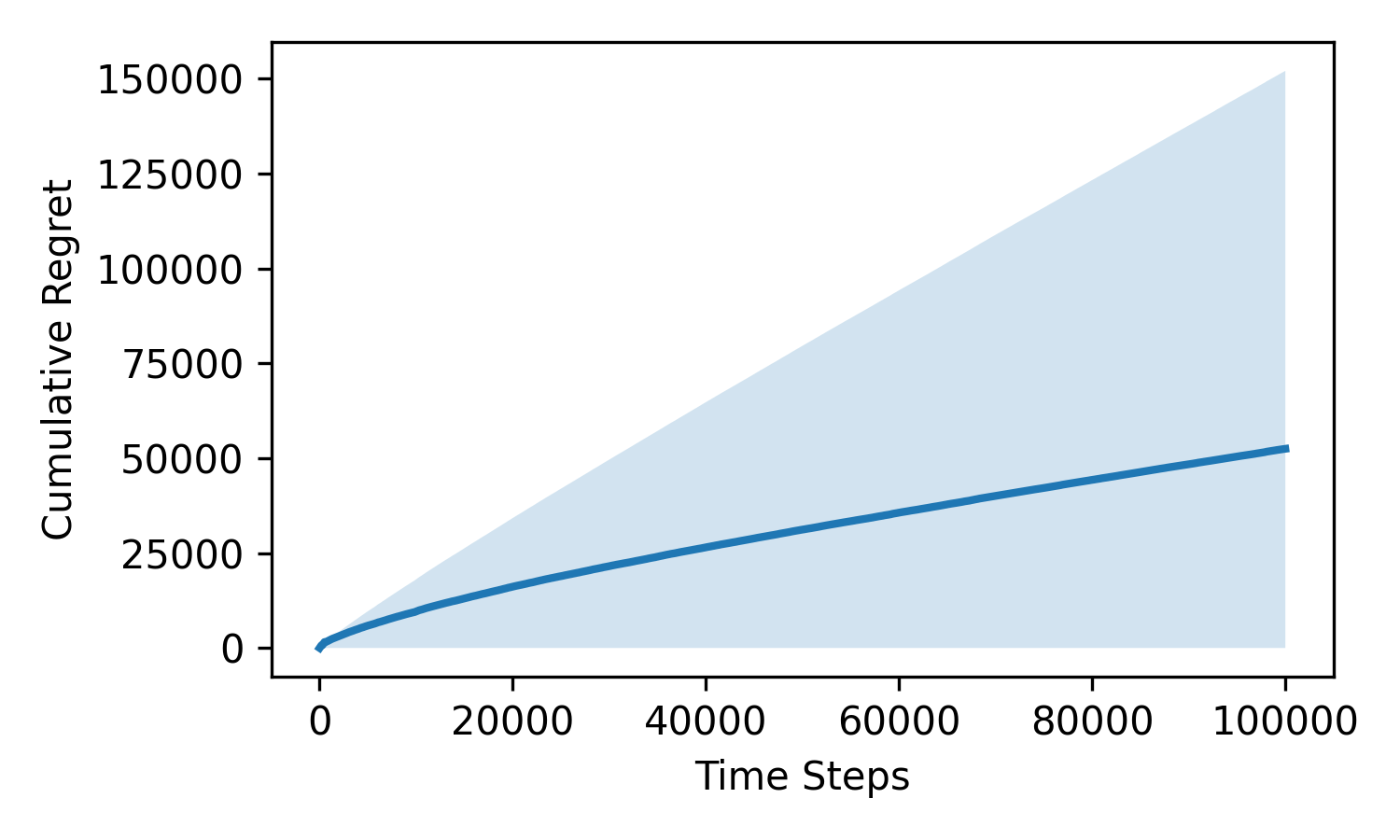}
        \caption{Performance of the MNAR algorithm in the described environment.}
        \label{fig:MNAR-1}
    \end{subfigure}
    \hfill
    \caption{Results corresponding to MCAR, MAR, and MNAR settings. The shaded regions represent the error bars, showing one standard deviation across multiple runs of the simulations.}
    \label{fig:combined}
\end{figure*}

\subsection{Experiment Setup of MAR}

The MAB environment is again modeled with \( n = 10 \) arms but with \( K = 5 \) possible mediator values. 
The expected reward for all arms is determined by \( \{\mu_{m,a}\}_{m,a} \in \mathbb{R}^{n \times K} \), where \( \mu_{m,a} \) represents the mean reward for arm \( a \) when the mediator takes value \( m\). 
The latter reward matrix is chosen by sampling each \( \mu_{m,a} \) independently from a uniform distribution over \([0, 0.4]\). To ensure the first arm is the optimal one, an additional 0.6 is added to its corresponding value.
In addition to the reward structure, the observation mechanism is defined by a matrix \(\{\gamma_{m,a}\}_{m,a} \in \mathbb{R}^{n \times K} \), where each \( \gamma_{m,a} \) is sampled independently from a uniform distribution over \([0.8, 1]\).
% , so \( \gamma_{i,j} \sim \text{Uniform}(0.8, 1.0) \). The value \( \gamma_{i,j} \) represents the probability that the reward for arm \( i \), mediator \( j \), will be observed.

For each arm \( a \), a categorical probability distribution \( \{p_{m,a}\}_{m} \in \mathbb{R}^{K} \) is defined over the \( K \) values of $M$.
% , where \( \sum_{j=1}^{k} p_{i,j} = 1 \). 
This distribution is drawn from a Dirichlet distribution, i.e., \( \{p_{m,a}\}_{m} \sim \text{Dirichlet}(\mathbf{1}_K) \). 
% When an arm \( i \) is selected, mediator's value \( m \) is set according to the probability measure \( \mathbf{p}_i \).
Upon pulling arm $a$ and the mediator taking value $m$,  reward \( Y_t \) is drawn from a normal distribution \( \mathcal{N}(\mu_{m,a}, 1) \), where \( \mu_{m,a} \) is the mean reward for arm \( a \) when mediator takes value \( m \). 
The reward is observed with probability \( \gamma_{m,a} \).
% and if it is not observed, it is treated as missing data.
We ran Algorithms~\ref{alg:mar_algorithm2} and~\ref{alg:mnar_algorithm} over a time horizon of \( T = 100,000 \). Their cumulative regret was averaged across 10 independent runs.  
As shown in Fig.~\ref{fig:MAR-1}., knowing conditional probabilities $p_{m,a}$ in advance improves the cumulative regret, as expected.
% \begin{figure}[h]
% \vspace{-.1in}
% \centerline{\includegraphics[width=0.45\textwidth]{figures/MAR-3.png}}
% \vspace{-.1in}
% \caption{Sample Figure Caption}
% \end{figure}
% In the other simulations the reward means \( \mu_{m,a} \) were sampled from \([0, 0.8]\), with an additional bias of \( +0.2 \) applied to the first arm. Observation probabilities \( \gamma_{m,a} \) were drawn from \([0.2, 1.0]\).
% 
% \begin{itemize}
%     \item \textbf{Uniform Distribution:} , \( p_{i,j} = \frac{1}{k} \).
%     \item \textbf{Peaked Distribution:} One mediator per arm has a higher probability, using a Dirichlet distribution biased by \( \alpha = 5 \) for the chosen mediator.
% \end{itemize}
% Each strategy was run for \( T = 100,000 \) iterations, repeated 20 times. We recorded the cumulative regret for each run and averaged the results.

Fig.~\ref{fig:MAR-2} demonstrates the average cumulative regret of the MAR algorithm with different probability distributions over the mediator.
% does not differ significantly. However, t
In particular, 
two mediator value selection strategies were tested:
(i) uniform, where each mediator value has an equal probability, and (ii) a peaked distribution, where one mediator per arm has a higher probability, using a Dirichlet distribution biased by \( \alpha = 5 \) for the chosen mediator.
The peaked distribution results in a higher cumulative regret, which aligns with the result from Theorem~\ref{theo:mar_lower_1}, since \( S \) is maximized when the probability distribution is concentrated on the largest \( \gamma_{m, a} \).

% \begin{figure}[h]
% \vspace{-.1in}
% \centerline{\includegraphics[width=0.45\textwidth]{figures/MAR-1.png}}
% \vspace{-.1in}
% \caption{MAR algorithm with different P initializations on MAR environment.}
% \end{figure}

In Figure~\ref{fig:MAR-3}, we compare the performance of the UCB and MAR algorithms in the MAR bandit environment. The results illustrate that the cumulative regret of the UCB algorithm is consistently higher than that of the MAR algorithm. Notably, the regret of the UCB algorithm exhibits a near-linear growth, as a result of the bias in its estimation of the reward. This bias is due to the failure to account for the mediator structure. In contrast, the MAR algorithm, which explicitly utilizes mediators to handle missingness, achieves accurate reward estimation and a significantly lower regret.

% \begin{figure}[h]
% \vspace{-.1in}
% \centerline{\includegraphics[width=0.45\textwidth]{figures/MAR-2.png}}
% \vspace{-.1in}
% \caption{ggggg.}
% \end{figure}

\subsection{Experiment Setup of MNAR}
The MNAR algorithm was evaluated in an environment with \( n = 10 \) arms, \( K = 5 \) mediators, and \( \vert\mathcal{Y}\vert = 5 \) possible outcomes, over a horizon of \( T = 100{,}000 \), repeated 10 times. For each arm \( a \) and mediator \( m \), the reward function followed a categorical distribution sampled from a Dirichlet distribution, except for one arm which was sampled from a biased Dirichlet distribution. The bias was applied to the largest \( y \in \mathcal{Y} \), ensuring that this arm had a higher expected reward. 
The observation probabilities \( \gamma_{y,a} \) were drawn from a uniform distribution over \([0.5, 1.0]\), while the mediator probabilities were sampled from a Dirichlet distribution.
% To ensure more interpretable results, the values of \( \mathcal{Y} \) were set as \( [300, 1000, 400, 300, 200] \), but for the cumulative regret calculations, they were normalized to \( [0, 1] \). Moreover, the values of \( C_a \) from Assumption~\ref{ass:mnar_K_assumption} were set to the condition numbers.
Fig.~\ref{fig:MNAR-1} shows that the algorithm successfully adapts to the MNAR setup, effectively minimizing the cumulative regret. 

\section{Limitations and Concluding Remarks}\label{sec:conc}
We studied multi-armed bandits with missing outcomes and adapted UCB algorithms to incorporate missingness.
Our approaches extend the applicability of MAB algorithms to a wider range of real-world online decision-making problems.
We expect that the insights given by this paper will help researchers to develop and adapt other existing decision-making algorithms to take missingness into account.
We assumed that the auxiliary (mediator) $M$ takes values in a finite set.
Parametric (or semiparametric) models can be adopted to relax this assumption in the future.
We further acknowledge that estimating the odds ratios through integral equations in the MNAR setting presents significant challenges, both in terms of computational complexity and sample efficiency. Hence, we have postponed the problem of MAB with continuous outcomes missing not at random to future work.
% \textcolor{red}{TODO: what are the limitations?}
% \input{sample_fig.tex}
\bibliographystyle{plainnat}
\bibliography{bibliography}
% \begin{refsection}
%     \nocite{*}
%     \printbibliography[heading=bibintoc, title={References}]
% \end{refsection}

\clearpage
\appendix
\onecolumn
\begin{center}
    \Large\bfseries
    \vspace{-1em}Appendix\vspace{-.1em}
\end{center}
This appendix is organized as follows.
Section \ref{apx:alg} includes the omitted algorithms referred to in the main text.
Section \ref{apx:proofs} includes the technical proofs of our results. 
Section \ref{apx:Missing M} provides our algorithm for the case where both the outcome and the mediator are missing not at random (MNAR) along with the regret analysis and proofs.
Finally, Section \ref{appendix:additional_simulations} includes additional empirical evaluation results.
\vspace{-.5em}
\section{Main Algorithms}\label{apx:alg}
\begin{algorithm}[H]
\caption{MCAR algorithm}\label{alg:mcar_algorithm}
\begin{algorithmic}[1]
\State \textbf{Input:} Number of arms $n$, time horizon $T$, $\alpha \geq 1$
\State Initialize: $\hat{\mu}_a = 0$ for all arms $a = 1, 2, \dots, n$ \Comment{initial mean reward estimate for each arm}
\State Set: $T_{a, o} = 0$ for all arms $a = 1, 2, \dots, n$ \Comment{number of times each arm is pulled and reward is observed}

\For{each round $t = 1, 2, \dots, T$}
    \For{each arm $a = 1, 2, \dots, n$}
        \State $\text{UCB}_a(t) = \hat{\mu}_a + \sqrt{\frac{\alpha \log(T)}{2 T_{a, o}}}$

    \EndFor
    \State Select arm $a_t = \arg \max_a \text{UCB}_a(t)$
    \State Pull arm $a_t$ and observe reward $r_t$
    \If{reward is observed}
        \State Update $n_{a_t}$ and $\hat{\mu}_{a_t}$
    \EndIf
\EndFor
\end{algorithmic}
\end{algorithm}
\begin{algorithm}[H]
\caption{MAR Algorithm with known \( p_{m,a} \)}\label{alg:mar_algorithm}
\begin{algorithmic}[1]
\State \textbf{Input:} Number of arms $n$, time horizon $T$, exploration parameter $\alpha$
\State \textbf{Initialize:}  
\For{each arm $a \in [n]$ and $m \in \mathbb{M}$} 
\State $\hat{\mu}_{m, a} = 0$ \Comment{estimated mean reward for arm $a$ when $M = m$} 
\State $T_{m, a, o} = 0$ \Comment{number of times arm $a$ is pulled with $M = m$ and reward observed} 
\State $s_{m, a} = 0$ \Comment{number of times $M = m$ was observed for arm $a$}
\EndFor

\For{each arm $a \in [n]$} \For{$\log(T)^2$ rounds}
    \State Pull arm $a$, observe $m$ and reward $r$
    \State Update $s_{m, a}$ for observed $M = m$
    \If{reward is observed} \State Update $T_{m, a, o}$ and $\hat{\mu}_{m, a}$ \EndIf
\EndFor \EndFor

\State $T_1 = n \log(T)^2 $, $T_2 = T - T_1$

\For{each round $t = 1, \dots, T_2$}
    \For{each arm $a \in [n]$}
        \State Compute $\hat{\mu}_{a} = \sum\limits_{m \in [K]} p_{m, a} \hat{\mu}_{m, a}$ \Comment{estimated mean reward for arm $a$}
        \State Compute $\text{UCB}_a(t) = \hat{\mu}_a + \sqrt{\frac{\alpha \log(T)}{2} \sum\limits_{m \in \mathbb{M}} \frac{p_{m, a}^2}{T_{m, a, o}}}$ \Comment{Upper Confidence Bound for arm $a$}
    \EndFor
    \State Select arm $a_t = \arg \max_a \text{UCB}_a(t)$
    \State Pull arm $a_t$, observe $m$ and reward $r_t$
    \State Update $s_{m,a_t}$ and, if reward is observed, update $n_{a_t, m}$ and $\hat{\mu}_{a_t, m}$
\EndFor

\end{algorithmic}
\end{algorithm}

\clearpage
\begin{algorithm}[H]
\caption{MAR Algorithm with unknown \( p_{m,a} \)}\label{alg:mar_algorithm2}
\begin{algorithmic}[1]
\State \textbf{Input:} Number of arms $n$, time horizon $T$, exploration parameter $\alpha$
\State \textbf{Initialize:} 
\For{each arm $a \in [n]$ and $m \in \mathbb{M}$} 
\State $\hat{\mu}_{m, a} = 0$ \Comment{estimated mean reward for arm $a$ when $M = m$} 
\State $T_{m, a, o} = 0$ \Comment{number of times arm $a$ is pulled with $M = m$ and reward observed} 
\State $s_{m, a} = 0$ \Comment{number of times $M = m$ was observed for arm $a$}
\EndFor

\For{each arm $a \in [n]$} \For{$\log(T)^2$ rounds}
    \State Pull arm $a$, observe $m$ and reward $r$
    \State Update $s_{m, a}$ for observed $M = m$
    \If{reward is observed} \State Update $T_{m, a, o}$ and $\hat{\mu}_{m, a}$ \EndIf
\EndFor \EndFor

\State $T_1 = n \log(T)^2$, $T_2 = T - T_1$

\For{each round $t = 1, \dots, T_2$}
    \For{each arm $a \in [n]$}
        \State Estimate \( \hat{p}_{m,a} = \frac{s_{m,a}}{T_a} \) for each $m \in \mathbb{M}$
        % \vspace{0.5em}
        \State Compute $\hat{\mu}_{a} = \frac{1}{T_a} \sum_t\sum_{m\in\mathcal{M}}\hat{\mu}_{m,a}  \mathbbm{1}\{M_t=m\}$
        \State Compute $\text{UCB}_a(t) = \hat{\mu}_a + 8\sqrt{\frac{\alpha \log(T)}{2} \sum\limits_{m \in \mathbb{M}} \frac{\hat{p}_{m, a}^2}{T_{m, a, o}}}$
    \EndFor
    \State Select arm $a_t = \arg \max_a \text{UCB}_a(t)$
    \State Pull arm $a_t$, observe $m$ and reward $r_t$
    \State Update $s_{m,a_t}$ and, if reward is observed, update $n_{m, a_t}$ and $\hat{\mu}_{m, a_t}$
\EndFor

\end{algorithmic}
\end{algorithm}

\begin{algorithm}[H]
\caption{MNAR Algorithm}\label{alg:mnar_algorithm}
\begin{algorithmic}[1]
\State \textbf{Input:} Number of arms $n$, time horizon $T$, exploration parameter $\alpha$
\State \textbf{Initialize:} 
\For{each arm $a \in [n]$ and $m \in \mathbb{M}$} 
    \State Set $b_{m, 0 | a} = 0$ \Comment{Estimation of \( \mathbb{P}(M = m, O^Y = 0 \mid a) \)}
    \State Set $\hat{\theta}_{a} = [0]_{k \times L}$ \Comment{Estimation of matrix \( \theta_a[m, y] = \mathbb{P}(m, y, O^Y = 1 \mid a) \)}
    \State Set $q_{m, y \mid 1, a} = 0$ \Comment{Estimation of \( \mathbb{P}(M = m, Y = y \mid a, O^Y = 1) \)}
    \State Set $T_{a} = 0$ \Comment{Count of pulls of arm $a$}
    \State Set $T_{a, o} = 0$ \Comment{Count of pulls of arm $a$ with observed reward}
\EndFor

\For{each arm $a \in [n]$} 
    \For{$\log(T)^2$ rounds}
        \State Pull arm $a$, observe mediator $m$ and reward $y$
        \State Update $T_a$ 
        \If{reward is observed} 
            \State Update $T_{a, o}$, $\hat{\theta}_{a}[m, y]$, and $q_{m, y \mid 1, a}$ 
        \Else
            \State Update $b_{m, 0 | a}$
        \EndIf
    \EndFor
\EndFor

\State Set $T_1 = n \log(T)^2$ and $T_2 = T - T_1$

\For{each round $t = 1, \dots, T_2$}
    \For{each arm $a \in [n]$}
        \State Solve $x_a = \hat{\theta_a}^{\dagger} b_a$ 
        \State Update $x_a = x_a + [1]_{L \times 1}$
        \State Compute $\hat{p}(m, y) = x_a[y] \times q_{m, y \mid 1, a}$ 
        \State Compute $\hat{p}(y) = \sum\limits_{m \in \mathbb{M}} \hat{p}(m, y)$ 
        \State Compute $\hat{\mu}_a = \sum\limits_{y \in \mathbb{Y}} y \times \hat{p}(y)$ 
        \State Compute $\hat{\gamma}_a = \frac{1}{\max\limits_{y \in \mathbb{Y}}(x_a[y])}$ 
        \State Compute $\text{UCB}_a = \hat{\mu}_a + 8\frac{L C_a}{\lVert \hat{\theta}_a \rVert_\infty \hat{\gamma}_a}\sqrt{\frac{\alpha \log(T)}{T_a}} + \frac{K}{\hat{\gamma}_a}\sqrt{\frac{\alpha \log(T)}{T_{a, o}}}$ 
    \EndFor
    \State Select arm $a_t = \arg \max_a \text{UCB}_a(t)$ 
    \State Pull arm $a_t$, observe $m$ and reward $y_t$
    \State Update $T_{a_t}$ 
    \If{reward is observed} 
        \State Update $n_{a_t}$, $\hat{\theta}_{{a_t}}[m, y]$, and $q_{m, y \mid 1, {a_t}}$ 
    \Else
        \State Update $b_{m, 0 | {a_t}}$
    \EndIf
\EndFor

\end{algorithmic}
\end{algorithm}

\clearpage
\section{Technical Proofs}\label{apx:proofs}
\paragraph{Double Robustness of AIPW estimator.}
Following \nameref{discussion2} in Section \ref{subsec:mar}, let $\hat{\gamma}_{m,a}$ and $\hat{\mu}_{m,a}$ be models for $\gamma_{m,a}$ and $\ex{Y^o_t\mid m,a,\oo_t=1}$, respectively.
Define
\begin{equation}\label{eq:dre}
\begin{split}
    \hat{\mu}_a &= \mathbb{E}\big[
        \sum_{m\in\mathcal{M}}\frac{\mathbbm{1}\{M_t=m\}}{\hat{\gamma}_{m,a}}\big(
        Y^o_t\mathbbm{1}\{\oo_t=1\}
        -
        (\mathbbm{1}\{\oo_t=1\}-\hat{\gamma}_{m,a})\hat{\mu}_{m,a}
        \big)
        \mid A_t=a
    \big],
\end{split}
\end{equation}
as an estimator for $\mu_a$ of Eq.~\eqref{eq:dr}.
Herein, we prove that $\hat{\mu}_a$ is \emph{doubly robust}, in the sense that if either of the missingness probability models ($\hat{\gamma}_{m,a}$) or the outcome regression models ($\hat{\mu}_{m,a}$), but not necessarily both, are correctly specified, then $\hat{\mu}_a$ of Eq.~\eqref{eq:dre} is consistent for $\mu_a$ of Eq.~\eqref{eq:dr}.
We discuss the two cases separately:

Case (i): the missingness probabilities are correctly specified; i.e., $\hat{\gamma}_{m,a}=\gamma_{m,a}$.
In this case,
\[
\begin{split}
&\ex{\sum_{m\in\mathcal{M}}(\frac{\mathbbm{1}\{\oo_t=1\}}{\hat{\gamma}_{m,a}}-1)\hat{\mu}_{m,a}\mathbbm{1}\{M_t=m\}\mid A_t=a}
\\&\overset{(a)}{=}
\sum_{m\in\mathcal{M}}\ex{(\frac{\mathbbm{1}\{\oo_t=1\}}{{\gamma}_{m,a}}-1)\hat{\mu}_{m,a}\mathbbm{1}\{M_t=m\}\mid A_t=a}
\\&\overset{(b)}{=}
\sum_{m\in\mathcal{M}}\ex{(\frac{\mathbbm{1}\{\oo_t=1\}}{{\gamma}_{m,a}}-1)\mid A_t=a,M_t=m}
\hat{\mu}_{m,a}p_{m,a}
\\&\overset{(c)}{=}
\sum_{m\in\mathcal{M}}(\frac{\gamma_{m,a}}{{\gamma}_{m,a}}-1)
\hat{\mu}_{m,a}p_{m,a}\\&=0,
\end{split}
\]
where $(a)$ is due to $\hat{\gamma}_{m,a}$ being correctly specified,
$(b)$ is an application of the law of total expectation, and $(c)$ is by definition of $\gamma_{m,a}=\ex{\oo_t\mid A_t=a,M_t=m}$.
As a result, we get
\[
\hat{\mu}_a= \mathbb{E}\big[
        \sum_{m\in\mathcal{M}}\frac{\mathbbm{1}\{M_t=m\}}{{\gamma}_{m,a}}
        Y^o_t\mathbbm{1}\{\oo_t=1\}\mid A_t=a
    \big],
\]
which matches Eq.~\eqref{eq:ht}, and therefore $\hat{\mu}_a$ is consistent for $\mu_a$.

Case (ii): the outcome regression models are correctly specified; i.e., 
$\hat{\mu}_{m,a}=\ex{Y^o_t\mid m,a,\oo_t=1}$.
Then,
\[\begin{split}
    &\mathbb{E}\big[
        \sum_{m\in\mathcal{M}}\frac{\mathbbm{1}\{M_t=m\}}{\hat{\gamma}_{m,a}}(
        Y^o_t
        -
        \hat{\mu}_{m,a})\mathbbm{1}\{\oo_t=1\}
        \mid A_t=a
    \big]
    \\&\overset{(a)}{=}
    \sum_{m\in\mathcal{M}}\mathbb{E}\big[
        \frac{\mathbbm{1}\{\oo_t=1\}}{\hat{\gamma}_{m,a}}(
        Y^o_t
        -
        \hat{\mu}_{m,a})
        \mid A_t=a, M_t=m
    \big]p_{m,a}
    \\&\overset{(b)}{=}
    \sum_{m\in\mathcal{M}}\frac{\gamma_{m,a}}{\hat{\gamma}_{m,a}}\mathbb{E}\big[
        Y^o_t
        -
        \hat{\mu}_{m,a}
        \mid A_t=a, M_t=m, \oo_t=1
    \big]p_{m,a}
    \\&\overset{(c)}{=}
    \sum_{m\in\mathcal{M}}\frac{\gamma_{m,a}}{\hat{\gamma}_{m,a}}\big(\mathbb{E}\big[
        Y^o_t
        \mid A_t=a, M_t=m, \oo_t=1
    \big]-
        \hat{\mu}_{m,a}\big)p_{m,a}
        \\&\overset{(d)}{=}0,
    \end{split}
\]
where $(a)$ and $(b)$ are due to the law of total expectations, $(c)$ is by linearity of expectation, and $(d)$ follows from the correctness of $\hat{\mu}_{m,a}$.
From Eq.~\eqref{eq:dre},
\[
\begin{split}
    \hat{\mu}_{m,a}&=\ex{\sum_{m\in\mathcal{M}}\mathbbm{1}\{M_t=m\}\hat{\mu}_{m,a}\mid A_t=a}\\
    &= \hat{\mu}_{m,a}=\ex{\sum_{m\in\mathcal{M}}\mathbbm{1}\{M_t=m\}\ex{Y^o_t\mid m, a, \oo_t=1}\mid A_t=a}
    \\ &= \hat{\mu}_{m,a}=\ex{\ex{Y^o_t\mid M, a, \oo_t=1}\mid A_t=a},
\end{split}
\]
which matches Eq.~\eqref{eq:idmar}, and therefore $\hat{\mu}_{m,a}$ is consistent for $\mu_{m,a}$.\qed
\theomcarupper*
\begin{proof}
Let \( a^* = \arg\max\limits_{a} \mu_a \) be the optimal arm. Using Hoeffding's inequality, we can derive the following bounds for any time step \( 1 \leq t \leq T \):

- If \( a = a_t = \arg\max\limits_{a} \left( \text{UCB}_a \right) \), we have:
\[
    \hat{\mu}_{a} - \mu_{a} \leq \sqrt{\frac{\alpha \log(t)}{2T_{a, o}}},
\]
with probability \( 1 - t^{-\alpha} \). Name this ``good event" \( A_{t} \).

- If \( a = a^* \), we have:
\[
    \mu_{a} - \hat{\mu}_{a} \leq \sqrt{\frac{\alpha \log(t)}{2T_{a, o}}},
\]
with probability \( 1 - t^{-\alpha} \).  Name this ``good event" \( B_{t} \).

Now, define \( \epsilon_a = \sqrt{\frac{\alpha \log(t)}{2T_{a, o}}} \). For \( a = a_t = \arg\max\limits_{a} \left( \text{UCB}_a \right) \), we get the following inequality:
\begin{equation}
    \mu_a + 2\epsilon_a \geq \hat{\mu}_a + \epsilon_a = \text{UCB}_a \geq \text{UCB}_{a^*} = \hat{\mu}_{a^*} + \epsilon_{a^*} \geq \mu_{a^*} \quad \Rightarrow \quad \epsilon_a \geq \frac{\Delta_a}{2},
    \label{eq:mcar_proof_1}
\end{equation}
where \( \Delta_a = \mu_{a^*} - \mu_a \).

Now, if \( E_t \) represents the ``good events" at time step \( t \), then under \( E = \bigcap\limits_{t} E_t \), using \eqref{eq:mcar_proof_1} we obtain:
\[
T_{a, o} \leq 4\alpha \log(T) \Delta_a^{-2}.
\]

Thus, we have:
\begin{align}
    \mathbb{E}[T_{a, o}] &= \sum\limits_{t=1}^T \mathbb{E}[\mathbb{I}(I_t = a, O^Y_t = 1)] \nonumber \\
    &\leq 4\alpha \log(T) \Delta_a^{-2} + \sum\limits_{t=1}^T \mathbb{E}[\mathbb{I}(E_t^c)]
    \nonumber \\
    &= 4\alpha \log(T) \Delta_a^{-2} + \sum\limits_{t=1}^T \mathbb{E}[\mathbb{I}(\left( A_t^c \cup B_t^c  \right)   )] \nonumber \\
    &\leq 4\alpha \log(T) \Delta_a^{-2} + \sum\limits_{t=1}^T 2t^{-\alpha} \nonumber \\
    &\leq 4\alpha \log(T) \Delta_a^{-2} + \frac{2\alpha}{\alpha - 1}.
\end{align}

Since we observe the reward with probability \( \gamma \), and \( O^Y \indep (A, Y) \), we have \( \mathbb{E}[T_{a, o}] = \gamma \mathbb{E}[T_a] \). Therefore:
\[
\mathbb{E}[T_a] \leq \frac{4\alpha \log(T) \Delta_a^{-2} + \frac{2\alpha}{\alpha - 1}}{\gamma}.
\]

Let \( x = \sqrt{\frac{4\alpha n \log(T)}{T\gamma}} \). Then, we have:
\begin{align}
    \mathbb{E}[R_T] &= \sum\limits_{a} \Delta_a \mathbb{E}[T_a] \nonumber \\
                  &= \sum\limits_{\Delta_a < x} \Delta_a \mathbb{E}[T_a] + \sum\limits_{\Delta_a \geq x} \Delta_a \mathbb{E}[T_a] \nonumber \\
                  &\leq Tx + \sum\limits_{\Delta_a \geq x} \Delta_a \frac{4\alpha \log(T) \Delta_a^{-2} + \frac{2\alpha}{\alpha - 1}}{\gamma} \nonumber \\
                  &= Tx + \frac{4n\alpha \log(T)}{x\gamma} + \frac{2n\alpha}{(\alpha - 1)\gamma} \nonumber \\
                  &= 2\sqrt{\frac{4n\alpha T \log(T)}{\gamma}} + \frac{2n\alpha}{\gamma(\alpha - 1)} \\
                  &= O\left(\sqrt{\frac{n\alpha T \text{log}(T)}{\gamma}}\right)
\end{align}

\end{proof}

\theomcarlower*
\begin{proof}
Consider the following $n + 1$ bandit instances, with $n$ arms labeled $a_1, a_2, \dots, a_n$.

\textbf{Bandit instance $0$:}
\begin{itemize}
    \item $\mathbb{E}[Y(a)] = 0$ for all $a = a_1, \dots, a_n$.
\end{itemize}

\textbf{Bandit instance $k$ for $k = 1, \dots, n$:}
\begin{itemize}
    \item $\mathbb{E}[Y(a_k)] = \Delta$ for $a = a_k$.
    \item $\mathbb{E}[Y(a)] = 0$ for $a \neq a_k$.
\end{itemize}

Next, we present key lemmas adapted from \cite{lattimore2020bandit} to complete our analysis.

\textbf{Divergence Decomposition:}  
Let $\nu = (P(1), \dots, P(k))$ and $\nu' = (P'(1), \dots, P'(k))$ represent the reward distributions for two $k$-armed bandits. For a fixed policy $\pi$, let $P_\nu = P_{\nu,\pi}$ and $P_{\nu'} = P_{\nu',\pi}$ be the probability measures induced by the $n$-round interaction with $\nu$ and $\nu'$. Then:
\[
\text{KL}(P_\nu, P_{\nu'}) = \sum_{i=1}^{k} \mathbb{E}_\nu[T_i(n)] \text{KL}(P(i), P'(i)).
\]

\textbf{Pinsker's Inequality:}  
For measures $P$ and $Q$ on the same probability space $(\Omega, \mathcal{F})$, the total variation distance is bounded by:
\[
d_\text{TV}(P, Q) = \sup_{A \in \mathcal{F}} |P(A) - Q(A)| \leq \sqrt{\frac{1}{2} \text{KL}(P, Q)}.
\]

\textbf{Total Variation Bound:}  
Let $(\Omega, \mathcal{F})$ be a measurable space, and let $P$ and $Q$ be probability measures on $\mathcal{F}$. For any $\mathcal{F}$-measurable random variable $X : \Omega \to [a, b]$, we have:
\[
\left| \int_\Omega X(\omega) dP(\omega) - \int_\Omega X(\omega) dQ(\omega) \right| \leq (b - a) d_\text{TV}(P, Q).
\]

Now, in our setup with missing observations, so $(O^Y, Y)$ represent the observation tuple. Hence, we have:
\[
\text{KL}(P_0, P_i) = \mathbb{E}_0 [T_i] \text{KL}(P_0(i), P_i(i)) = \mathbb{E}_0 [T_i] \frac{\gamma \Delta^2}{2}.
\]

From this, we can bound $\mathbb{E}_i [T_i(T)]$ as follows:
\begin{align*}
\mathbb{E}_i [T_i(T)] &\leq \mathbb{E}_0 [T_i(T)] + T d_\text{TV}(P_0(i), P_i(i)) \\
&\leq \mathbb{E}_0 [T_i(T)] + T \sqrt{\frac{1}{2} \text{KL}(P_0(i), P_i(i))} \\
&= \mathbb{E}_0 [T_i(T)] + T \sqrt{\frac{1}{2} \cdot \frac{\gamma \Delta^2}{2} \mathbb{E}_0 [T_i(T)]} \\
&= \mathbb{E}_0 [T_i(T)] + \frac{T}{2} \sqrt{\gamma \Delta^2 \mathbb{E}_0 [T_i(T)]}.
\end{align*}

Let $R_i = R_T(\pi; i)$ denote the regret of applying policy $\pi$ on the $i$-th bandit instance up to time $T$, where $i$ refers to the $i$-th bandit instance.

Summing over all bandit instances, we have:
\begin{align*}
\sum_{i=1}^n \mathbb{E}[R_i] 
&= \sum_{i=1}^n \Delta(T - \mathbb{E}_i[T_i(T)]) \\
&\geq \Delta Tn - \Delta \sum_{i=1}^n \left(\mathbb{E}_0 [T_i(T)] + \frac{T}{2} \sqrt{\gamma \Delta^2 \mathbb{E}_0 [T_i(T)]}\right) \\
&\geq \Delta Tn - \Delta T - \frac{\Delta^2 T}{2} \sqrt{\gamma Tn} \\
&\geq \frac{\Delta Tn}{2} - \frac{\Delta^2 T}{2} \sqrt{\gamma Tn} \quad \text{using } \Delta = \frac{n}{2\sqrt{\gamma Tn}} \\
&\geq \frac{Tn^2}{8\sqrt{\gamma Tn}} = \frac{n}{8} \sqrt{\frac{Tn}{\gamma}}.
\end{align*}

Thus, there exists an instance where $\mathbb{E}[R_i] \geq \Omega\left(\sqrt{\frac{Tn}{\gamma}}\right)$.
\end{proof}

\theomarupperfirst*
\begin{proof}
As before, let \( a^* = \arg\max\limits_{a} \mu_a \) denote the optimal arm, and define \( T_1 = \sum\limits_{a} T_{1,a} \) as the total number of times the agent samples each arm during the initial rounds. After the first \( T_1 \) rounds, we can derive the following bounds at any time step \( 1 \leq t \leq T_2 \).

For each arm \( a \), let the reward samples observed when \( M = i \) be denoted by \( Y_{m, a}(1), \dots, Y_{m, a}(n_{m, a}) \). Applying Hoeffding's inequality, we obtain:
\[
\mathbb{P}\left( \left| \sum_{m \in [K]} p_{m,a} \frac{\sum_{j = 1}^{n_{m, a}} Y_{m, a}(j)}{n_{m, a}} - \mu_a \right| \geq \epsilon_a \right) \leq 2\exp\left( -\frac{2\epsilon_a^2}{\sum_{m \in [K]} \frac{p_{m,a}^2}{n_{m, a}}} \right)
\]

This result holds because the sub-Gaussian norm of the random variable \( p_{m,a} \frac{Y_{m, a}(j)}{n_{m, a}} \) is \( \frac{p_{m,a}}{n_{m, a}} \).

By setting \( \epsilon_a = \sqrt{\frac{\alpha \log(t)}{2} \sum_{m \in [K]} \frac{p_{m,a}^2}{n_{m, a}}} \), we obtain the following inequality, which holds with probability at least \( 1 - 2t^{-\alpha} \):
\[
|\hat{\mu}_a - \mu_a| \leq \sqrt{\frac{\alpha \log(t)}{2} \sum_{m \in [K]} \frac{p_{m,a}^2}{n_{m, a}}}
\]
Name the above ``good event" \( A_{t, a} \).

Also, like before for \( a = a_t = \arg\max\limits_{a} \left( \text{UCB}_a \right) \), we get the following inequality:
\begin{equation}
    \mu_a + 2\epsilon_a \geq \hat{\mu}_a + \epsilon_a = \text{UCB}_a \geq \text{UCB}_{a^*} = \hat{\mu}_{a^*} + \epsilon_{a^*} \geq \mu_{a^*} \quad \Rightarrow \quad \epsilon_a \geq \frac{\Delta_a}{2},
    \label{eq:mar_proof_1}
\end{equation}
where \( \Delta_a = \mu_{a^*} - \mu_a \).

Next, let \( s_{m,a} \) represent the number of times arm \( a \) is pulled and \( M = i \) is observed, and let \( T_a \) represent the total number of times arm \( a \) is pulled. Using Hoeffding's inequality, we can bound the deviation between \( p_{m,a} \) (the probability of observing \( M = i \)) and the empirical ratio \( \frac{s_{m,a}}{T_a} \) as follows:
\[
    p_{m,a} - \frac{s_{m,a}}{T_a} \leq \sqrt{\frac{\alpha \log(t)}{2T_a}} \leq \sqrt{\frac{\alpha \log(T)}{2T_a}},
\]
with probability at least \( 1 - t^{-\alpha} \). Name this ``good event" \( B_{t, m, a} \).

Similarly, we bound the deviation between \( \gamma_{m, a} \) and \( \frac{n_{m, a}}{s_{m,a}} \), where \( n_{m, a} \) is the number of times reward is observed for arm \( a \) and \( M = i \):
\[
    \gamma_{m, a} - \frac{n_{m, a}}{s_{m,a}} \leq \sqrt{\frac{\alpha \log(t)}{2s_{m,a}}} \leq \sqrt{\frac{\alpha \log(T)}{2s_{m,a}}},
\]
again with probability \( 1 - t^{-\alpha} \).  Name this ``good event" \( C_{t, m, a} \).

For sufficiently large \( T \), we have:
\[
\log(T)^2 \geq 2\alpha \log(T) \frac{1}{p^2},
\]
which implies \( T_a \geq \log(T)^2 \geq 2\alpha \log(T) \frac{1}{p^2} \). This allows us to use inequality \( \sqrt{\frac{\alpha \log(T)}{2T_a}} \leq \frac{p_{m,a}}{2} \) to derive a lower bound for \( s_{m,a} \):
\[
    s_{m,a} \geq \frac{T_a p_{m,a}}{2}.
\]

Furthermore, since \( s_{m,a} \geq \frac{T_a p_{m,a}}{2} \) and For sufficiently large \( T \), we know that \( T_a \geq T_{1,a} = \log(T)^2 \geq 2\alpha \log(T) \frac{2}{\gamma_{m, a}^2 p} \). This gives us \( \sqrt{\frac{\alpha \log(T)}{2s_{m,a}}} \leq \frac{\gamma_{m, a}}{2} \), which allows us to establish a lower bound for \( n_{m, a} \):
\[
    n_{m, a} \geq \frac{\gamma_{m, a}}{2} s_{m,a}.
\]

Combining this with \( s_{m,a} \geq \frac{T_a p_{m,a}}{2} \), we derive:
\[
    n_{m, a} \geq \frac{T_a p_{m,a} \gamma_{m, a}}{4}.
\]

Let \( E_t \) represent the intersection of "good events" at time step \( t \).  Under \( E = \bigcap\limits_{t} E_t \), we obtain:

\begin{align*}
    \epsilon_a 
    &= \sqrt{\frac{\alpha \log(t)}{2} \sum_{m \in [K]} \frac{p_{m,a}^2}{n_{m, a}}} \\
    &\leq \sqrt{\frac{\alpha \log(T)}{2} \sum_{m \in [K]} \frac{4p_{m,a}^2}{T_a p_{m,a} \gamma_{m, a}}} \\
    &= \sqrt{\frac{2\alpha \log(T)}{T_a} \sum_{m \in [K]} \frac{p_{m,a}}{ \gamma_{m, a}}} \\
    &= \sqrt{\frac{2\alpha \log(T)}{T_a} P_a} \label{eq:epsilon_a}
\end{align*}

Using inequality \eqref{eq:mar_proof_1}, we have:
\[
    T_a \leq \frac{8 \alpha \log(T) P_a}{\Delta_a^2}
\]

Thus, we get:
\begin{align}
    \mathbb{E}[T_a] &= \sum\limits_{t=1}^T \mathbb{E}[\mathbb{I}(I_t = a)] \nonumber \\
    &\leq  \frac{8 \alpha \log(T) P_a}{\Delta_a^2} + \sum\limits_{t=1}^T \mathbb{E}[\mathbb{I}(E_t^c)] \nonumber \\
    & \leq  \frac{8 \alpha \log(T) P_a}{\Delta_a^2} + \sum\limits_{t=1}^T \mathbb{E}[\mathbb{I}(\bigcup\limits_{m} \left( B_{t, m, a}^c \cup C_{t, m, a}^c \right) \cup A_{t, a}  )] \nonumber \\
    &\leq  \frac{8 \alpha \log(T) P_a}{\Delta_a^2} + \sum\limits_{t=1}^T 4K t^{-\alpha} \nonumber \\
    &\leq  \frac{8 \alpha \log(T) P_a}{\Delta_a^2} + \frac{4K\alpha}{\alpha - 1}.
    \label{eq:mar_proof_2}
\end{align}

To conclude, note that the regret of second part of algorithm is \( \mathbb{E}[R_2] = \sum\limits_{a} \Delta_a \mathbb{E}[T_a] \). 
We now split the arms into two groups: \( \Delta_a \leq \sqrt{\frac{8\alpha \log(T)S}{T}} \) and \( \Delta_a \geq \sqrt{\frac{8\alpha \log(T)S}{T}} \). Let \( R_2 \) be the regret for the second part, and let \( x = \sqrt{\frac{8\alpha \log(T)S}{T}} \).
To conclude, note that \( \mathbb{E}[R] = \sum\limits_{a} \Delta_a \mathbb{E}[T_a] \). 
Since \(S = \sum\limits_{a} P_a \) Then:
\[
\begin{split}
    \mathbb{E}[R_2] &= \sum\limits_{a} \Delta_a \mathbb{E}[T_a] = \sum\limits_{\Delta_a < x} \Delta_a \mathbb{E}[T_a] + \sum\limits_{\Delta_a \geq x} \Delta_a \mathbb{E}[T_a] \\&\leq Tx + \frac{8\alpha \log(T)}{x} S + \frac{4K\alpha n}{\alpha - 1} = 2\sqrt{8\alpha T \log(T) S} + \frac{4K\alpha n}{\alpha - 1}.
\end{split}
\]

Finally, for the total regret \( R = R_1 + R_2 \), we have:
\begin{align}
    \mathbb{E}[R] &\leq 2\sqrt{8\alpha T \log(T) S} + \frac{4K\alpha n}{\alpha - 1} + \sum\limits_{a} T_{1,a} \nonumber \\
    &\leq 2\sqrt{8\alpha T \log(T) S} 
    + \frac{4K\alpha n}{\alpha - 1} 
    + n \log(T)^2 \nonumber \\
    &= O\left( \sqrt{\alpha T \log(T) S} \right).
\end{align}

\end{proof}

\theomaruppersecond*

\begin{proof}
We follow the same approach as the previous proof. From the previous result, we know the following inequality holds with probability \( 1 - 2t^{-\alpha} \):
\[
\left| \sum\limits_{m \in \mathbb{M}} p_{m,a} \hat{\mu}_{m, a} - \mu_a \right| \leq \sqrt{\frac{\alpha \log (t)}{2} \sum\limits_{m \in \mathbb{M}} \frac{p_{m,a}^2}{T_{m, a, o}}}.
\]
Let this ``good event" be denoted as \( A_{t, a} \).

Now, we consider:
\begin{align}
    \label{eq:mar_proof2_1}
    \left| \sum\limits_{m \in \mathbb{M}} \hat{p}_{m, a} \hat{\mu}_{m, a} - \mu_a \right| 
    &= \left| \sum\limits_{m \in \mathbb{M}} (\hat{p}_{m, a} - p_{m,a}) \hat{\mu}_{m, a} + \sum\limits_{m \in \mathbb{M}} p_{m,a} \hat{\mu}_{m, a} - \mu_a \right| \\
    &\leq \sum\limits_{m \in \mathbb{M}} \left| \hat{p}_{m, a} - p_{m,a} \right| \hat{\mu}_{m, a} + \left| \sum\limits_{m \in \mathbb{M}} p_{m,a} \hat{\mu}_{m, a} - \mu_a \right|.
\end{align}

From the previous proof, since \( \hat{p}_{m, a} = \frac{s_{m,a}}{T_a} \), we have:
\[
\left| p_{m,a} - \hat{p}_{m, a} \right| \leq \sqrt{\frac{\alpha \log(t)}{2T_a}},
\]
with probability at least \( 1 - 2t^{-\alpha} \). Denote this "good event" as \( B_{t, i, a} \). Under this event for \( T_a \geq \log(T)^2\) and sufficient big \( T \) we will have \(  \frac{p_{m, a}}{2} \leq \hat{p}_{m, a} \leq 2 p_{m, a}\)

Additionally, we have:
\[
\left| \gamma_{i,a} - \frac{n_{i,a}}{s_{m,a}} \right| \leq \sqrt{\frac{\alpha \log(t)}{2s_{m,a}}} \leq \sqrt{\frac{\alpha \log(T)}{2s_{m,a}}},
\]
again with probability \( 1 - 2t^{-\alpha} \), denoted as "good event" \( C_{t, i, a} \).

If these ``good events" hold, we know:
\[
T_{m, a, o} \geq \frac{\gamma_{m, a} p_{m,a} T_a}{4}.
\]

We also know:
\[
\left| \mu_{i,a} - \hat{\mu}_{m, a} \right| \leq \sqrt{\frac{\alpha \log(t)}{2T_{m, a, o}}},
\]
which leads to:
\[
\hat{\mu}_{m, a} \leq \mu_{m, a} + \sqrt{\frac{\alpha \log(t)}{2T_{m, a, o}}} \leq 1 + \sqrt{\frac{\alpha \log(t)}{2T_{m, a, o}}},
\]
with probability at least \( 1 - 2t^{-\alpha} \), denoted as "good event" \( D_{t, i, a} \).

Under all these ``good events," we have:
\begin{align*}
    \sum\limits_{m \in \mathbb{M}} \left| \hat{p}_{m, a} - p_{m,a} \right| \hat{\mu}_{m, a} 
    &\leq \sum\limits_{m \in \mathbb{M}} \left| \hat{p}_{m, a} - p_{m,a} \right| + \sum\limits_{m \in \mathbb{M}} \left| \hat{p}_{m, a} - p_{m,a} \right| \times \sqrt{\frac{\alpha \log(t)}{2T_{m, a, o}}} \\
    &\leq \sum\limits_{m \in \mathbb{M}} \left| \hat{p}_{m, a} - p_{m,a} \right| + \sum\limits_{m \in \mathbb{M}} \sqrt{\frac{\alpha \log(T)}{2T_a}} \times \sqrt{\frac{2\alpha \log(T)}{p_{m,a} \gamma_{m, a} T_a}}.
\end{align*}

Using Lemma 7 from \cite{kamath2015learning}, we get:
\[
\sum\limits_{m \in \mathbb{M}} \left| \hat{p}_{m, a} - p_{m,a} \right| \leq \sqrt{\frac{2 (k - 1)}{\pi T_a}} + \frac{4k^{\frac{1}{2}} (k - 1)^{\frac{1}{4}}}{T_a^{\frac{3}{4}}}.
\]

Now, combining everything with the initial inequality \eqref{eq:mar_proof2_1}, we have:
\[
\begin{split}
    \left| \sum\limits_{m \in \mathbb{M}} \hat{p}_{m, a} \hat{\mu}_{m, a} - \mu_a \right| &\leq 
    \sqrt{\frac{2 (k - 1)}{\pi T_a}} + \frac{4k^{\frac{1}{2}} (k - 1)^{\frac{1}{4}}}{T_a^{\frac{3}{4}}} \\&+ \sum\limits_{m \in \mathbb{M}} \sqrt{\frac{\alpha \log(T)}{2T_a}} \times \sqrt{\frac{2\alpha \log(T)}{p_{m,a} \gamma_{m, a} T_a}} + \sqrt{\frac{\alpha \log(T)}{2} \sum\limits_{m \in \mathbb{M}} \frac{p_{m,a}^2}{T_{m, a, o}}}.
\end{split}
\]

For sufficiently large \( T \), since \( T_a > \log(T)^2 \), we have:
\[
\sqrt{\frac{\alpha \log(T)}{2} \sum\limits_{m \in \mathbb{M}} \frac{p_{m,a}^2}{T_{m, a, o}}} \geq \sum\limits_{m \in \mathbb{M}} \sqrt{\frac{\alpha \log(T)}{2T_a}} \times \sqrt{\frac{2\alpha \log(T)}{p_{m,a} \gamma_{m, a} T_a}}.
\]

and
\[
\sqrt{\frac{\alpha \log(T)}{2} \sum\limits_{m \in \mathbb{M}} \frac{p_{m,a}^2}{T_{m, a, o}}} \geq \sqrt{\frac{2 (k - 1)}{\pi T_a}} \text{ and } \frac{4k^{\frac{1}{2}} (k - 1)^{\frac{1}{4}}}{T_a^{\frac{3}{4}}}
\]

Thus, we conclude:
\[
\left| \sum\limits_{m \in \mathbb{M}} \hat{p}_{m, a} \hat{\mu}_{m, a} - \mu_a \right| \leq 4\sqrt{\frac{\alpha \log(T)}{2} \sum\limits_{m \in \mathbb{M}} \frac{p_{m,a}^2}{T_{m, a, o}}}.
\]

Finally, since \( \frac{p_{m,a}}{2} \leq \hat{p}_{m, a} \), we have:
\[
\left| \sum\limits_{m \in \mathbb{M}} \hat{p}_{m, a} \hat{\mu}_{m, a} - \mu_a \right| \leq 8\sqrt{\frac{\alpha \log(T)}{2} \sum\limits_{m \in \mathbb{M}} \frac{\hat{p}_{m, a}^2}{T_{m, a, o}}}.
\]

Following the exact reasoning in the proof of Theorem~\ref{theo:mar_upper_1}, and using the fact that \( \hat{p}_{m, a} \leq 2p_{m, a} \) we conclude:
\[
\mathbb{E}[R] = O\left( \alpha \sqrt{T \log(T) S} \right).
\]   
\end{proof}

\theomarlower*
\begin{proof}
Consider the following $n + 1$ bandit instances, with $n$ arms labeled $a_1, a_2, \dots, a_n$.

\textbf{Bandit instance 0:}
\begin{itemize}[noitemsep, topsep=0pt]
    \item $\mu_{m, a} = 0$ for all arms $a = a_1, \dots, a_n$ and for all $m =  1, \dots, K$.
\end{itemize}

\textbf{Bandit instance $j$ for $j = 1, \dots, n$:}
\begin{itemize}[noitemsep, topsep=0pt]
    \item $\mu_{m, a} = \frac{\Delta}{P_a \gamma_{m, a}}$ for arm $a = a_j$, and for all $m =  1, \dots, K$.
    \item $\mu_{m, a} = 0$ for all arms $a \neq a_j$, and for all $m =  1, \dots, K$.
\end{itemize}

\vspace{1em} % Add some vertical space before this section

\textbf{For each instance \( j \in \{1, \dots, n\}\):}
\begin{itemize}[noitemsep, topsep=0pt]
    \item If $a = j$: \( \mu_a = \sum_{m \in [K]} p_{m,a} \mu_{m, a} = \Delta \).
    \item If $a \neq j$: \( \mu_a = 0 \).
\end{itemize}

\vspace{0.75em} % Add space between this and the next section

\textbf{For instance 0:}
\begin{itemize}[noitemsep, topsep=0pt]
    \item For all $a \in [1, \dots, n]$: \( \mu_a = 0 \).
\end{itemize}

Now like the previous we use the mentioned lemmas from \cite{lattimore2020bandit} to complete our proof.

Like before for every \( a = 1, \dots, n \) we have:
\begin{align*}    
\text{KL}(P_0, P_a) &= \mathbb{E}_0 [T_a] \text{KL}(P_0(a), P_a(a)) \\
&= \mathbb{E}_0 [T_a] \sum\limits_{m \in [K]} p_{m,a} \gamma_{m, a} \frac{\Delta^2}{2P_a^2 \gamma_{m, a}^2} \\
&= \mathbb{E}_0 [T_a] \frac{\Delta^2}{P_a^2} \sum\limits_{m \in [K]} \frac{p_{m,a}}{\gamma_{m, a}} \\
&= \mathbb{E}_0 [T_a] \frac{\Delta^2}{P_a}
\end{align*}

From this, we can bound $\mathbb{E}_a [T_a(T)]$ as follows:
\begin{align*}
\mathbb{E}_a [T_a(T)] &\leq \mathbb{E}_0 [T_a(T)] + T d_\text{TV}(P_0(a), P_a(a)) \\
&\leq \mathbb{E}_0 [T_a(T)] + T \sqrt{\frac{1}{2} \text{KL}(P_0(a), P_a(a))} \\
&= \mathbb{E}_0 [T_a(T)] + \frac{T}{2} \sqrt{\mathbb{E}_0 [T_a] \frac{\Delta^2}{P_a}}
\end{align*}

Let $R_m =  R_T(\pi; i)$ denote the regret of applying policy $\pi$ on the $i$-th bandit instance up to time $T$, where $i$ refers to the $i$-th bandit instance.

Summing over all bandit instances \(1, \dots, n\), we have:
\begin{align*}
    \sum_{i=1}^n \mathbb{E}[R_i] 
    &= \sum\limits_{a \in [n]} \Delta \left(T - \mathbb{E}_a[T_a(T)]\right) \\
    &\geq \Delta Tn - \Delta \sum\limits_{a \in [n]} \left( \mathbb{E}_0 [T_a(T)] + \frac{T}{2} \sqrt{\mathbb{E}_0 [T_a] \frac{\Delta^2}{P_a}} \right) \\
    &\geq \Delta Tn - \Delta T - \frac{\Delta^2 T}{2} \sum\limits_{a \in [n]} \sqrt{\frac{\mathbb{E}_0 [T_a(T)]}{P_a}} \\
    &\geq \Delta Tn - \Delta T - \frac{\Delta^2 T}{2} \sqrt{T \sum\limits_{a \in [n]} \frac{1}{P_a}} \\
    &\geq \frac{\Delta Tn}{2} - \frac{\Delta^2 T}{2} \sqrt{T \sum\limits_{a \in [n]} \frac{1}{P_a}} \quad \text{using } \Delta = \frac{n}{2\sqrt{T\sum\limits_{a \in [n]} \frac{1}{P_a}}} \\
    &\geq \frac{n}{8} \sqrt{\frac{Tn^2}{\sum\limits_{a \in [n]} \frac{1}{P_a}}}
\end{align*}

Thus, there exists an instance where \( \mathbb{E}[R_i] \geq \Omega\left( \sqrt{\frac{Tn^2}{\sum\limits_{a \in [n]} \frac{1}{P_a}}} \right) = \Omega\left( \sqrt{TnH} \right)  \).
\end{proof}

\thmignoremed*
\begin{proof}
    We construct a bandit with two arms such that the KL divergence of the outputs observed by the agents, when there is no mediator, is zero. In this scenario, we have:

\[
\mathbb{E}[T_1] \leq \mathbb{E}[T_2] + T \sqrt{\mathbb{E}[T_1 \text{KL}(P_1, P_2)]} = \mathbb{E}[T_2],
\]

and

\[
\mathbb{E}[T_2] \leq \mathbb{E}[T_1] + T \sqrt{\mathbb{E}[T_2 \text{KL}(P_2, P_1)]} = \mathbb{E}[T_1],
\]

which implies:

\[
\mathbb{E}[T_1] = \mathbb{E}[T_2] = \frac{T}{2}.
\]

Thus, if the actual means of the arms differ, with \( \mu_1 - \mu_2 = \Delta \), then:

\[
\mathbb{E}[R_T] = \Omega\left(\frac{T \Delta}{2}\right) = \Omega(T).
\]

Now, let \( f_Y(y) \) represent the probability mass function of \( Y \). We have:

\[
f_Y(y \mid a, O^Y = 1) = \sum\limits_{m} \mathbb{P}(m \mid a, O^Y = 1) f_Y(y \mid a, m, O^Y = 1) = \sum\limits_{m} \mathbb{P}(m \mid a, O^Y = 1) f_Y(y \mid a, m),
\]

and

\[
\mathbb{P}(m \mid a, O^Y = 1) = \frac{\mathbb{P}(m, a, O^Y = 1)}{\sum\limits_{m} \mathbb{P}(m, a, O^Y = 1)} = \frac{\gamma_{m, a} p_{m, a}}{\sum\limits_{m} \gamma_{m, a} p_{m, a}}.
\]

Now, if we let \( \gamma_{m, a} = \frac{\frac{1}{p_{m, a}}}{\sum\limits_{m} \frac{1}{p_{m, a}}} \), we have:

\[
f_Y(y \mid a, O^Y = 1) = \frac{\sum\limits_{m} f_Y(y \mid a, m)}{K},
\]

This expression is independent of \( p_{m, a} \). Additionally, we have:

\[
\mathbb{P}(O^Y = 1 \mid a) = \sum\limits_{m} p_{m, a} \gamma_{m, a} = \frac{K}{\sum\limits_{m} \frac{1}{p_{m, a}}},
\]

which leads to \( \text{KL}(P_1, P_2) = \text{KL}(P_2, P_1) = 0 \) for any choice of \( p_{m, a} \) such that the set \( P_a = \{ p_{m, a} \mid \forall m \in \mathbb{M} \} \) is the same for both arms.

Since \( \mu_a = \sum\limits_{m} p_{m, a} \mu_{m, a} \), we let all \( \mu_{m, a} \) be zero except for one, which we set to 1. Now, for arm \( a = 1 \), let \( p_{m, a} = 1 - \epsilon \) for the \( m \) such that \( \mu_{m, a} = 1 \), and set the others equal to \( \frac{\epsilon}{K - 1} \). For arm \( a = 2 \), let \( p_{m, a} = 1 - \epsilon \) for the \( m \) such that \( \mu_{m, a} \neq 1 \), and set the others equal to \( \frac{\epsilon}{K - 1} \).

In this way, \( \mu_1 - \mu_2 = 1 - \epsilon - \frac{\epsilon}{K - 1} \), completing the construction for small \( \epsilon \).

\end{proof}

\theomnarupper*
\newcommand{\norm}[1]{\| #1 \|}
\begin{proof}
We follow the same approach used in previous upper bound proofs to derive a lower bound on the estimation error of \( \mu_a \).

At each time step \( 1 \leq t \leq T_2 \), define:

\begin{align*}
&b_a = [\mathbb{P}(m, O^Y = 0 \mid a)]_{K \times 1} \\
&\Theta_a = [\mathbb{P}(m, y, O^Y = 1 \mid a)]_{K \times L} \\
&x_a = [\frac{\mathbb{P}(O^Y = 0 \mid y, a)}{\mathbb{P}(O^Y = 1 \mid y, a)}]_{L \times 1} \\
\end{align*}

Since we know that \( \Theta_a x_a = b_a \), we now invoke Theorem 2.2 from \cite{higham1994survey}, which states:

\textbf{Theorem 2.2.} Let \(Ax = b\) and \((A + \Delta A) y = b + \Delta b\), where \(\|\Delta A\| \leq \epsilon \|E\|\) and \(\|\Delta b\| \leq \epsilon \|f\|\), and assume that \(\epsilon \|A^{-1}\| \|E\| < 1\). Then:

\begin{equation}
    \label{eq:mnar_survey_theo}
    \frac{\|x - y\|}{\|x\|} \leq \frac{\epsilon}{1 - \epsilon \|A^{-1}\| \|E\|}
    \left(
    \frac{\|A^{-1}\| \|f\|}{\|x\|} + \|A^{-1}\| \|E\|
    \right),
\end{equation}

and this bound is attainable to first order in \(\epsilon\).

For each entry of \( b_a \) or \( \Theta_a \), we have \( T_a \) samples. By applying Hoeffding's inequality and following the same approach as in the proofs of previous theorems, we set \( \epsilon = \sqrt{\frac{\alpha \log(T)}{2 T_a}} \). Consequently, we obtain the following bounds (all norms are \( \|.\|_\infty \)):

\begin{align*}
    \| \hat{b}_a - b_a \| &\leq \epsilon, \\
    \| \hat{\Theta}_a - \Theta_a \| &\leq L \epsilon,
\end{align*}

with probability at least \( 1 - 2 K \times (L + 1) t^{-\alpha} \).

Now, under the event described above and using \eqref{eq:mnar_survey_theo}, we have:

\[
    \frac{\|x_a - \hat{x}_a\|}{\|x_a\|} \leq \frac{\epsilon}{1 - \epsilon L \| \Theta_a^{-1}\|}
    \left(
    \frac{\|\Theta_a^{-1}\|}{\|x_a\|} + L \|\Theta_a^{-1}\|
    \right).
\]

We have \( x_a = \left[\frac{1 - \gamma_{y,a}}{\gamma_{y,a}}\right]_{L \times 1} \), and therefore \( \|x_a\| = \frac{1 - \gamma_a}{\gamma_a} \). For sufficiently large \( T \) and for \( T_a \geq \log(T)^2 \), it follows that \( \epsilon L \|\Theta_a^{-1}\| \leq \frac{1}{2} \), leading to:

\[
    \|x_a - \hat{x}_a\| 
    \leq 2 \epsilon 
    \left(
    \|\Theta_a^{-1}\| + L \|\Theta_a^{-1} \| \frac{1 - \gamma_a}{\gamma_a}
    \right)
    = 2 \epsilon \|\Theta_a^{-1}\|
    \left(
    \frac{L}{\gamma_a} - (L - 1)
    \right)
    \leq 2 \epsilon \frac{L}{\gamma_a} \|\Theta_a^{-1}\|.
\]

Now, since \( \|\hat{\Theta}_a - \Theta_a\| \leq L \epsilon \), for sufficiently large \( T \) and \( T_a \geq \log(T)^2 \), we have \( L \epsilon \leq \frac{\|\Theta_a\|}{2} \). Hence:

\[
\frac{\|\Theta_a\|}{2} \leq \|\hat{\Theta}_a\| \leq 2 \|\Theta_a\|,
\]

which implies

\[
\|\Theta_a^{-1}\| = \frac{\kappa(\Theta_a)}{\|\Theta_a\|} \leq \frac{2\kappa(\Theta_a)}{\|\hat{\Theta}_a\|} \leq \frac{2C_a}{\|\hat{\Theta}_a\|}.
\]

Thus:

\begin{equation}
\label{eq:mnar_eq2}
\norm{x_a - \hat{x}_a} \leq  4 \epsilon \frac{L C_a}{\gamma_a \norm{\hat{\Theta}_a}}.
\end{equation}

For sufficiently large \( T \) and \( T_a \geq \log(T)^2 \), we will have:
\[
\norm{x_a - \hat{x}_a} \leq \frac{1}{2\gamma_a}
\]
so:

\[
\frac{1}{\hat{\gamma}_a} = \norm{\hat{x}_a + [1]_{L \times 1}} \geq \norm{x_a + [1]_{L \times 1}} - \frac{1}{2\gamma_a} = \norm{[\frac{1}{\gamma_{y, a}}]_{L \times 1}} - \frac{1}{2\gamma_a} = \frac{1}{2\gamma_a}.
\]

Using \eqref{eq:mnar_eq2}, we have:
\[
\norm{x_a - \hat{x}_a} \leq  8 \epsilon \frac{L C_a}{\hat{\gamma}_a \norm{\hat{\Theta}_a}}.
\]

Since \( x_a + [1]_{L \times 1} = [\frac{1}{\gamma_{y, a}}]_{L \times 1} \), for every \( y \), we have:
\[
\left|\frac{1}{\gamma_{y, a}} - \frac{1}{\hat{\gamma}_{y, a}}\right| \leq  8 \epsilon \frac{L C_a}{\hat{\gamma}_a \norm{\hat{\Theta}_a}}.
\]

Now let \( p_{m, y \mid 1, a}  = \mathbb{P}(m, y \mid O^y = 1, a) \). By applying Hoeffding's inequality, we have the following inequality for all \( m, y \):

\[
    |q_{m, y \mid 1, a} - p_{m, y \mid 1, a}| \leq \sqrt{\frac{\alpha \log(T)}{2 T_{a, o}}}
\]
with probability at least \(1 - 2K L t^{-\alpha} \).
Using the fact that \( \frac{p_{m, y \mid 1, a}}{\gamma_{y, a}} = \mathbb{P}(m, y \mid a) = p_{m, y \mid a} \), we have:
\begin{align*}
\left|p_{m, y \mid a} - \hat{p}_{m, y \mid a}\right|
&= \left| \frac{p_{m, y \mid a}}{\gamma_{y, a}} - \frac{q_{m, y \mid a}}{\hat{\gamma_{y, a}}} \right| \\
&\leq  p_{m, y \mid a} \left|\frac{1}{\gamma_{y, a}} - \frac{1}{\hat{\gamma}_{y, a}}\right| + \frac{1}{\hat{\gamma}_{a}} \left|q_{m, y \mid 1, a} - p_{m, y \mid 1, a}\right| \\
&\leq  8 p_{m, y \mid a} \epsilon \frac{L C_a}{\hat{\gamma}_a \norm{\hat{\Theta}_a}} + \frac{1}{\hat{\gamma}_{a}} \sqrt{\frac{\alpha \log(T)}{T_{a, o}}}.
\end{align*}

Summing up over \( m \), we have:
\begin{align*}
\left|p_{y \mid a} - \hat{p}_{y \mid a}\right|
 &\leq 8 p_{y \mid a} \epsilon \frac{L C_a}{\hat{\gamma}_a \norm{\hat{\Theta}_a}} + \frac{K}{\hat{\gamma}_{a}} \sqrt{\frac{\alpha \log(T)}{T_{a, o}}} \\
 &\leq 8 \epsilon \frac{L C_a}{\hat{\gamma}_a \norm{\hat{\Theta}_a}} + \frac{K}{\hat{\gamma}_{a}} \sqrt{\frac{\alpha \log(T)}{T_{a, o}}}.
\end{align*}

Thus, using \( \sum\limits_{y} |y| = 1\):

\[
| \mu_a - \hat{\mu}_a| \leq \sum\limits_{y} |y| \left|p_{y \mid a} - \hat{p}_{y \mid a}\right| \leq \sum\limits_{y} |y| \left( 8 \epsilon \frac{L C_a}{\hat{\gamma}_a \norm{\hat{\Theta}_a}} + \frac{K}{\hat{\gamma}_{a}} \sqrt{\frac{\alpha \log(T)}{T_{a, o}}} \right) = 8 \epsilon \frac{L C_a}{\hat{\gamma}_a \norm{\hat{\Theta}_a}} + \frac{K}{\hat{\gamma}_{a}} \sqrt{\frac{\alpha \log(T)}{T_{a, o}}} = \epsilon_a.
\]

Hence, \( \text{UCB}(a) = \hat{\mu}_a + \epsilon_a \), and using previous proofs, we conclude that \( \epsilon_a \geq \frac{\Delta_a}{2} \). To finalize the proof:

\[
\norm{\hat{\Theta}_a} \geq \frac{\norm{\Theta_a}}{2}, 
\hat{\gamma}_a \geq  \frac{\gamma_a}{2},
\]

and we have \( \mathbb{P}(O^Y = 1 \mid a) = \sum\limits_{y} p_{y,a}\gamma_{y,a} \). Applying Hoeffding's inequality gives:

\[
\left|\frac{T_{a, o}}{T_a} - \sum\limits_{y} p_{y,a}\gamma_{y,a}\right| \leq \epsilon,
\]

which for sufficiently large \( T \) and \( T_a \geq \log(T)^2 \), states:

\[
    T_{a, o} \geq T_a \left(\sum\limits_{y} p_{y,a}\gamma_{y,a}\right).
\]

Finally, we have:

\[
\epsilon_a \leq \sqrt{\frac{\alpha \log(T)}{T_a}} 
8\frac{L C_a}{\gamma_a \norm{\Theta_a}} + 2\frac{K}{\gamma_{a}} \sqrt{\frac{1}{\sum\limits_{y} p_{y,a}\gamma_{y,a}}}
\leq 8\sqrt{\frac{\alpha \log(T)}{T_a}} 
\max\left( \frac{L C_a}{\gamma_a \norm{\Theta_a}}, \frac{K}{\gamma_{a}} \sqrt{\frac{1}{\sum\limits_{y} p_{y,a}\gamma_{y,a}} }\right),
\]

which, following the exact steps of previous theorem proofs, leads to:

\[
\mathbb{E}[R_T] = O\left( \sqrt{\alpha T \log(T) \sum\limits_{a} S_a^2} \right).
\]
\end{proof}

\section{Theoretical results on Missing Outcome and Missing Mediator}\label{apx:Missing M}
In this section, we present our theoretical results on Missing at Random (MAR) and Missing Not at Random (MNAR) environments for both Missing Outcome and Missing Mediator cases.

\subsection{Missing at Random (MAR)}
As discussed earlier, the identification of \( \mu_a = \mathbb{E}[Y \mid a] \) is given by:

\[
\mu_a = \sum\limits_{m \in \mathbb{M}} \mathbb{P}(M = m \mid a, O^M = 1) \mathbb{E}[Y \mid M = m, a, O^Y = 1, O^M = 1].
\]

Using this identification, we will prove the following theorem. We define \( p_{m, a} = \mathbb{P}(M = m, a), \gamma_{m, a} = \mathbb{P}(O^Y = 1 \mid M = m, a), \lambda_a = \mathbb{P}(O^M = 1 \mid a) \). Our algorithm is exactly like \ref{alg:mar_algorithm2} where if \( M \) is missed we don't update anything.

\begin{restatable}{theorem}{theoMmismarsupper} \label{theo:M_miss_upper}
\text{(Regret bound for MAR with missing mediator and outcome)}
For every \( \alpha > 1 \), the following regret bound holds for sufficiently large \( T \):
\[
\mathbb{E}[R_T] = O\left( \sqrt{\alpha T \log(T) n S} \right),
\]
where \( P_a = \sum_{m \in \mathcal{M}} \frac{p_{m,a}}{\gamma_{m,a} \lambda_a} \) and \( S \coloneqq \frac{1}{\vert \mathcal{A}\vert} \sum\limits_{a\in\mathcal{A}} P_a \).
\end{restatable}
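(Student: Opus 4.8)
The plan is to follow the proof of \Cref{theo:mar_upper_2} almost verbatim, the only structural change being that a ``complete'' observation now requires \emph{both} the mediator and the outcome to be recorded, which inserts an extra factor of $\lambda_a$ into the effective sampling rate of each $(m,a)$ pair. First I would confirm the identification. Using $O^M\indep M\mid A$ (\Cref{ass:mar_my_assumption}) gives $\mathbb{P}(M=m\mid a,O^M=1)=p_{m,a}$, while combining $O^Y\indep Y\mid(A,M)$ (\Cref{ass:mar_assumption}) with $O^M\indep Y\mid(A,M,O^Y)$ yields $\mathbb{E}[Y\mid m,a,O^Y=1,O^M=1]=\mathbb{E}[Y\mid m,a]=\mu_{m,a}$. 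Hence the displayed formula for $\mu_a$ collapses to the fully-observed expression $\sum_m p_{m,a}\mu_{m,a}$, so the two-step estimator $\hat\mu_a=\sum_m \hat p_{m,a}\hat\mu_{m,a}$ (with $\hat p_{m,a}$ normalized over mediator-observed rounds and $\hat\mu_{m,a}$ averaging $Y$ over rounds with $O^M=O^Y=1$) targets the correct quantity.

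The core of the argument is the concentration of $\hat\mu_a$ together with a lower bound on the complete-sample counts. Let $n_{m,a}$ be the number of rounds in which arm $a$ is pulled and both $M=m$ and $Y$ are observed. As in \Cref{theo:mar_upper_2}, Hoeffding's inequality gives a confidence width $\epsilon_a=\sqrt{\tfrac{\alpha\log t}{2}\sum_m \hat p_{m,a}^2/n_{m,a}}$, where the error incurred by replacing $p_{m,a}$ with $\hat p_{m,a}$ is absorbed into a constant exactly as before via Lemma~7 of \cite{kamath2015learning}. The new ingredient is a three-layer counting bound: conditioning on $T_a$ pulls of arm $a$, I would concentrate (i) the number of mediator-observed rounds around $\lambda_a T_a$, (ii) among those, the number with $M=m$ around $p_{m,a}\lambda_a T_a$, and (iii) among those, the number with $O^Y=1$ around $\gamma_{m,a}p_{m,a}\lambda_a T_a$. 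After the initial uniform exploration of $\log(T)^2$ pulls per arm, each empirical fraction lies within a constant factor of its mean on the corresponding good events, giving $n_{m,a}\geq \tfrac14\gamma_{m,a}\lambda_a p_{m,a}T_a$. Substituting this into $\epsilon_a$ telescopes the sum to $\epsilon_a\le\sqrt{\tfrac{c\,\alpha\log T}{T_a}P_a}$ with $P_a=\sum_m p_{m,a}/(\gamma_{m,a}\lambda_a)$.

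From here the finish is identical to \Cref{theo:mar_upper_1,theo:mar_upper_2}: the UCB selection rule forces $\epsilon_a\ge\Delta_a/2$ on the good event, so $\mathbb{E}[T_a]\le \tfrac{c'\alpha\log T\,P_a}{\Delta_a^2}+O(1)$ once we union-bound the (now larger, but still $O(K)$ per round) collection of good events, whose complements contribute a convergent $\sum_t t^{-\alpha}$ tail. Splitting the arms at the threshold $x=\sqrt{\alpha\log T\,nS/T}$ and using $\sum_a P_a=nS$ yields $\mathbb{E}[R_T]=O(\sqrt{\alpha T\log(T)\,nS})$, with the $n\log(T)^2$ exploration cost dominated by the leading term for large $T$.

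The main obstacle I anticipate is step (iii) of the counting bound. The graph of \Cref{fig:m,y mar} contains the edge $O^M\to O^Y$, so $O^Y$ is \emph{not} independent of $O^M$ given $(A,M)$; strictly, the rate of $O^Y=1$ among mediator-observed rounds is $\mathbb{P}(O^Y=1\mid m,a,O^M=1)$ rather than the marginal $\mathbb{P}(O^Y=1\mid m,a)$. To obtain the clean factorization $p_{m,a}\gamma_{m,a}\lambda_a$ one must either read $\gamma_{m,a}$ as this conditional rate, or observe that the algorithm only ever conditions on $O^M=1$, so the entire analysis goes through with $\gamma_{m,a}$ defined accordingly; I would state this explicitly to close the potential dependence-structure gap. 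The remaining bookkeeping---propagating the slightly reduced effective sample size $\lambda_a T_a$ into the $\hat p_{m,a}$ concentration and verifying that the outcome-mean term still dominates---is routine.
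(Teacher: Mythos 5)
Your proposal follows essentially the same route as the paper's proof: the same two-step estimator, the same Hoeffding/good-event machinery inherited from \Cref{theo:mar_upper_2} (including the Lemma~7 trick from \cite{kamath2015learning}), the same three-layer counting bound giving \( T_{m,a,o_Y} \geq \tfrac{1}{4} p_{m,a}\gamma_{m,a}\lambda_a T_a \), and the same final threshold-splitting argument yielding \( O\bigl(\sqrt{\alpha T \log(T)\, n S}\bigr) \). Your closing caveat is well taken and is in fact a point the paper glosses over: its proof silently uses the factorization \( \mathbb{P}(M=m, O^M=1, O^Y=1 \mid a) = p_{m,a}\lambda_a\gamma_{m,a} \) even though the edge \( O^M \to O^Y \) in Figure~\ref{fig:m,y mar} means this only holds if \( \gamma_{m,a} \) is read as \( \mathbb{P}(O^Y=1 \mid m, a, O^M=1) \), so your explicit handling of that definitional issue makes your writeup slightly more careful than the paper's own.
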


\begin{proof}
    The proof follows a similar approach to the proof of Theorem \ref{theo:mar_upper_2}. Using the same reasoning, we have (where \( T_{m, a, o_Y} \) is the number of times \( M = m \) and the reward are observed when pulling arm \( a \)):
    \[
    \left| \sum\limits_{m \in \mathbb{M}} \hat{p}_{m, a} \hat{\mu}_{m, a} - \mu_a \right| \leq 8\sqrt{\frac{\alpha \log(T)}{2} \sum\limits_{m \in \mathbb{M}} \frac{\hat{p}_{m, a}^2}{T_{m, a, o_Y}}}.
    \]
    Similarly, we also have the following inequality (where \( T_{a, o_M} \) is the number of times \( M = m \) is observed when pulling arm \( a \)):
    \[
    \left| p_{m,a} - \hat{p}_{m, a} \right| \leq \sqrt{\frac{\alpha \log(t)}{2T_{a, o_M}}}.
    \]
    Additionally, we have:
    \[
    \left| \frac{T_{a, o_M}}{T_a} - \lambda_a \right| \leq \sqrt{\frac{\alpha \log(t)}{2T_a}}.
    \]
    Under this event, for \( T_a \geq \log(T)^2 \) and sufficiently large \( T \), we have \( \frac{p_{m, a}}{2} \leq \hat{p}_{m, a} \leq 2 p_{m, a} \). Following similar steps, we get:
    \[
    T_{m, a, o_Y} \geq \frac{p_{m, a} \gamma_{m, a} \lambda_a T_a}{4}.
    \]
    Therefore, using the same definition of \( \epsilon_a \), we obtain:
    \[
    \epsilon_a \leq 8\sqrt{\frac{8\alpha \log(T)}{2} \sum\limits_{m \in \mathbb{M}} \frac{p_{m, a}^2}{p_{m, a} \gamma_{m, a} \lambda_a T_a}} = 8\sqrt{\frac{8\alpha \log(T)}{2} \sum\limits_{m \in \mathbb{M}} \frac{p_{m, a}}{\gamma_{m, a} \lambda_a T_a}}.
    \]
    Finally, following the same steps as in previous proofs, we conclude:
    \[
    \mathbb{E}[R_T] = O\left( \sqrt{\alpha T \log(T) n S} \right).
    \]
    
\end{proof}

\subsection{Missing Not at Random (MNAR)}
In this section, we use the identification formula discussed earlier to develop an algorithm and establish an upper bound for this environment. Assume that  
\( \mathbb{P}(m \mid a) = p_{m, a} \), \( \mathbb{P}(O^M = 1 \mid m, a) = \lambda_{m, a} \), \( \mathbb{P}(O^Y = 1 \mid m, a) = \gamma_{m, a} \). Also define \( \lambda_a = \min\limits_{m} \lambda_{m, a} \). We assume a similar condition to Assumption~\ref{ass:mnar_K_assumption}, but for a different matrix. Let \( \Theta_a = [\mathbb{P}(m, y, O^M = 1, O^Y = 1 \mid a)]_{K \times L} \):

\begin{assumption}[Bounded condition number]\label{ass:missing_M_mnar_K_assumption}
For each arm \( a \in \mathcal{A} \), the condition number of the matrix \( \Theta_a \) is bounded by:
\[
    \kappa(\Theta_a) \leq C_a,
\]
where \( \kappa(\Theta_a) \) denotes the condition number of \( \Theta_a \) with respect to the $\infty$-norm, defined as 
\[
    \kappa(\Theta_a) = \lVert \Theta_a \rVert_\infty \lVert \Theta_a^{\dagger} \rVert_\infty,
\] with $\Theta_a^{\dagger}$ being the pseudo-inverse of \( \Theta_a \).
\end{assumption}

In our algorithm we use the the given identification formula and 
\[ 
\text{UCB}(a) = \hat{\mu}_a + 2 \sqrt{\frac{\alpha \log(T)}{2T_a}} \left( 8 \frac{C_a}{\norm{\hat{\Theta}_a}} \frac{K}{\hat{\lambda}_a} + \frac{1}{\hat{\lambda}_a} \right) + \sqrt{\frac{\alpha \log(T)}{2} \sum\limits_{m \in \mathbb{M}} \frac{4 \hat{p}_{m, a}^2}{T_{a, m, o_Y}}}.
\]

we will prove the following theorem.
\begin{restatable}{theorem}{theoMmismnarsupper} \label{theo:M_miss_mnar_upper}
\text{(Regret bound for MNAR with missing mediator and outcome)}
For every \( \alpha > 1 \), under Assumption~\ref{ass:missing_M_mnar_K_assumption}, the following regret bound holds for sufficiently large \( T \):
\[
    \mathbb{E}[R_T] = O\left( \sqrt{\alpha T \log(T) \sum\limits_{a} S_a^2} \right).
\]
where \( S_a = \max 
    \left(
    \left( 32 \frac{C_a}{\norm{\Theta}_a} \frac{K}{\lambda_a} + \frac{2}{\lambda_a} \right), 
    \sqrt{\sum\limits_{m \in \mathbb{M}} \frac{32 p_{m, a}}{\lambda_{m, a} \gamma_{m, a}}}
    \right) 
    \).
\end{restatable}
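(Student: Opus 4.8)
\section*{Proof proposal for \Cref{theo:M_miss_mnar_upper}}

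The plan is to bound the per-arm estimation error $|\hat\mu_a-\mu_a|$ by a confidence width $\epsilon_a$ that matches the radius appearing in the $\text{UCB}(a)$ formula, and then run the standard optimism argument used in the proofs of \Cref{theo:mar_upper_2} and \Cref{theo:mnar_upper}. Writing the identification as $\mu_a=\sum_{m\in\mathbb{M}}p_{m,a}\,\mu_{m,a}$ with $\mu_{m,a}=\mathbb{E}[Y\mid a,m,O^Y=1,O^M=1]$, and using the plug-in estimator $\hat\mu_a=\sum_m\hat p_{m,a}\hat\mu_{m,a}$, I would split
\[
|\hat\mu_a-\mu_a|\;\le\;\Big|\sum_{m}p_{m,a}\hat\mu_{m,a}-\mu_a\Big|\;+\;\sum_{m}\big|\hat p_{m,a}-p_{m,a}\big|\,\hat\mu_{m,a},
\]
so that the first term is a \emph{conditional-mean} error, handled exactly as in the MAR analysis, and the second is a \emph{weight} error that must now be controlled through the MNAR odds-ratio identification of $p_{m,a}$.

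For the conditional-mean term I would reuse the argument of \Cref{theo:mar_upper_2}, the only change being the effective sample count: since a usable observation requires both $O^M=1$ and $O^Y=1$, a concentration argument on the indicator sums (with $T_a\ge\log(T)^2$ for $T$ large) gives, on the good events, the lower bound $T_{m,a,o_Y}\ge\tfrac14\,p_{m,a}\gamma_{m,a}\lambda_{m,a}T_a$. Substituting this into the sub-Gaussian concentration of $\sum_m p_{m,a}\hat\mu_{m,a}$ bounds the first term by $\sqrt{\tfrac{2\alpha\log T}{T_a}\sum_{m}\tfrac{p_{m,a}}{\gamma_{m,a}\lambda_{m,a}}}$, which is the second argument of the $\max$ defining $S_a$.

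The weight term is the crux, and is where the machinery of \Cref{theo:mnar_upper} enters. Here $p_{m,a}=(1+\mathrm{OR}_{m,a})\,\mathbb{P}(M=m,O^M=1\mid a)$, where the odds-ratio vector $x_a=[\mathrm{OR}_{m,a}]_m$ is determined by the linear equation whose coefficient matrix is $\Theta_a=[\mathbb{P}(m,y,O^M=1,O^Y=1\mid a)]_{K\times L}$ and whose right-hand side is $b_a=[\mathbb{P}(Y=y,O^M=0,O^Y=1\mid a)]_y$. I would apply Hoeffding entrywise to obtain $\lVert\hat b_a-b_a\rVert_\infty\le\epsilon$ and $\lVert\hat\Theta_a-\Theta_a\rVert_\infty\le L\epsilon$ with $\epsilon=\sqrt{\alpha\log T/(2T_a)}$, then invoke Theorem~2.2 of \cite{higham1994survey} to turn these into a first-order perturbation bound on $\lVert\hat x_a-x_a\rVert_\infty$. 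Using $\kappa(\Theta_a)\le C_a$ (Assumption~\ref{ass:missing_M_mnar_K_assumption}), the fact that $\lVert x_a\rVert_\infty$ is governed by $1/\lambda_a$ since $1+\mathrm{OR}_{m,a}=1/\lambda_{m,a}$, and the concentration of $\mathbb{P}(M=m,O^M=1\mid a)$, I would convert this into $|\hat p_{m,a}-p_{m,a}|$ and sum over the $K$ mediator values, producing the term $\epsilon\big(8\tfrac{C_a}{\lVert\hat\Theta_a\rVert_\infty}\tfrac{K}{\hat\lambda_a}+\tfrac{1}{\hat\lambda_a}\big)$, i.e.\ the first argument of the $\max$ in $S_a$.

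Combining the two bounds yields a confidence width $\epsilon_a=O\big(\sqrt{\alpha\log T/T_a}\,S_a\big)$ matching the UCB radius after replacing $\lVert\hat\Theta_a\rVert_\infty$ and $\hat\lambda_a$ by their population values up to constants, and after using the two-sided control $\tfrac12 p_{m,a}\le\hat p_{m,a}\le 2p_{m,a}$ (valid on the good events for large $T$) to pass between $\hat p_{m,a}^2$ and $p_{m,a}^2$. The optimism argument then gives $\epsilon_a\ge\Delta_a/2$ on the intersection of good events, hence $T_a\le O(\alpha\log T\,S_a^2/\Delta_a^2)$; accounting for the summable failure probability $\sum_t\mathrm{poly}(K,L)\,t^{-\alpha}$ and the $n\log(T)^2$ warm-up rounds, and splitting the arms at the threshold $x=\sqrt{\alpha\log T\sum_a S_a^2/T}$, delivers $\mathbb{E}[R_T]=O\big(\sqrt{\alpha T\log T\sum_a S_a^2}\big)$. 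The main obstacle I anticipate is the weight term: propagating the matrix-inversion perturbation into $p_{m,a}$ while jointly tracking the dependence on $C_a$, $K$, and the mediator-observation probability $\lambda_a$, and verifying that the ``sufficiently large $T$'' and $T_a\ge\log(T)^2$ conditions simultaneously ensure the Higham applicability condition $\epsilon\,\lVert\Theta_a^{\dagger}\rVert_\infty\,\lVert E\rVert_\infty<1$ and the multiplicative bounds that let the hatted normalizers be swapped for population ones.
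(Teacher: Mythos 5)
Your proposal is correct and follows essentially the same route as the paper's proof: the same split of $|\hat\mu_a-\mu_a|$ into a weight-error term (controlled via the odds-ratio linear system $\Theta_a x_a = b_a$, entrywise Hoeffding bounds, and Higham's perturbation theorem under Assumption~\ref{ass:missing_M_mnar_K_assumption}) and a conditional-mean term (controlled as in \Cref{theo:mar_upper_2} with the effective count $T_{a,m,o_Y}\gtrsim p_{m,a}\gamma_{m,a}\lambda_{m,a}T_a$), followed by the standard optimism argument. The two bounds you derive match the two arguments of the $\max$ defining $S_a$, exactly as in the paper.
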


\begin{proof}
    The proof closely follows the reasoning from Theorem~\ref{theo:mnar_upper}. Let:
    \begin{align*}
        &b_a = [\mathbb{P}(y, O^M = 0, O^Y = 1 \mid a)]_{L \times 1}, \\
        &\Theta_a = [\mathbb{P}(m, y, O^M = 1, O^Y = 1 \mid a)]_{K \times L}, \\
        &x_a = \left[\frac{\mathbb{P}(O^M = 0 \mid m, a)}{\mathbb{P}(O^M = 1 \mid m, a)}\right]_{K \times 1}.
    \end{align*}
    We know that \( \Theta_a x_a = b_a \). Using the same approach as in Theorem~\ref{theo:mnar_upper}, we derive the following inequality for \( \epsilon = \sqrt{\frac{\alpha \log(T)}{2T_a}} \):
    \[
        \norm{x - \hat{x}} \leq 2 \epsilon \norm{\Theta_a^{-1}} \left(1 + K \frac{1 - \lambda_a}{\lambda_a}\right) \leq 2 \epsilon \norm{\Theta_a^{-1}} \frac{K}{\lambda_a} \leq 4 \epsilon \frac{C_a}{\norm{\hat{\Theta}_a}} \frac{K}{\lambda_a} \leq 8 \epsilon \frac{C_a}{\norm{\hat{\Theta}_a}} \frac{K}{\hat{\lambda}_a}.
    \]
    Additionally, since \( x = \left[\frac{1 - \lambda_{m, a}}{\lambda_{m, a}}\right]_{K \times 1} \), we have:
    \[
        \left|\frac{1}{\lambda_{m, a}} - \frac{1}{\hat{\lambda}_{m, a}}\right| \leq 8 \epsilon \frac{C_a}{\norm{\hat{\Theta}_a}} \frac{K}{\hat{\lambda}_a}.
    \]
    
    Furthermore, for \( p_{m, 1 \mid a} = \mathbb{P}(M = m, O^M = 1 \mid a) \), we have:
    \[
        \left| p_{m, 1 \mid a} - \hat{p}_{m, 1 \mid a} \right| \leq \epsilon.
    \]
    Using a similar approach to the proof of Theorem~\ref{theo:mnar_upper}, we obtain:
    \[
        \left|\frac{p_{m, 1 \mid a}}{\lambda_{m, a}} - \frac{\hat{p}_{m, 1 \mid a}}{\hat{\lambda}_{m, a}}\right| \leq p_{m, 1 \mid a} \left( 8 \epsilon \frac{C_a}{\norm{\hat{\Theta}_a}} \frac{K}{\hat{\lambda}_a} \right) + \frac{1}{\hat{\lambda}_a} \epsilon = \epsilon \left( 8 p_{m, 1 \mid a} \frac{C_a}{\norm{\hat{\Theta}_a}} \frac{K}{\hat{\lambda}_a} + \frac{1}{\hat{\lambda}_a} \right).
    \]
    
    Therefore, we can conclude:
    \[
    \left| \hat{p}_{m, a} - p_{m, a} \right| \leq \epsilon \left( 8 p_{m, 1 \mid a} \frac{C_a}{\norm{\hat{\Theta}_a}} \frac{K}{\hat{\lambda}_a} + \frac{1}{\hat{\lambda}_a} \right).
    \]
    
    Additionally, we have the following bound for \( T_{a, o_M, o_Y} \) (the number of times both \( M \) and \( Y \) are observed):
    \[
    \left| \hat{\mu}_{m, a} - \mu_{m, a} \right| \leq \sqrt{\frac{\alpha \log(T)}{T_{a, o_M, o_Y}}}.
    \]
    Using \( \mathbb{P}(O^Y = 1, O^M = 1 \mid a) = \sum\limits_{m \in \mathbb{M}} p_{m, a} \gamma_{m, a} \lambda_{m, a} \), we have:
    \[
    \left| \frac{T_{a, o_M, o_Y}}{T_a} - \sum\limits_{m \in \mathbb{M}} p_{m, a} \gamma_{m, a} \lambda_{m, a} \right| \leq \sqrt{\frac{\alpha \log(T)}{T_a}},
    \]
    which gives \( T_{a, o_M, o_Y} \geq \frac{T_a}{2} \left( \sum\limits_{m \in \mathbb{M}} p_{m, a} \gamma_{m, a} \lambda_{m, a} \right) \). Thus, for sufficiently large \( T \) and \( T_a \geq \log(T)^2 \), we have \( \hat{\mu}_{m, a} \leq 2 \mu_{m, a} \leq 2 \).
    
    Therefore:
    \begin{align}
    \left| \sum\limits_{m \in \mathbb{M}} \hat{p}_{m, a} \hat{\mu}_{m, a} - \mu_a \right| 
    &= \left| \sum\limits_{m \in \mathbb{M}} (\hat{p}_{m, a} - p_{m, a}) \hat{\mu}_{m, a} + \sum\limits_{m \in \mathbb{M}} p_{m, a} \hat{\mu}_{m, a} - \mu_a \right| \\
    &\leq \sum\limits_{m \in \mathbb{M}} \left| \hat{p}_{m, a} - p_{m, a} \right| \hat{\mu}_{m, a} + \left| \sum\limits_{m \in \mathbb{M}} p_{m, a} \hat{\mu}_{m, a} - \mu_a \right| \\
    &\leq 2 \sum\limits_{m \in \mathbb{M}} \epsilon \left( 8 p_{m, 1 \mid a} \frac{C_a}{\norm{\hat{\Theta}_a}} \frac{K}{\hat{\lambda}_a} + \frac{1}{\hat{\lambda}_a} \right) + \left| \sum\limits_{m \in \mathbb{M}} p_{m, a} \hat{\mu}_{m, a} - \mu_a \right| \quad \text{(using \( \sum\limits_{m \in \mathbb{M}} p_{m, 1 \mid a} \leq 1 \))} \\
    &\leq 2 \epsilon \left( 8 \frac{C_a}{\norm{\hat{\Theta}_a}} \frac{K}{\hat{\lambda}_a} + \frac{1}{\hat{\lambda}_a} \right) + \left| \sum\limits_{m \in \mathbb{M}} p_{m, a} \hat{\mu}_{m, a} - \mu_a \right|.
    \end{align}
    
    Using the same technique as before, we have the following inequality for \( T_{a, m, o_Y} \) (the number of times \( M = m \) and the reward are observed when pulling arm \( a \)):
    \[
    \left| \sum\limits_{m \in \mathbb{M}} p_{m, a} \hat{\mu}_{m, a} - \mu_a \right| \leq \sqrt{\frac{\alpha \log(T)}{2} \sum\limits_{m \in \mathbb{M}} \frac{4 \hat{p}_{m, a}^2}{T_{a, m, o_Y}}}.
    \]
    
    Therefore:
    \begin{align*}
    \left| \sum\limits_{m \in \mathbb{M}} \hat{p}_{m, a} \hat{\mu}_{m, a} - \mu_a \right| 
    &\leq 2 \epsilon \left( 8 \frac{C_a}{\norm{\hat{\Theta}_a}} \frac{K}{\hat{\lambda}_a} + \frac{1}{\hat{\lambda}_a} \right) + \sqrt{\frac{\alpha \log(T)}{2} \sum\limits_{m \in \mathbb{M}} \frac{4 \hat{p}_{m, a}^2}{T_{a, m, o_Y}}}
    \end{align*}
    which proves our UCB upper bound.
    
    Similarly, we know that \( T_{a, m, o_Y} \geq \frac{T_a}{2} p_{m, a} \lambda_{m, a} \gamma_{m, a} \), which gives:
    \begin{align*}
    |\mu_a - \hat{\mu_a}| 
    &\leq 2 \epsilon \left( 8 \frac{C_a}{\norm{\hat{\Theta}_a}} \frac{K}{\hat{\lambda}_a} + \frac{1}{\hat{\lambda}_a} \right) + \sqrt{\frac{\alpha \log(T)}{2} \sum\limits_{m \in \mathbb{M}} \frac{32 p_{m, a}^2}{T_a p_{m, a} \lambda_{m, a} \gamma_{m, a}}} \\
    &= 2 \epsilon \left( 8 \frac{C_a}{\norm{\hat{\Theta}_a}} \frac{K}{\hat{\lambda}_a} + \frac{1}{\hat{\lambda}_a} \right) + \sqrt{\sum\limits_{m \in \mathbb{M}} \frac{32 p_{m, a}}{\lambda_{m, a} \gamma_{m, a}}} \\
    &\leq 2\epsilon \max 
    \left(
    \left( 32 \frac{C_a}{\norm{\Theta}_a} \frac{K}{\lambda_a} + \frac{2}{\lambda_a} \right), 
    \sqrt{\sum\limits_{m \in \mathbb{M}} \frac{32 p_{m, a}}{\lambda_{m, a} \gamma_{m, a}}}
    \right).
    \end{align*}
    
    Following the same reasoning as in previous proofs, we conclude:
    \[
    \mathbb{E}[R_T] = O\left( \sqrt{\alpha T \log(T) \sum\limits_{a} S_a^2} \right).
    \]
\end{proof}

\clearpage
\section{Additional Empirical Evaluation}
\label{appendix:additional_simulations}

\begin{figure*}[h]
    \centering
    \begin{subfigure}[b]{0.38\textwidth}
        \includegraphics[width=\textwidth]{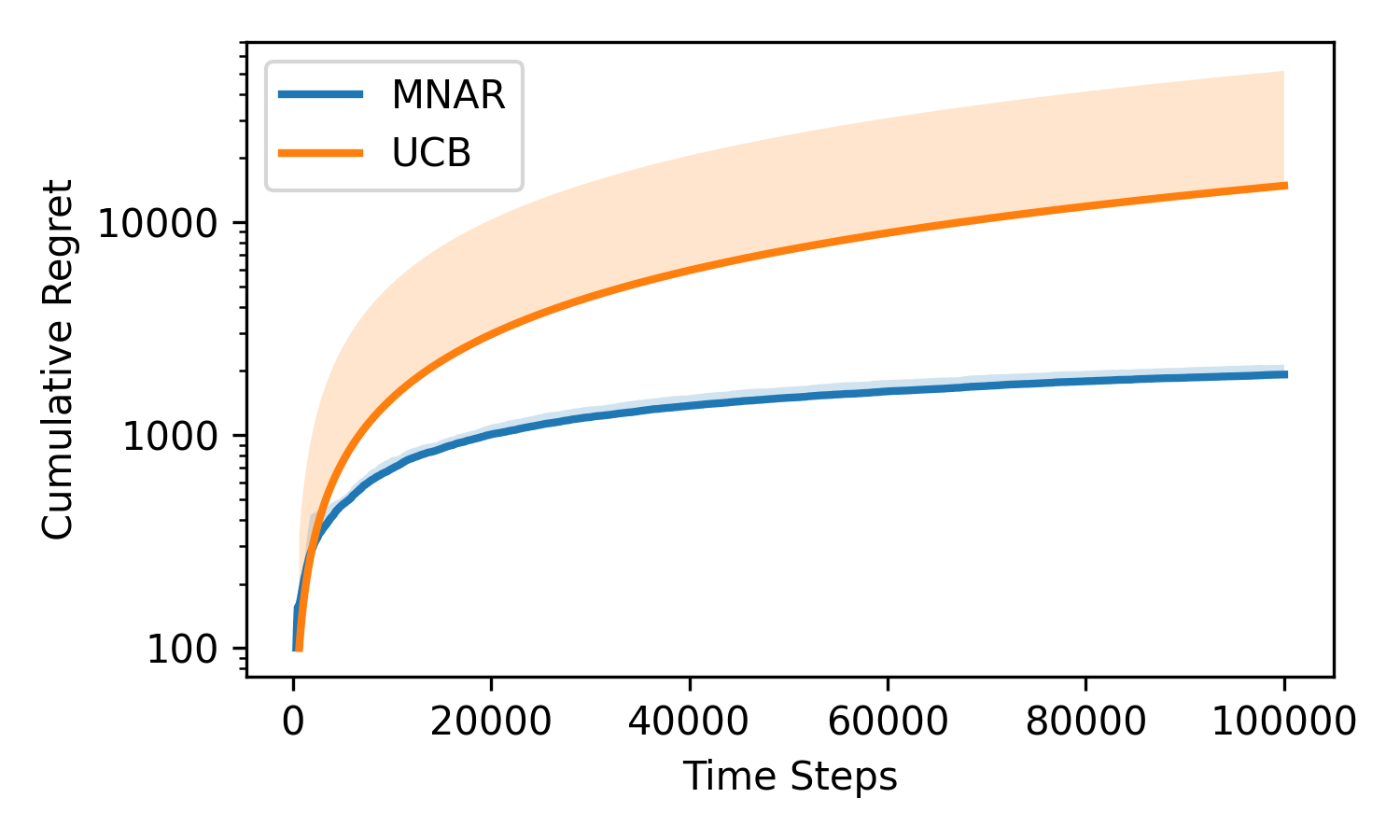}
        \caption{MNAR and UCB algorithms in the MNAR bandit environment.}
        \label{fig:MNAR-UCB}
    \end{subfigure}
    % \hspace{1cm}
    \begin{subfigure}[b]{0.38\textwidth}
        \includegraphics[width=\textwidth]{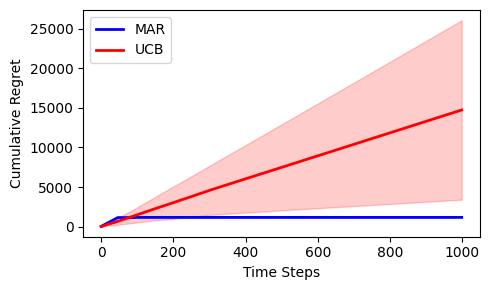}
        \caption{MAR and UCB algorithms in a real-world MAR bandit environment.}
        \label{fig:RW-1}
    \end{subfigure}
    % \hfill
    % % \vspace{0.3cm} % Adds vertical space between rows
    % \begin{subfigure}[b]{0.31\textwidth}
    %     \includegraphics[width=\textwidth]{figures/MAR-1.png}
    %     \caption{MAR algorithm with different p initializations on MAR environment.}
    %     % \label{fig:MAR-2}
    % \end{subfigure}
    % % \hfill
    % \begin{subfigure}[b]{0.31\textwidth}
    %     \includegraphics[width=\textwidth]{figures/MAR-2.png}
    %     \caption{MAR and UCB algorithms in the MAR bandit environment.}
    %     % \label{fig:MAR-3}
    % \end{subfigure}
    % \hspace{1cm}
    % \begin{subfigure}[b]{0.31\textwidth}
    %     \includegraphics[width=\textwidth]{figures/MNAR-1.png}
    %     \caption{Performance of the MNAR algorithm in the described environment.}
    %     % \label{fig:MNAR-1}
    % \end{subfigure}
    \hfill
    \caption{Complementary evaluation results for our proposed algorithms.}
    % \label{fig:combined}
\end{figure*}

In Figure~\ref{fig:MNAR-UCB}, we compare the performance of the UCB and MNAR algorithms in the MNAR bandit environment. The results clearly demonstrate that the cumulative regret of the UCB algorithm is consistently higher than that of the MNAR algorithm. Additionally, the y-axis is displayed on a logarithmic scale, further highlighting the considerable difference in the performance of our algorithm compared to the UCB algorithm. The environment is generated as before, with a horizon of \( T = 100{,}000 \), and the experiment is repeated 10 times.

% \begin{refsection}
%     \nocite{kamath2015learning,higham1994survey,lattimore2020bandit}
%     \printbibliography[heading=bibintoc, title={References}]
% \end{refsection}

\subsection{Real-World Simulation}

The dataset used in this study is the Primary Biliary Cirrhosis (PBC) dataset from the Mayo Clinic \footnote{\url{https://www.openml.org/d/200}}, containing 418 observations and 19 variables. Collected over a 10-year span (1974–1984), it focuses on a randomized, placebo-controlled trial of D-penicillamine for treating PBC, and includes both trial participants and observational data from non-participants.

To simulate a real-world setting, we structured the data as follows: the \textbf{Z1} variable (1 for D-penicillamine, 2 for placebo) was treated as the \textit{arms of the bandit}, representing treatment groups. The \textbf{X} variable, denoting the time in days from registration to death, liver transplantation, or censoring, was used as the outcome. The \textbf{D} variable, indicating whether \textbf{X} measures time until death (1) or censoring (0), served as the \textit{mediator}.

The \textbf{D} mediator captures whether the time interval \textbf{X} is associated with death or censoring, offering key insights into the progression of the disease and the effect of treatment. This setup allows us to model the pathways from treatment to outcome, where \textbf{Z1} represents the action taken, \textbf{X} is the reward (days survived), and \textbf{D} explains the intermediate state between treatment and survival or death.

Applying the MAR algorithm to this MAR bandit environment yielded results consistent with those seen in synthetic data, as shown in Figure \ref{fig:RW-1}.

\end{document}